\providecommand{\algorithmname}{Algorithm}
\theoremstyle{plain}
\newtheorem{thm}{\protect\theoremname}
\theoremstyle{definition}
\newtheorem{defn}{\protect\definitionname}
\theoremstyle{definition}
\newtheorem{problem}{\protect\problemname}
\theoremstyle{plain}
\newtheorem{lem}{\protect\lemmaname}
\theoremstyle{definition}
\newtheorem{example}{\protect\examplename}
\theoremstyle{remark}
\newtheorem{rem}{\protect\remarkname}
\theoremstyle{plain}
\newcommand*{\Scale}[2][4]{\scalebox{#1}{$#2$}}%
\tikzset{>=latex}
\providecommand{\corollaryname}{Corollary}
\providecommand{\definitionname}{Definition}
\providecommand{\examplename}{Example}
\providecommand{\lemmaname}{Lemma}
\providecommand{\problemname}{Problem}
\providecommand{\remarkname}{Remark}
\providecommand{\theoremname}{Theorem}
\newcommand{\UntilOp}{\mathcal{U}}
\newcommand{\Eventually}{\diamondsuit}
\newcommand{\Always}{\square}
\newcommand{\Next}{\Circle}
\newcommand{\AP}{{AP}}
      \theoremstyle{plain}
      \newtheorem{assumption}{Assumption}
      \theoremstyle{plain}
      \newtheorem{proposition}{Proposition}
\def\BibTeX{{\rm B\kern-.05em{\sc i\kern-.025em b}\kern-.08em
		T\kern-.1667em\lower.7ex\hbox{E}\kern-.125emX}}
\begin{document}

\title{Safety-Critical Learning of Robot Control with Temporal Logic Specifications

\thanks{$^{1}$Department of Mechanical Engineering, Lehigh University, Bethlehem, PA, USA.  Email: \{mingyu-cai, cvasile\}@lehigh.edu}
}

\author{Mingyu Cai$^{1}$, Cristian-Ioan Vasile$^{1}$}



\maketitle

\begin{abstract}
Reinforcement learning (RL) is a promising approach.
However, success is limited to real-world applications, because ensuring safe exploration and facilitating adequate exploitation is a challenge for controlling robotic systems with unknown models and measurement uncertainties.
The learning problem becomes even more difficult for complex tasks over continuous state-action.
In this paper, we propose a learning-based robotic control framework consisting of several aspects:
(1) we leverage Linear Temporal Logic (LTL) to express complex tasks over infinite horizons that are translated to a novel automaton structure;
(2) we detail an innovative reward scheme for LTL satisfaction with a probabilistic guarantee. Then, by applying a reward shaping technique, we develop a modular policy-gradient architecture exploiting the benefits of the automaton structure to decompose overall tasks and enhance the performance of learned controllers;
(3) by incorporating Gaussian Processes (GPs) to estimate the uncertain dynamic systems, we synthesize a model-based safe exploration during the learning process using Exponential Control Barrier Functions (ECBFs) that generalize systems with high-order relative degrees;
(4) to further improve the efficiency of exploration, we utilize the properties of LTL automata and ECBFs to propose a safe guiding process.
Finally, we demonstrate the effectiveness of the framework via several robotic environments.
We show an ECBF-based modular deep RL algorithm that achieves near-perfect success rates and safety guarding with high probability confidence during training. 

\global\long\def\Inf{\operatorname{Inf}}%
\global\long\def\Acc{\operatorname{Acc}}%
\end{abstract}

\begin{IEEEkeywords}
Formal Methods in Robotics and Automation, Deep Reinforcement Learning, Control Barrier Function, Gaussian Process, Safety-critical Control
\end{IEEEkeywords}

\section{INTRODUCTION}

Reinforcement learning (RL) is a sequential decision-making
process and focuses on learning optimal policies for robots that maximize the long-term reward via sampling from unknown environments \cite{Sutton2018}.
Markov decision processes (MDP) are often employed to model the dynamics robots and interaction with environments.
Growing research has been devoted to studying RL-based motion planning over MDPs without any prior
knowledge of the complex robot dynamics and uncertainty models.
This approach has been successfully employed in robotics where it was extended to continuous state-action spaces via actor-critic methods~\cite{Schulman2015,Lillicrap2016,Schulman2017}.
However, the key feature of RL is its sole dependence on the exploration of the environment.
The challenge of interpreting the inner workings of many RL algorithms makes it intractable to encode the behaviors of the systems during training, especially while the learning parameters have not yet converged to a stable control policy.
Imposing safety critical conditions on robotic systems that avoid failure and protect them from physical harm are prime example of behaviors that need to be enforced.
Due to the safety-critical requirements of real-world robotic applications during the learning process, most modern RL algorithms have limited success on physical systems beyond simulated applications. 
In this work, we propose a safety-critical control framework for continuous-time systems with uncertain and nominal robotic dynamics, integrated with data-driven machine learning and formal methods. {\color{black}
As an example, in Fig.~\ref{fig:Demonstration},
we consider a satellite map of Mars' surface labeled with unsafe craters and areas of interest.
The ground robotic rover needs to complete a safety-critical and complex goal-oriented task defined in a formal language, and is subject to external uncertainties due to the inaccurate physical parameters and complex terrain.
}

\begin{figure}
	\centering{}\includegraphics[scale=0.47]{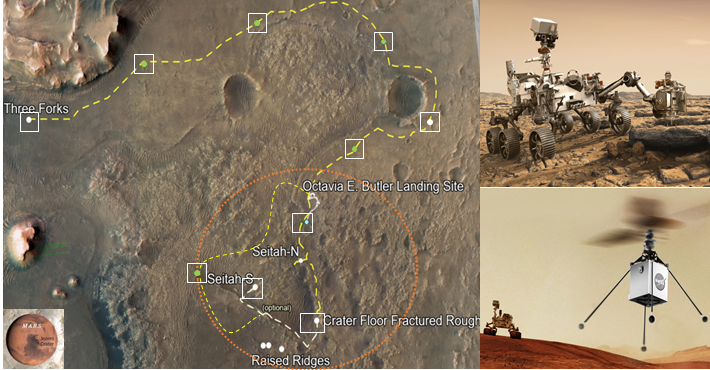}\caption{\label{fig:Demonstration} The Mars exploration requires a ground rover to complete the complex high-level specifications involving regions of interest while avoiding hazardous areas, which are defined by humans based on the satellite map created using aerial imagery from the helicopter. Due to complex environmental settings, there exist unknown uncertainties in the dynamic systems. The motivation is to learn desired control policies while online identifying unknown model uncertainties to guarantee safe exploration.}
\end{figure}

In the field of intersections between Artificial Intelligence and robotics, control barrier function (CBF) was introduced as a tool to ensure safety-critical constraints are met~\cite{Ames2016}.
{\color{black}Recently, several methods addressing the issue of model uncertainty in the safety-critical problem using a data-driven approach have been proposed. The work~\cite{choi2020reinforcement} pre-specified the control barrier function and control Lyapunov function to achieve stability and safety, while reinforcement learning is applied to learn the mismatch between the nominal model and true dynamics. This strategy has also been extended using GPs to provide guarantees of model identification~\cite{castaneda2021pointwise}. All these works achieve the objective of asymptotic stability replying on the control Lyapunov function that is challenging to design manually. In contrast, we leverage deep RL to synthesize optimal control via interactions, while guaranteeing safe exploration during learning. This problem of safe exploration is sometimes referred to as the safe RL.}

In policy optimization algorithms~\cite{Lillicrap2015, Schulman2015}), RL-agents are free but dangerous to explore any behavior during learning, as long as it leads to performance improvement.
Safe RL is an emerging research field focusing on finding optimal policies that maximize the expected return while ensuring safe exploration i.e., satisfying safety-critical constraints during the learning process~\cite{Garcia2015}.
As for unknown models with noisy measurements, Gaussian Processes (GPs)~\cite{Seeger2008} have been shown to be an efficient data-driven method for online non-parametric model estimation with probabilistic confidence.
The work~\cite{Fisac2018} utilizes GPs for reachability analysis and iterative predictions.  
By integrating GPs and CBF, existing works successfully certify learning-based policies or dynamic systems while ensuring safe exploration~\cite{Wang2018,Cheng2019, castaneda2021pointwise, dhiman2021control, emam2021safe}.
 All of them propose a safety layer to correct the neural network controllers that would cause the system to leave the safe region, where the GP is used for online model identification that is projected into the CBF constraints.
Richard et al.~\cite{Cheng2019} developed a guiding process and guaranteed the performance when applying trust region policy optimization (TRPO).
However, all these works mainly focus on conventional simple objectives, and not complex high-level robotic specifications.
In this framework, we tackle control problems under dynamic uncertainties, achieving temporal logic goals via a safe RL approach.  

\subsection{Contributions}

In this paper, we generalize the deep RL-based approach to LTL specifications over both infinite and finite horizons while maintaining safe exploration.
Our approach is realized in a policy gradient RL setting.
The LTL formulas are converted to a novel automaton, i.e., Embedded LDGBA (E-LDGBA) to synchronously record unvisited accepting sets, which enforce the LTL satisfaction. Based on that,
we develop a dense automaton reward scheme to guide the RL-agent search towards the desired behaviors, based on which we create a modular version of the Deep Deterministic Policy Gradient (DDPG)~\cite{Lillicrap2016} to achieve goals with ideal performance. 
Notably, the modular architecture admits a distributed mechanism
allowing to inspect and improve the performance (success rates) of overall satisfaction compositionally.

Then, a safe exploration module is presented and associated with an online model identification using Bayesian learning, which can be regarded as a model-based "shield" through GPs and ECBF that can handle systems with higher relative degrees and provide guarantees as a bounded confidence evaluation for the estimated model. Since safe exploration will limit the RL-agent's interactions, it reduces the efficiency of finding optimal policies. To address the issue, we design an exploration guiding procedure by integrating the safety violation of the LTL automaton structure and the perturbations of ECBF, to facilitate the efficiency of exploration of deep RL and promote the outputs of the deep RL derived from the set of safety policies.
Finally, we verify that the safety-critical layer and exploration guiding module do not impact the original optimality of learned policies for LTL satisfaction. We provide detailed comparisons with multiple baselines that show the increased performance and safety of our deep RL algorithm. 

Our previous work~\cite{Cai2021modular} focused only on learning a policy satisfying
LTL specifications.
However, it did not consider the safety of the robot during the learning process.
Moreover, it did not strive to optimize exploration efficiency and did not attempt to model learning of system uncertainty.
In this work, we address these issues and propose a new framework that efficiently learns control policies for robots while maintaining safety during training.
From a technical point of view, in this paper, we use GPs and ECBFs in a novel way to achieve safe learning.
In addition, we introduce E-LDGBAs and prove that they are language equivalent to their corresponding LDGBAs, i.e., they capture the same properties.
Lastly, we define safety properties and show how to use the barrier function to enforce them.

\subsection{Related Works}

Recently, there has been increased interest in synthesizing optimal controllers for robotic systems subject to several high-level temporal logics. Mature works \cite{Kloetzer2009,Guo2015,Lahijanian2016,sahin2019multirobot} abstract the discrete interactions between robots and environments subject to Linear Temporal Logic (LTL) for planning, decision-making, and optimization. An actor-critic method has been applied for the optimization of temporal logic motion and control in \cite{wang2015temporal}. However, all of them assume that dynamic systems are known to generate low-level navigation controllers.
Assuming the robotic dynamics to be unknown, learning-based approaches are proposed in \cite{Icarte2018, Camacho2019, aksaray2021probabilistically} by taking deterministic finite automatons (DFAs) as reward machines to guide the LTL satisfaction. Another work \cite{hasanbeig2019reinforcement, bozkurt2020control} develop a reward scheme using limit deterministic (generalized)
B\"uchi automaton (LDBA or LDGBA) \cite{Sickert2016} with probabilistic guarantees of satisfaction. To consider continuous space, previous work \cite{Cai2021modular} proposes a provably-correct framework leveraging deep neural networks, to effectively guide the agent toward
task satisfaction. However, none of the above consider the safety-critical aspects during the learning process.

Considering safe RL-based control synthesis subject to LTL, Li et al. \cite{Li2019} first proposes a safe RL using CBF guided by the robustness of Truncated LTL (TLTL) for satisfaction in dynamic environments.
However, TLTL can only express properties over finite horizons. In contrast, other related works \cite{vasile2020reactive,kantaros2020reactive,kantaros2020stylus,luo2021abstraction,srinivasan2020control} investigate LTL formulas over infinite horizons and continuous space, which generalize the case of finite horizons. One work \cite{schillinger2019hierarchical} has proposed a hierarchical structure for planning and learning over discrete space. But it's not scalable to the state-action space. The works \cite{vasile2020reactive,kantaros2020reactive,kantaros2020stylus,luo2021abstraction} have proposed sampling-based strategies that incrementally build trees to approximate product models. The results~\cite{srinivasan2020control,jagtap2020formal} apply the reachability-based CBFs to derive the robots to satisfy the acceptance sequences. Even though these results are abstraction-free, they assume the robot dynamics are known and can follow the high-level planned paths by designing appropriate low-level controllers. Differently, our work tackles nominal dynamical systems with disturbance
for which synthesis of optimal policies satisfying LTL properties is challenging.

\subsection{Organization}
{\color{black} The abbreviations of notations and definitions are summarized in Table~\ref{tab:Abbreviation}.}
The remainder of the paper is organized as follows. In Section~\ref{sec:preliminaries}, we introduce the modeling framework and formally define the problem. Section~\ref{sec:AUTOMATON} presents a novel design of the automaton structure with its benefits to reformulate the problem. In Section~\ref{sec:RL}, the automatic dense reward scheme is proposed and proved to guide the RL-agent towards satisfaction with maximum probability. A modular deep RL architecture is developed to find approximate optimal policies over continuous space. Based on Section~\ref{sec:RL}, Section~\ref{sec:safe} describes a safety-critical methodology for learning with efficient guiding and safe feasibility enabled, and Section~\ref{sec:Solution} summarizes the overall algorithm. In Section~\ref{sec:experiment}, the experimental results are presented. Section~\ref{sec:conclusion} concludes the paper.

\begin{table}
	\caption{\label{tab:Abbreviation} Abbreviation Summary of Notations.}
	\centering{}\resizebox{0.48\textwidth}{!}{
	\begin{tabular}{|c|c|c|c|c|c|c}
	    \hline
		\textbf{Notation Name} & \textbf{Abbreviation}  \\ 
		\hline
		\text{Linear Temporal Logic} & \text{LTL} \\
		\hline
		\text{Limit-Deterministic Generalized B\"uchi Automaton} & \text{LDGBA} \\
		\hline
		\text{Exponential Control Barrier Function} & \text{ECBF} \\
		\hline
		\text{Reinforcement Learning} & \text{RL} \\
		\hline
		\text{Deep Deterministic Policy Gradient} & \text{DDPG} \\
		\hline
		\text{Gaussian Process} & \text{GP} \\
		\hline
	\end{tabular}}
\end{table}

\section{PRELIMINARIES AND PROBLEM FORMULATION\label{sec:preliminaries}}

\subsection{Markov Decision Processes and Reinforcement Learning \label{subsec:Labeled-MDP}}

The evolution of a dynamic system $\mathcal{S}$ starting from any initial state $s_{0}\in S_{0}$ is given by
\begin{equation}
    \dot{s}=f\left(s\right)+g\left(s\right)a+d\left(s\right),
    \label{eq:dynamics}
\end{equation}
where $s\in S\subseteq \mathbb{R}^{n}$ is the state vector in the compact set $S$ and $a\in A\subseteq \mathbb{R}^{m}$ is the control input. The Lipschitz continuous functions $f:\mathbb{R}^{n}\rightarrow\mathbb{R}^{n}$ and $g:\mathbb{R}^{n}\rightarrow\mathbb{R}^{n\times m}$ define the state dynamics, and $d$ is a deterministic disturbance function
that is locally Lipschitz continuous.
In (\ref{eq:dynamics}), the functions $f$ and $g$ are known, while $d$ is unknown.


\begin{assumption}
\label{assump:kernel}
The unknown function $d$ has low complexity, as measured under the reproducing kernel Hilbert space (RKHS) norm \cite{Paulsen2016}. $d$ has a bounded RKHS norm with respect to known kernel $k$ , that is $\left\Vert d_{j}\right\Vert _{k}<\infty$ for all $j\in\left\{1,\ldots,n\right\}$, where $d_{j}$ represents the $j$-th component of the vector function $d$. 
\end{assumption}
For most kernels used in practice, the RKHS is dense in the space of continuous functions restricted to a compact domain $S$. Thus, Assumption~\ref{assump:kernel} indicates we can uniformly approximate the continuous function $d$ on a compact set $S$ \cite{Seeger2008}.

We capture interactions between a robot's motion governed by the dynamical system $\mathcal{S}$ and the environment as a continuous labeled Markov decision processes (cl-MDP) \cite{Thrun200}.

\begin{defn}
A cl-MDP is a tuple $\mathcal{M}=\left(S, S_{0}, A,p_{S},\AP,L \right)$,
where $S\subseteq\mathbb{R}^{n}$ is a continuous state space, $S_{0}$ is a set of initial states, $A\subseteq\mathbb{R}^{m}$
is a continuous action space, $\AP$ is a set of atomic propositions,
$L:S\shortrightarrow2^{\AP}$ is a labeling function, and $p_{S}$ represents the system dynamics.
The distribution $p_{S}:\mathfrak{B}\left(\mathbb{R}^{n}\right)\times A\times S\shortrightarrow\left[0,1\right]$
is a Borel-measurable conditional transition kernel such that $p_{S}\left(\left.\cdot\right|s,a\right)$
is a probability measure of the next state given current $s\in S$ and $a\in A$ over the Borel
space $\left(\mathbb{R}^{n},\mathfrak{B}\left(\mathbb{R}^{n}\right)\right)$,
where $\mathfrak{B}\left(\mathbb{R}^{n}\right)$ is the set of all
Borel sets on $\mathbb{R}^{n}$. The transition probability $p_{S}$
captures the motion uncertainties of the agent.
\end{defn}

The cl-MDP $\mathcal{M}$ evolves by taking an action $a_{i}$ at each
stage $i$. A control policy is a sequence of decision rules $\boldsymbol{\xi}=\xi_{0}\xi_{1}\ldots$ at each time step, which yields a path $\boldsymbol{s}=s_{0}a_{0}s_{1}a_{1}s_{2}a_{1}\ldots$ over $\mathcal{M}$ such that for each transition $s_{i}\overset{a_{i}}{\rightarrow}s_{i+1}$, $a_{i}$ is generated based on $\xi_{i}$,
where $s_i \overset{a_i}{\to} s_{i+1}$ denoted $p_S(s_{i+1}\mid s_i, a_i) > 0$.

\begin{defn}
A control policy $\boldsymbol{\xi}=\xi_{0}\xi_{1}\ldots$ is stationary, memoryless and deterministic
if $\xi_{t}=\xi, \forall t\geq 0$ and $\boldsymbol{\xi}:S\to {A}$.
A policy is finite-memory
if $\boldsymbol{\xi}$ is a finite state Markov chain.
\end{defn}

A key aspect of cl-MDPs is that optimal policies may be finite-memory
which leads to intractability \cite{Bacchus1996}.
Similarly, stochastic policies are not ideal when considering continuous action-space.
In this work, we use stationary, memoryless, and deterministic policies to efficiently
address synthesis problems.

In the following, we identify the dynamical system in~\eqref{eq:dynamics} with a cl-MDP,
where the state and action spaces are the same.
A policy of the cl-MDP is mapped to a control input for~\eqref{eq:dynamics}
as piecewise-constant functions.
Since $d$ is an unknown function, $p_{S}$ of $\mathcal{M}$ is unknown a priori.
Thus, we must learn desired policies from data.

\begin{assumption}
\label{assump:measurements}
We have access to observations of $s(t)$ and $L(s_{t})$
at every time step $t \in \mathbb{Z}_{\geq 0}$.
\end{assumption}

Given a cl-MDP $\mathcal{M}$, let $\varLambda:S\times A\times S\shortrightarrow\mathbb{R}$ denote
a reward function. Given a discounting function $\gamma:S\times A\times S\shortrightarrow\mathbb{R}$,
the expected discounted return under policy \textbf{$\boldsymbol{\xi}$} starting
from $s\in S$ is defined as

$$U^{\boldsymbol{\xi}}\left(s\right)=\mathbb{E}^{\boldsymbol{\xi}}\left[\stackrel[i=0]{\infty}{\sum}\gamma^{i}\left(s_{i},a_{i},s_{i+1}\right)\cdot\varLambda\left(s_{i},a_{i},s_{i+1}\right)\left|s_{0}=s\right.\right].$$

An optimal policy $\boldsymbol{\xi}^{*}$ maximizes the expected
return i.e., $$\boldsymbol{\xi}^{*}=\underset{\boldsymbol{\xi}}{\arg\max}\, U^{\boldsymbol{\xi}}(s).$$

The function $U^{\boldsymbol{\xi}}\left(s\right)$ is often referred
to as the value function under policy \textbf{$\boldsymbol{\xi}$}.
Without information on $p_{S}$, reinforcement learning (RL) \cite{Watkins1992}
can be employed as a powerful strategy to find the optimal policy.
In this work, we focus on policy gradient methods employing deep neural
networks to parameterize the policy model, due to their excellent performance on control problems over continuous state-space and action-space.
The details are in Section~\ref{sec:RL}.

\subsection{Barrier Functions}
\label{subsec:Barrier}

\begin{defn}
\label{def:safe}
A safe set $\mathcal{C}$ is defined by the super-level set of a continuous differential (barrier) function $h:\mathbb{R}^{n}\rightarrow\mathbb{R}$,
\begin{equation}
\begin{array}{c}
\mathcal{C}=\left\{ s\in\mathbb{R}^{n}:h\left(s\right)\geq0\right\} ,\\
\mathcal{\partial C}=\left\{ s\in\mathbb{R}^{n}:h\left(s\right)=0\right\} ,\\
Int\left(\mathcal{C}\right)=\left\{ s\in\mathbb{R}^{n}:h\left(s\right)>0\right\} 
\end{array}
\label{eq:safe_set}
\end{equation}
where $\mathcal{\partial C}$ is the boundary and $Int\left(\mathcal{C}\right)$ is the interior of the $\mathcal{C}$. The set $\mathcal{C}$ is forward invariant for system~\eqref{eq:dynamics} if $\forall s_{0}\in\mathcal{C}\cap S_{0}$, the condition $s_{t}\in\mathcal{C}$, $\forall t\geq0$, holds.
System~\eqref{eq:dynamics} is safe with respect to the set $\mathcal{C}$ if it is forward invariant. 
\end{defn}

This framework applies barrier functions in the form $h\geq0$ to define safe properties of a given cl-MDP. Consequently, it is assumed that any safety-critical constraint can be represented as~\eqref{eq:safe_set}.
We also assume RL-agents start from initial state $s_{0}\in\mathcal{C}$ such that $h\geq0$, which indicates safety-critical constraints are not violated at the beginning.

\subsection{Linear Temporal Logic}
\label{subsec:LTL}

Linear temporal logic (LTL) is a formal language to describe high-level specifications of a system. The ingredients of an LTL formula are a set of atomic propositions, and combinations of Boolean and temporal operators. The syntax of an LTL formula is defined inductively as~\cite{Baier2008}
\begin{equation*}
        \phi   :=  \text{true} \mid a \mid \phi_1 \land \phi_2 \mid \lnot \phi_1 \mid \Next\phi \mid \phi_1 \UntilOp \phi_2\:, 
\end{equation*}
where $a\in\AP$ is an atomic proposition, $\text{true}$, negation $\lnot$, conjunction $\land$ are propositional logic operators, and next $\next$, until $\UntilOp$ are  temporal operators. 
The semantics of an LTL formula are interpreted over words, which is an
infinite sequence $o=o_{0}o_{1}\ldots$ where $o_{i}\in2^{\AP}$ for
all $i\geq0$, and $2^{\AP}$ represents the power set of $\AP$.
Denote by $o\models\phi$ if the word $o$ satisfies the LTL formula
$\phi$.
For a infinite word $o$ starting from state indexed $0$, let $o(t), t\in\mathbb{N}$ denotes the value at step $t$, and $o[t{:}]$ denotes the word starting from step $t$.
The semantics of LTL satisfaction are defined as~\cite{Baier2008}:
\begin{equation*}
\arraycolsep=1.4pt
\begin{array}{lcl}
o \models \text{true}  \\
o \models \pi  & \Leftrightarrow & \pi\in  o(0)  \\
o \models \phi_{1}\land\phi_{2} &  \Leftrightarrow & o\models \phi_{1} \text{ and } o \models \phi_{2}  \\
o \models \lnot\phi  & \Leftrightarrow & o \not\models \phi  \\
o \models \Next\phi  & \Leftrightarrow & o[1{:}] \models\phi  \\
o \models \phi_1 \UntilOp \phi_2  & \Leftrightarrow & \exists t \text{ s.t. }o[t{:}]\models\phi_{2}, \forall t'\in [0,t),  o[t'{:}]\models\phi_{1}  \\
\end{array} 
\end{equation*}

Alongside the standard operators introduced above, other propositional logic operators such as $\text{false}$, disjunction $\lor$, implication $\rightarrow$, and temporal operators always $\Always$, eventually $\Eventually$ can be derived in LTL.

In this article, we restrict our attention to LTL formulas that exclude the \emph{next} temporal operator, which is not meaningful for continuous time execution~\cite{kloetzer2008fully, luo2021abstraction}.

\subsection{Problem Formulation \label{subsec:problem}}

Given a barrier function $h$ (or mutilple ones), the safe set is defined as $\mathcal{C}=\left\{ s\in\mathbb{R}^{n}:h(s)\geq0\right\}$. We can build a connection between safe sets and LTL formulas as


\begin{defn}
\label{def:safe_task}
Given a cl-MDP and a safe set $\mathcal{C}$, we denote by $\phi_{Safe}$
the safety proposition for the state $s$ of system~\eqref{eq:dynamics}.
Formally, $s \models \phi_{Safe}$ if and only if $s(0) \in \mathcal{C}$.
The safety-critical task is defined as $\Always\phi_{Safe}$ such that
$s(t)\in\mathcal{C},\forall t\in\mathbb{Z}_{\geq 0}$, where $s(t)$ is the state at $t$.
\end{defn}

{\color{black} Consider an RL-agent with dynamics $\mathcal{S}$ and the corresponding safe set $\mathcal{C}$
that performs a mission described by the LTL formula
$\phi=\Always\phi_{safe}\land\phi_{g}$, where $\phi_{safe}$ represents the safety-critical task introduced in definition~\ref{def:safe}, and $\phi_{g}$ denotes a general high-level task in the form of LTL formulas.}
The interaction of the RL-agent with the environment
is modeled by a cl-MDP $\mathcal{M}=\left(S, S_{0}, A,p_{S},\AP,L\right)$.
The induced path under a policy $\boldsymbol{\xi}=\xi_{0}\xi_{1}\ldots$ over $\mathcal{M}$ is $\boldsymbol{s}_{\infty}^{\boldsymbol{\xi}}=s_{0}\ldots s_{i}s_{i+1}\ldots$.
Let $L\left(\boldsymbol{s}_{\infty}^{\boldsymbol{\xi}}\right)=l_{0}l_{1}\ldots$
be the sequence of labels associated with $\boldsymbol{s}_{\infty}^{\boldsymbol{\xi}}$
such that $l_{i}= L(s_{i}), \forall i\in \left\{1,2,\ldots\right\}$. Denote the satisfaction relation of the induced 
trace for $\phi$ by $L(\boldsymbol{s}_{\infty}^{\boldsymbol{\xi}})\models\phi$. The probabilistic satisfaction under the policy $\xi$ from
an initial state $s_{0}\in S_{0}$ is denoted by
\begin{equation}
{\Pr{}_{M}^{\boldsymbol{\xi}}(\phi)=\Pr{}_{M}^{\boldsymbol{\xi}}(L(\boldsymbol{s}_{\infty}^{\boldsymbol{\xi}})\models\phi\,\big|\,\boldsymbol{s}_{\infty}^{\boldsymbol{\xi}}\in\boldsymbol{S}_{\infty}^{\boldsymbol{\boldsymbol{\xi}}}),}\label{eq:probabilistic-satisfaction}
\end{equation}
where $\boldsymbol{S}_{\infty}^{\boldsymbol{\xi}}$ is a set of admissible
paths from the initial state $s_{0}$ under the policy ${\boldsymbol{\xi}}$, and $\Pr{}_{M}^{\boldsymbol{\xi}}(\phi)$ can be computed from \cite{Baier2008}.

\begin{assumption}
\label{assu:task}
It is assumed that there exists at least one policy whose induced traces satisfy the task $\phi$ with non-zero probability. And there are no conflicts between $\phi_{g}$ and $\Always\phi_{safe}$.
\end{assumption} 

Assumption~\ref{assu:task} indicates the existence of policies satisfying $\phi$. We formulate the control learning problem as follows.

\begin{problem}
	\label{Prob1}
	Given  a cl-MDP $\mathcal{M}=(S, S_{0}, A, p_{S}, \AP, L)$ with unknown transition probabilities $p_{S}$, and an LTL task $\phi=\Always\phi_{safe}\land\phi_{g}$ with corresponding safe set $\mathcal{C}$,
\\	
	(i) learn an optimal policy $\boldsymbol{\xi}^{*}$
	that maximizes the satisfaction probability, i.e.,
	$\boldsymbol{\xi}^{*}=\underset{\boldsymbol{\xi}}{\arg\max}\Pr{}_{M}^{\boldsymbol{\xi}}(\phi)$, in the limit,
\\	
	(ii) and maintain the satisfaction of safety-critical task $\Always\phi_{safe}$ during the learning process and policy execution. 
\end{problem}

\section{AUTOMATON SYNTHESIS}
\label{sec:AUTOMATON}

\subsection{E-LDGBA\label{subsec:E-LDGBA}} 

The satisfaction of the LTL formulas can be captured by Limit Deterministic Generalized B\"uchi automata (LDGBA)~\cite{Sickert2016}.
Before defining LDGBA, we first introduce Generalized B\"uchi Automata
(GBA).
\begin{defn}
	\label{def:GBA} A GBA is a tuple $\mathcal{A}=(Q,\Sigma,\delta,q_{0},F)$,
	where $Q$ is a finite set of states; $\Sigma=2^{\AP}$ is a finite
	alphabet, $\delta\colon Q\times\Sigma\shortrightarrow2^{Q}$ is the
	transition function, $q_{0}\in Q$ is an initial state, and $F=\left\{ F_{1},F_{2},\ldots,F_{f}\right\} $
	is a set of accepting sets with $F_{i}\subseteq Q$, $\forall i\in\left\{ 1,\ldots, f\right\} $. 
\end{defn}
Denote by $\boldsymbol{q}=q_{0}q_{1}\ldots$ a run of a GBA, where
$q_{i}\in Q$, $i=0,1,\ldots$. The run $\boldsymbol{q}$ is accepted
by the GBA, if it satisfies the generalized B\"uchi acceptance condition,
i.e., $\inf(\boldsymbol{q})\cap F_{i}\neq\emptyset$, $\forall i\in\left\{ 1,\ldots f\right\} $,
where $\inf(\boldsymbol{q})$ denotes the set of states that repeat infinitely often in $\boldsymbol{q}$.
\begin{defn}
	\label{def:LDGBA} A GBA is an LDGBA if the transition function $\delta$
	is extended to $Q\times(\Sigma\cup\left\{ \epsilon\right\} )\shortrightarrow2^{Q}$,
	and the state set $Q$ is partitioned into a deterministic set $Q_{D}$
	and a non-deterministic set $Q_{N}$, i.e., $Q_{D}\cup Q_{N}=Q$ and
	$Q_{D}\cap Q_{N}=\emptyset$, where 
	\begin{itemize}
		\item the state transitions in $Q_{D}$ are total and restricted within
		it, i.e., $\bigl|\delta(q,\alpha)\bigr|=1$ and $\delta(q,\alpha)\subseteq Q_{D}$
		for every state $q\in Q_{D}$ and $\alpha\in\Sigma$, 
		\item the $\epsilon$-transition is not allowed in the deterministic set,
		i.e., for any $q\in Q_{D}$, $\delta(q,\epsilon)=\emptyset$,
		and 
		\item the accepting sets are only in the deterministic set, i.e., $F_{i}\subseteq Q_{D}$
		for every $F_{i}\in F$. 
	\end{itemize}
\end{defn}
To convert an LTL formula to an LDGBA,
readers are referred to Owl~\cite{Kretinsky2018}.
However, directly using an LDGBA and deterministic policies may fail to satisfy LTL specifications due to its multiple accepting sets, because there do not exist deterministic policies to select several actions for the same state to visit several accepting sets. 
{\color{black} Many advanced deep RL algorithms, e.g., DDPG~\cite{Lillicrap2016}, Distributed Distributional Deterministic Policy Gradients (D$4$PG)~\cite{barth2018distributed},  Twin Delayed DDPG (TD$3$)~\cite{fujimoto2018addressing}, are based on the assumption that there exists at least one deterministic policy to achieve the desired objective. In addition, if such an assumption holds, we are also allowed to apply variants of deep RL using stochastic policies.}
As a result, LDGBA can not be adopted with the DDPG algorithm. To overcome the drawback, we propose E-LDGBA and verify its expressivity as follows.



For an LDGBA $\mathcal{A}$, a tracking-frontier set $T \subseteq F$ is designed
to keep track of unvisited accepting sets.
We initialize $T$ as to include all accepting sets, i.e., $T=F$.
We abuse notation and denote by $F(q) = \{F_j\mid q \in F_j \land F_j \in F\}$
the sets of accepting states that contain state $q$. 
{\color{black}
Since the general LTL task over an infinite horizon can be represented as repetitive patterns, i.e., lasso form~\cite{Baier2008},
we denote the case when all accepting sets have been visited once such that the tracking-frontier set $T$ becomes an empty set, as one round.}
The tracking frontier $(T',\mathcal{B})=f_{V}(q,T)$ is updated as
\begin{equation}\small
f_{V}(q,T)=
\begin{cases}
(T\setminus F(q), \operatorname{false}) & \text{if } F(q) \neq \emptyset \text{ and } F_j \in T\\
(F \setminus F(q), \operatorname{true}) & \text{if } F(q) \neq \emptyset \text{ and } T=\emptyset\\
(T, \operatorname{false}) & \text{otherwise}
\end{cases}
\label{eq:Trk-fontier}
\end{equation}
{\color{black}where the Boolean variable $\mathcal{B}$ indicates the satisfaction of the acceptance condition for each round, i.e., $ T=\emptyset$ if $\mathcal{B}=\operatorname{true}$, otherwise $\mathcal{B}=\operatorname{false}$.}


\begin{defn}[Embedded LDGBA]
	\label{def:E-LDGBA}
Given an LDGBA $\mathcal{A}=\left(Q,\Sigma,\delta,q_{0},F\right)$,
its corresponding E-LDGBA is denoted by $\mathcal{\overline{A}}=(\overline{Q},\Sigma,\overline{\delta},\overline{q_{0}},\overline{F},f_{V})$,
where the tracking frontier is initially set as $T_{0}=F$ s.t. $\overline{q}_{0}=(q_{0},T_{0})$;
$\overline{Q}=Q\times2^{F}$ is the set of augmented states and $2^{F}$ denotes all subsets of $F$, i.e., $\overline{q}=(q,T)$;
the finite alphabet $\Sigma$ is the same as the LDGBA;
the transition function $\overline{\delta}\colon \overline{Q}\times(\Sigma\cup\left\{ \epsilon\right\} )\shortrightarrow2^{\overline{Q}}$
is defined such that $\overline{q}'=\overline{\delta}(\overline{q},\overline{\sigma})$ with $\overline{\sigma}\in(\Sigma\cup\left\{ \epsilon\right\})$, $\overline{q}=(q,T)$ and $\overline{q'}=(q',T)$, if it satisfies
(1) $q'=\delta(q,\overline{\sigma})$, and
(2) $T$ is synchronously updated as $T'=f_{V}\left(q',T\right)$ after transition $\overline{q'}=\overline{\delta}\left(\overline{q},\overline{\sigma}\right)$;
$\overline{F}=\left\{ \overline{F_{1}},\overline{F_{2}}\ldots \overline{F_{f}}\right\}$
with $\overline{F_{j}}=\left\{ (q, T)\in \overline{Q}\bigl|q\in F_{j}\land F_{j} \subseteq T\right\}$
for all $j=1,\ldots f$, is the sets of accepting states. 
\end{defn}
In Def.~\ref{def:E-LDGBA}, the state-space is embedded with the tracking-frontier set $T$ that can be practically represented via one-hot encoding based on the indices of accepting sets, and is synchronously updated after each transition.
Once an accepting set $F_{j}$ is visited, it is removed from $T$,
and if $T$ is empty, it resets to $F\setminus F_{j}$.
The accepting states of E-LDGBA ensure that progress is made towards each accepting set
of the LDGBA before considering the next ones.
The novel design ensures all accepting sets of original LDGBA are
visited in each round under deterministic policies.

For an LTL formula $\phi$, let $\mathcal{\overline{A}}_{\phi}$ and $\mathcal{A}_{\phi}$ be
the corresponding E-LDGBA and LDGBA, respectively.
Let $\mathcal{L}(\mathcal{A}_{\phi})\subseteq \Sigma^{\omega}$ and $\mathcal{L}(\mathcal{\overline{A}}_{\phi})\subseteq \Sigma^{\omega}$ be the accepted languages of $\mathcal{A}_{\phi}$ and $\mathcal{\overline{A}}_{\phi}$, respectively, over the same alphabet $\Sigma$.
Based on~\cite{Baier2008},
$\mathcal{L}(\mathcal{A}_{\phi})$ is the set of all infinite words that satisfy LTL formula $\phi$.

\begin{lem}
\label{lem:language}
For any LTL formula $\phi$, we can construct LDGBA $\mathcal{A}_{\phi}=(Q,\Sigma,\delta,q_{0},F)$ and E-LDGBA $\mathcal{\overline{A}}_{\phi}=(\overline{Q},\Sigma,\overline{\delta},\overline{q_{0}},\overline{F},f_{V})$. It holds that
$\mathcal{L}(\mathcal{\overline{A}}_{\phi})=\mathcal{L}(\mathcal{A}_{\phi})$.
\end{lem}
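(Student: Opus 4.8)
The plan is to set up a run-correspondence between $\mathcal{A}_\phi$ and $\overline{\mathcal{A}}_\phi$ over any fixed input word, and then to show this correspondence preserves the generalized B\"uchi acceptance condition in both directions; language equivalence follows immediately. First I would introduce the projection $\pi\colon\overline{Q}\to Q$, $\pi(q,T)=q$, and note that it extends to runs: if $\overline{\boldsymbol{q}}=\overline{q}_0\overline{q}_1\ldots$ is a run of $\overline{\mathcal{A}}_\phi$ on a word $w\in\Sigma^\omega$, then $\pi(\overline{\boldsymbol{q}})$ is a run of $\mathcal{A}_\phi$ on $w$, directly from clause (1) of Def.~\ref{def:E-LDGBA} and the fact that the $\epsilon$-transitions of $\overline{\mathcal{A}}_\phi$ lie exactly over those of $\mathcal{A}_\phi$. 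Conversely, given a run $\boldsymbol{q}=q_0q_1\ldots$ of $\mathcal{A}_\phi$ on $w$, I would define its canonical lift by $T_0=F$, $\overline{q}_i=(q_i,T_i)$, and $(T_{i+1},\mathcal{B}_{i+1})=f_V(q_{i+1},T_i)$; since $f_V$ is a function, this lift is the unique one with first component $\boldsymbol{q}$, and by clause (2) of Def.~\ref{def:E-LDGBA} it is a valid run of $\overline{\mathcal{A}}_\phi$ on $w$. Hence $\pi$ restricts to a bijection between the runs of $\overline{\mathcal{A}}_\phi$ on $w$ and the runs of $\mathcal{A}_\phi$ on $w$.

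Next I would establish the acceptance-preservation lemma: a run $\boldsymbol{q}$ of $\mathcal{A}_\phi$ is accepting if and only if its canonical lift $\overline{\boldsymbol{q}}$ is accepting. The easy direction ($\overline{\boldsymbol{q}}$ accepting $\Rightarrow$ $\boldsymbol{q}$ accepting) holds because $(q,T)\in\overline{F_j}$ implies $q\in F_j$, so $\inf(\overline{\boldsymbol{q}})\cap\overline{F_j}\neq\emptyset$ forces $\inf(\boldsymbol{q})\cap F_j\neq\emptyset$ for every $j\in\{1,\ldots,f\}$. For the converse, assume every $F_i$ is visited infinitely often along $\boldsymbol{q}$. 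I would argue that the tracking set empties — a \emph{round} completes — infinitely often: between consecutive resets $T$ is non-increasing and strictly decreases each time a not-yet-seen accepting set is visited (first case of~\eqref{eq:Trk-fontier}); since every $F_i$ recurs after any index, within finitely many steps after each reset all $f$ accepting sets are revisited, so $T$ reaches $\emptyset$ and the second case of~\eqref{eq:Trk-fontier} fires, starting a new round. Thus $\boldsymbol{q}$ passes through infinitely many rounds, and in each round every $F_j$ has a first visit at which $F_j$ is still in $T$; that visit lands in $\overline{F_j}$. Hence each $\overline{F_j}$ is visited infinitely often and $\overline{\boldsymbol{q}}$ is accepting.

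Finally I would combine these facts: $w\in\mathcal{L}(\mathcal{A}_\phi)$ iff some run of $\mathcal{A}_\phi$ on $w$ is accepting iff, by the bijection and the lemma, some run of $\overline{\mathcal{A}}_\phi$ on $w$ is accepting iff $w\in\mathcal{L}(\overline{\mathcal{A}}_\phi)$; therefore $\mathcal{L}(\overline{\mathcal{A}}_\phi)=\mathcal{L}(\mathcal{A}_\phi)$, and by the result of~\cite{Baier2008} both equal the set of words satisfying $\phi$. I expect the main obstacle to be the forward direction of the acceptance lemma — the bookkeeping that infinitely many visits to each $F_i$ force infinitely many completed rounds, and that each completed round contributes a visit to every $\overline{F_j}$. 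Particular care is needed at the reset step of $f_V$, where $T$ jumps from $\emptyset$ to $F\setminus F(q')$: one must check that the accepting set containing the state at which a round closes is accounted for correctly at the start of the next round (neither double-counted nor skipped), which is why the definition of $\overline{F_j}$ conditions on $F_j$ still being tracked. Everything else — well-definedness of the lift, the $\epsilon$-transition matching, and the projection being a run — is routine.
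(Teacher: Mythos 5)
Your proposal is correct and follows essentially the same route as the paper's proof in Appendix~A: one inclusion by lifting each LDGBA run to an E-LDGBA run via $f_{V}$ starting from $T_{0}=F$, the other by projecting $(q,T)\mapsto q$, with acceptance preserved in both directions. You actually supply more detail than the paper on the only non-trivial step (that infinitely many visits to every $F_{i}$ force infinitely many completed rounds, each contributing a visit to every $\overline{F_{j}}$); just make sure the second component of your lifted state is the \emph{pre}-update tracking set, as in Def.~\ref{def:E-LDGBA}, so that the first visit to $F_{j}$ in a round indeed lands in $\overline{F_{j}}$.
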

\begin{proof}
Proof can be found in Appendix~\ref{Appendix:E-LDGBA}.
\end{proof}

Lemma~\ref{lem:language} illustrates that both E-LDGBA and LDGBA accept the same language. Consequently, E-LDGBA can be used to ensure the satisfaction of LTL specifications.

\begin{defn}
\label{def:Non-accepting Sink Component} A non-accepting sink component
$\overline{Q}_{sink}\subseteq \overline{Q}$ of an E-LDGBA is a strongly connected
directed graph with no outgoing transitions s.t. the acceptance condition
can not be satisfied if starting from any state in $\overline{Q}_{sink}$. We denote the union of all non-accepting sink components as $\overline{Q}_{sinks}$. Thus, a trace reaching them is doomed to not satisfy the given LTL property.
\end{defn}


\begin{defn}
\label{def:Non-accepting unsafe sink} Given an LTL formula $\phi=\oblong\phi_{safe}\land\phi_{g}$, a set of non-accepting unsafe states
$\overline{Q}_{unsafe}\subseteq \overline{Q}_{sinks}$ of an E-LDGBA is a set of sink states s.t. $\overline{Q}_{unsafe}=\left\{(\overline{q}'\in \overline{Q})\bigl|\forall\overline{q}\in \overline{Q},\overline{q'}=\overline{\delta}(\overline{q},\lnot\phi_{safe})\right\}$.
\end{defn}
The automaton system enters into $\overline{Q}_{unsafe}$ whenever $\Always\phi_{safe}$ is violated, which means $\phi=\oblong\phi_{safe}\land\phi_{g}$ can not be satisfied anymore.
The set $\overline{Q}_{unsafe}$ can be used as an indicator of unsafety during learning process.

\subsection{Product MDP}
\label{subsec:EP-MDP}

To satisfy a complex LTL-defined task over infinite horizons, we can define a product structure.

\begin{defn}
\label{def:P-MDP}
Given a cl-MDP $\mathcal{M}$ and an E-LDGBA $\mathcal{\overline{A}}_{\phi}$,
the Product MDP (P-MDP) is defined as $\mathcal{P}=\mathcal{M}\times\mathcal{\overline{A}}_{\phi}=(X,U^{\mathcal{P}},p^{\mathcal{P}},x_{0},F^{\mathcal{P}},f_{V})$,
where $X=S\times Q\times2^{F}$ is the set of product states, i.e.,
$x=(s,\overline{q})=(s,q,T)\in X$; $U^{\mathcal{P}}=A\cup\left\{ \epsilon\right\} $
is the set of actions, where the $\epsilon$-actions are only allowed
for transitions from $Q_{N}$ to $Q_{D}$; $x_{0}=(s_{0},\overline{q}_{0})$
is the initial state; $F^{\mathcal{P}}=\left\{ F_{1}^{\mathcal{P}},F_{2}^{\mathcal{P}}\ldots F_{f}^{\mathcal{P}}\right\} $
where $F_{j}^{\mathcal{P}}=\left\{(s,\overline{q})\in X\mid\overline{q}\in \overline{F_{j}},j\in\left\{1,\ldots, f\right\}\right\}$; $p^{\mathcal{P}}$
is the transition kernel for any transition $\left(x,u^{\mathcal{P}},x'\right)$
with $x=(s,\overline{q})$ and $x'=(s',\overline{q}')$ such that
: (1) $p^{\mathcal{P}}(x,u^{\mathcal{P}},x')=p_{S}(\left.s'\right|s,a)$
if $s^{\prime}\backsim p_{S}(\left.\cdot\right|s,a)$,
$\overline{\delta}(\overline{q},L(s))=\overline{q}^{\prime}$ where
$u^{\mathcal{P}}=a\in A$ \; (2) $p^{\mathcal{P}}(x,u^{\mathcal{P}},x')=1$
if $\ensuremath{u^{\mathcal{P}}\in\left\{ \epsilon\right\} }$, $\overline{q}^{\prime}\in{\delta}(\overline{q},\epsilon)$
and $s'=s$; and (3) $p^{\mathcal{P}}(x,u^{\mathcal{P}},x')=0$
otherwise.
\end{defn}
The P-MDP $\mathcal{P}$ captures the identification of admissible agent motions over $\mathcal{M}$ that satisfy
the task ${\phi}$. Let $\boldsymbol{\pi}$ denote a policy over $\mathcal{P}$
and denote by $\boldsymbol{x}_{\infty}^{\boldsymbol{\pi}}=x_{0}\ldots x_{i}x_{i+1}\ldots$
an infinite path generated by $\boldsymbol{\pi}$.
Any memory-less policy $\boldsymbol{\pi}$ of $\mathcal{P}$ can be projected onto $\mathcal{M}$ to obtain a finite-memory policy $\boldsymbol{\xi}$~\cite{Baier2008}.

A path $\boldsymbol{x}_{\infty}^{\boldsymbol{\pi}}$
satisfies the acceptance condition if $\inf\left(\boldsymbol{x}_{\infty}^{\boldsymbol{\pi}}\right)\cap F_{i}^{\mathcal{P}}\neq\emptyset$
, $\forall i\in\left\{ 1,\ldots f\right\}$, which can be denoted as $\boldsymbol{x}_{\infty}^{\boldsymbol{\pi}}\models\Acc_{p}$.
An accepting path satisfies the LTL task $\phi$. We denote $\Pr^{\mathbf{\boldsymbol{\pi}}}\left[x_{0}\models\Acc_{p}\right]$
as the probability of satisfying the acceptance condition of $\mathcal{P}$
under policy $\boldsymbol{\pi}$ starting from initial state $x_{0}$,  
and denote $\Pr_{max}\left[x_{0}\models\Acc_{p}\right]=\underset{\boldsymbol{\pi}}{\max}\Pr_{M}^{\boldsymbol{\pi}}\left[x_{0}\models\Acc_{p}\right]$. 
In Problem~\ref{Prob1}, finding a policy $\boldsymbol{\xi}$ of $\mathcal{M}$ to satisfy $\phi$ is equivalent to searching for a policy $\boldsymbol{\pi}$ of $\mathcal{P}$ to satisfy the acceptance condition. The properties of P-MDP related to rigorous analysis of optimality can be found in Appendix~\ref{subsec:properties}.

\begin{rem}
Explicitly constructing the P-MDP is impossible over continuous space. In this work, we generate P-MDP on-the-fly which means the approach tracks the states of an underlying structure based on Def.~\ref{def:P-MDP}.
\end{rem}

To monitor the safety-critical requirement via automaton structure, 
we define the sink components of violating $\Always\phi_{safe}$ in an P-MDP as:
\begin{defn}
\label{def:unsafe_EP-states}
Given an P-MDP $\mathcal{P}=\mathcal{M}\times\mathcal{{A}}_{\phi}$
, the non-accepting unsafe sink component can be defined as: $X_{unsafe}\subseteq X$ s.t. $X_{unsafe}=\left\{
(s,\overline{q})\in X)\mid\overline{q}\in\overline{Q}_{unsafe} \right\}$.
\end{defn}

Based on Def.~\ref{def:Non-accepting unsafe sink}, if the system enters $X_{unsafe}$, it implies the violation of the safety constraint over $\mathcal{P}$. Thus, Problem~\ref{Prob1} can be reformulated as:
\begin{problem}
\label{Prob:2} Given a user-specified LTL task $\phi=\oblong\phi_{safe}\land\phi_{g}$ and a cl-MDP with unknown transition probability, the goal consists of two parts:

(i). Find a policy $\boldsymbol{\boldsymbol{\pi}}^{*}$ satisfying the
acceptance condition of $\mathcal{P}$ with maximum probability in the limit,
i.e., $\Pr^{\boldsymbol{\pi}^{*}}\left[\boldsymbol{x}\models\Acc_{p}\right]=\Pr_{max}\left[\boldsymbol{x}\models\Acc_{p}\right]$;

(ii). Avoid entering $X_{unsafe}$ during the learning process.
\end{problem}
Section~\ref{sec:RL} constructs a modular RL architecture to generate RL controllers for solving part (i) of Problem~\ref{Prob:2}. Section~\ref{sec:safe} presents an approach to fulfill requirement (ii) of Problem~\ref{Prob:2} by proposing a GP-based ECBF compensators for the RL controllers to ensure safety during training.

\section{LEARNING-BASED CONTROL}
\label{sec:RL}
First, we briefly introduce a reward-based scheme and a reward shaping procedure to improve the reward density in Section~\ref{subsec:RL-reward}. Then, Section~\ref{subsec:RL} shows how to apply the shaped reward to construct a modular deep RL architecture based on E-LDGBA to solve part (i) of Problem~\ref{Prob:2}.

\subsection{Dense Reward Scheme\label{subsec:RL-reward}}

Let $F_{U}^{\mathcal{P}}$ denote the union of accepting states, i.e.,
$F_{U}^{\mathcal{P}}=\left\{ x\in X \mid x\in F_{i}^{\mathcal{P}},\forall i\in\left\{ 1,\ldots f\right\}\right\} $. For each transition $\left(x,u^{\mathcal{P}},x'\right)$ in the P-MDP, the reward and discounting function only depend on current state $x$, i.e., $R\left(x,u^{\mathcal{P}},x'\right)=R\left(x\right)$ and $\gamma\left(x,u^{\mathcal{P}},x'\right)=\gamma\left(x\right)$.

We apply the reward function
\begin{equation}
R\left(x\right)=\left\{ \begin{array}{cc}
1-r_{F}, & \text{if }x\in F_{U}^{\mathcal{P}},\\
0, & \text{otherwise,}
\end{array}\right.\label{eq:reward_function}
\end{equation}
and the discounting function
\begin{equation}
\gamma\left(x\right)=\left\{ \begin{array}{cc}
r_{F}, & \text{if }x\in F_{U}^{\mathcal{P}},\\
\gamma_{F}, & \text{otherwise,}
\end{array}\right.\label{eq:discount_function}
\end{equation}
where $r_{F}\left(\gamma_{F}\right)$ is a function of
$\gamma_{F}$ satisfying $\underset{\gamma_{F}\shortrightarrow1^{-}}{\lim}r_{F}\left(\gamma_{F}\right)=1$
and $\underset{\gamma_{F}\shortrightarrow1^{-}}{\lim}\frac{1-\gamma_{F}}{1-r_{F}\left(\gamma_{F}\right)}=0$. Let $U^{\boldsymbol{\pi}}\left(x\right)$ denotes the expected return by applying the reward function~\eqref{eq:reward_function} and discount function ~\eqref{eq:discount_function}. We have

\begin{thm}
\label{thm:conclusion} \cite{Cai2021modular} Given an P-MDP $\mathcal{P}$, by selecting $\gamma_{F}\shortrightarrow1^{-}$,
the optimal policy $\boldsymbol{\pi}^{*}$ that maximizes the $U^{\boldsymbol{\pi}}\left(x\right)$ also maximizes
the probability of satisfying the acceptance condition, i.e., $\Pr^{\boldsymbol{\pi}^{*}}\left[x_{0}\models Acc_{\mathcal{P}}\right]=\Pr_{max}\left[x_{0}\models Acc_{\mathcal{P}}\right]$. 
\end{thm}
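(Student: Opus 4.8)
The plan is to connect the discounted return $U^{\boldsymbol{\pi}}(x)$ to the probability of satisfying the B\"uchi acceptance condition on $\mathcal{P}$, and then show that in the limit $\gamma_F \to 1^-$ maximizing one is equivalent to maximizing the other. First I would observe that because of the coupled choices of reward~\eqref{eq:reward_function} and discount~\eqref{eq:discount_function}, the only state-type that contributes reward is an accepting state in $F_U^{\mathcal{P}}$, and crucially the discount factor applied \emph{at} such a state is $r_F$ rather than $\gamma_F$. Expanding the expectation along a sample path, the return becomes a sum over the (random) times $\tau_1 < \tau_2 < \cdots$ at which the path hits $F_U^{\mathcal{P}}$, each term being $(1-r_F)$ multiplied by a product of $\gamma_F$'s (one per non-accepting step) and $r_F$'s (one per previously-visited accepting state). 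The key algebraic fact to extract is that if a path visits $F_U^{\mathcal{P}}$ infinitely often, the tail of this series, as $\gamma_F \to 1^-$, collapses to a geometric-type sum whose value tends to $1$, whereas if a path visits $F_U^{\mathcal{P}}$ only finitely often, the contribution is bounded by a finite number of terms each carrying a factor $(1-r_F)$, which vanishes as $\gamma_F \to 1^-$ by the assumption $\lim_{\gamma_F\to 1^-}\frac{1-\gamma_F}{1-r_F(\gamma_F)} = 0$ together with $\lim r_F = 1$.

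Next I would make precise the link between "visiting $F_U^{\mathcal{P}}$ infinitely often" and "$\boldsymbol{x}_\infty^{\boldsymbol{\pi}} \models \Acc_p$." This is exactly where the E-LDGBA construction (Def.~\ref{def:E-LDGBA}) and its tracking frontier $f_V$ earn their keep: under the synchronized tracking set $T$, a path visits the \emph{embedded} union $F_U^{\mathcal{P}}$ infinitely often if and only if it completes infinitely many "rounds," i.e.\ it repeatedly visits every one of the original accepting sets $F_1,\dots,F_f$, which is precisely the generalized B\"uchi condition $\inf(\boldsymbol{x}_\infty^{\boldsymbol{\pi}}) \cap F_i^{\mathcal{P}} \neq \emptyset$ for all $i$. (This is the content already used implicitly by the E-LDGBA design and Lemma~\ref{lem:language}.) So $U^{\boldsymbol{\pi}}(x_0)$, in the limit, equals $\Pr^{\boldsymbol{\pi}}[x_0 \models \Acc_p]$ up to vanishing error: more carefully, I would show there exist bounds $c(\gamma_F)\Pr^{\boldsymbol{\pi}}[x_0\models\Acc_p] \le U^{\boldsymbol{\pi}}(x_0) \le \Pr^{\boldsymbol{\pi}}[x_0\models\Acc_p] + \varepsilon(\gamma_F)$ with $c(\gamma_F)\to 1$ and $\varepsilon(\gamma_F)\to 0$, uniformly over policies (the uniformity comes from the bounds depending only on $\gamma_F, r_F$, not on $\boldsymbol{\pi}$).

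The final step is the optimization transfer: since the inequalities relating $U^{\boldsymbol{\pi}}$ and $\Pr^{\boldsymbol{\pi}}[x_0\models\Acc_p]$ hold uniformly in $\boldsymbol{\pi}$ and become tight as $\gamma_F\to 1^-$, any policy $\boldsymbol{\pi}^*$ maximizing $U^{\boldsymbol{\pi}}$ must, in the limit, also maximize $\Pr^{\boldsymbol{\pi}}[x_0\models\Acc_p]$; I would phrase this as a standard $\varepsilon$-argument comparing $U^{\boldsymbol{\pi}^*}$ against $U^{\boldsymbol{\pi}'}$ for a probability-optimal $\boldsymbol{\pi}'$ and letting $\gamma_F\to 1^-$. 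Since the statement is cited as \cite{Cai2021modular}, I would cite that reference for the detailed limit computation and merely sketch the path-decomposition argument here. The main obstacle is the bookkeeping in the path-series estimate: handling the interleaving of the $r_F$ and $\gamma_F$ factors across rounds, and showing the finite-visit contribution vanishes while the infinite-visit contribution tends to $1$, all \emph{uniformly} over policies and over the (continuous, hence possibly awkward) distribution of hitting times on $\mathcal{P}$. Establishing that uniformity — rather than just a pointwise-in-$\boldsymbol{\pi}$ limit — is the delicate part, and it is what ultimately justifies swapping the limit with the $\arg\max$.
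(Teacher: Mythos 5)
Your proposal is correct in outline, but it follows a genuinely different route from the one this paper (and the cited source \cite{Cai2021modular}) relies on. The paper treats Theorem~\ref{thm:conclusion} as imported and does not reprove it, but its in-house machinery for the analogous results (Lemma~\ref{lemma:accepting set} and the Appendix proof of Theorem~\ref{thm:safe_RL}) shows the intended argument: decompose the Markov chain $MC_{\mathcal{P}}^{\boldsymbol{\pi}}$ induced by a policy into a transient class and recurrent classes, invoke the dichotomy that each recurrent class meets either all accepting sets $F_i^{\mathcal{P}}$ or none, stack the expected returns into the block system~\eqref{eq: utility_function}, and argue by contradiction that a return-optimal policy whose recurrent classes miss the accepting sets would be strictly dominated (for $\gamma_F$ close enough to $1$) by any policy reaching an accepting recurrent class. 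Your proof instead expands the return pathwise over the hitting times of $F_U^{\mathcal{P}}$, shows the infinite-visit contribution tends to $1$ and the finite-visit contribution vanishes (via $1-r_F^{N}\to 0$ and dominated convergence), and transfers the $\arg\max$ through a uniform sandwich $c(\gamma_F)\Pr^{\boldsymbol{\pi}}[\cdot]\le U^{\boldsymbol{\pi}}\le\Pr^{\boldsymbol{\pi}}[\cdot]+\varepsilon(\gamma_F)$. Your route is more transparent about \emph{why} the condition $\lim_{\gamma_F\to1^-}\tfrac{1-\gamma_F}{1-r_F(\gamma_F)}=0$ is needed (it controls the $\gamma_F$-leakage between consecutive accepting visits against the $(1-r_F)$ reward scale) and directly exhibits the convergence $U^{\boldsymbol{\pi}}\to\Pr^{\boldsymbol{\pi}}[x_0\models\Acc_p]$; the paper's route avoids hitting-time bookkeeping by working at the level of recurrence classes and transition matrices, though that formulation is intrinsically finite-state and sits somewhat uneasily with the continuous $X$. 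Both arguments hinge on the same two points you correctly identify: the E-LDGBA frontier reset equating infinitely many visits to $F_U^{\mathcal{P}}$ with the generalized B\"uchi condition, and the uniformity over policies needed to exchange the limit $\gamma_F\to1^-$ with the maximization — the latter is asserted rather than fully established in both your sketch and the paper's treatment, so you are not missing anything the paper supplies.
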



In the above design, the reward signal becomes sparse for the state $x\notin F_{U}^{\mathcal{P}}$. To further increase the density of the reward, we apply
a potential function $\Phi:X\shortrightarrow\mathbb{R}$, and transform the reward as:
\begin{equation}
R'\left(x,u^{\mathcal{P}},x'\right)=R\left(x\right)+\gamma\left(x\right)\cdot\Phi\left(x'\right)-\Phi\left(x\right)\label{eq:Shaped_Reward}
\end{equation}

Given $\mathcal{{P}=M}\times\mathcal{\overline{A}}_{\phi}={\left(X,U^{\mathcal{P}},p^{\mathcal{P}},x_{0},F^{\mathcal{P}},f_{V}\right)}$ and the corresponding LDGBA $\mathcal{A}_{\phi}$,
let $F_{U}=\left\{ q\in Q \mid q\in F_{i},\forall i\in\left\{ 1,\ldots f\right\} \right\} $.
For the states $x=(s,q,T)$ of $\mathcal{P}$ whose automaton states $q$ belong to $Q\setminus\left(F_{U}\cup q_{0}\cup Q_{sink}\right)$,
where $Q_{sink}$ is the sink component of $\mathcal{A}_{\phi}$, it is desirable to assign positive rewards when the agent first visits them and assign large value of reward to the accepting states to enhance the convergence of neural networks, see Section~\ref{subsec:RL}.
Starting from the initial automaton state, exploring any automaton state in  $Q\setminus\left(F_{U}\cup q_{0}\cup Q_{sink}\right)$ can enhance the guiding of task satisfaction.
To this end, a reward tracking-frontier set $T_{\Phi}$ is designed to keep
track of unvisited automaton components $Q\setminus\left(q_{0}\cup Q_{sink}\right)$.
$T_{\Phi}$ is initialized as $T_{\Phi0}=Q\setminus\left(q_{0}\cup Q_{sink}\right)$.
The set $T_{\Phi0}$ is then updated after each transition $\left(\left(s,q,T\right),u^{\mathcal{P}},\left(s',q',T\right)\right)$
of $\mathcal{P}$
\begin{equation}
f_{\Phi}\left(q',T_{\Phi}\right)=\left\{ \begin{array}{cc}
T_{\Phi}\setminus F(q'), & \text{if }q\in T_{\Phi},\\
T_{\Phi0}\setminus F(q') & \text{if }\ensuremath{\mathcal{B}}=\operatorname{True},\\
T_{\Phi}, & \text{otherwise. }
\end{array}\right.\label{eq:automaton-fontier}
\end{equation}
where $F(q')$ is the same as~(\ref{eq:Trk-fontier}).
The set $T_{\Phi}$ will only be reset when $\mathcal{B}$ in $f_{V}$ becomes $\operatorname{True}$,
indicating that all accepting sets in the current round have been visited.
Then the potential function $\Phi\left(x\right)$
for $x=\left(s,q,T\right)$ is constructed as:
\begin{equation}
\Phi\left(x\right)=\left\{ \begin{array}{cc}
\eta_{\Phi}\cdot(1-r_{F}), & \text{if }q\in T_{\Phi},\\
0, & \text{otherwise}
\end{array}\right.\label{eq:potential_function}
\end{equation}
where $\eta_{\Phi}> 0$ is the shaping parameter. Intuitively, the value of potential function for
unvisited and visited states in $T_{\Phi0}$ is equal to $\eta_{\Phi}\cdot(1-r_{F})$
and $0$ respectively, which improves the efficiency of exploration. 

\begin{figure}[!t]\centering
	{{
	    \scalebox{.7}{
			\begin{tikzpicture}[shorten >=1pt,node distance=1.8cm,on grid,auto] 
			\node[state,initial] (q_0)   {$q_0$}; 
			\node[state] (q_1) [right=of q_0, label=below:$\textcolor{blue}{\Phi_1}$]  {$q_1$};
			
			\node[state, label=above:$\textcolor{blue}{\Phi_i}$] (q_2) [right=of q_1]  {$\ldots$};
			\node[state, label=below:$\textcolor{blue}{\Phi_8}$] (q_3) [right=of q_2]  {$q_8$};
			
			\node[state, label=below:$\textcolor{blue}{\Phi_9}$] (q_6) [right=of q_3]  {$q_9$};
			
			\node[state,accepting, label=right:$\textcolor{ForestGreen}{F_1}$] (q_5) [right=of q_6]  {$q_{10}$};	
			
			\node[state] (q_4) [below=of q_2]  {$\Scale[0.9]{q_{sink}}$};
			\path[->] 
			(q_0) edge [bend left=0] node {$T_1$} (q_1)
			(q_0) edge [loop above] node {$\lnot T_1$} (q_0)
			(q_0) edge [bend right=15 ] node [below] {\Scale[0.8]{$U$}} (q_4)
			(q_1) edge [bend left=0] node {$T_2$} (q_2)
			(q_1) edge [loop above] node {$\lnot T_2$} (q_1)
			(q_1) edge [bend right=15] node {\Scale[0.8]{$U$}} (q_4)
			(q_2) edge [bend left=0] node {$T_8$} (q_3)
			(q_3) edge [bend left=15] node {\Scale[0.8]{$U$}} (q_4)
			(q_2) edge [bend left=0] node {\Scale[0.8]{$U$}} (q_4)
		    (q_3) edge [bend left=0] node {$T_{9}$} (q_6)
		    (q_6) edge [bend left=0] node {$T_{10}$} (q_5)
		    (q_6) edge [bend left=15] node {\Scale[0.8]{$U$}} (q_4)
		    (q_5) edge [loop above] node {$T_{10}$} (q_5)
			(q_3) edge [loop above] node {$\lnot T_8$} (q_3)
			(q_6) edge [loop above] node {$\lnot T_9$} (q_6)
			(q_4) edge [loop right] node {$\operatorname{True}$} (q_4);
			\end{tikzpicture}
			}
			}}
		\caption{\label{fig:reward_shaping} LDGBA $\mathcal{A}_{\varphi_{P}}$ has states from $q_{0}$ to $q_{10}$ and a sink state expressing the LTL formula $\varphi_{P}=\lozenge\left(\mathtt{T1}\land\lozenge\mathtt{\left(\mathtt{T2}\land\lozenge\mathtt{\ldots\land\lozenge{T}10}\right)}\right)\land\lnot\oblong\mathtt{U}$, which requires visiting the regions labeled from $\mathtt{T1}$ to $\mathtt{T10}$ sequentially while always avoiding unsafe regions labeled $\mathtt{U}$.}
	\end{figure}

\begin{figure*}
	\centering{}\includegraphics[scale=0.50]{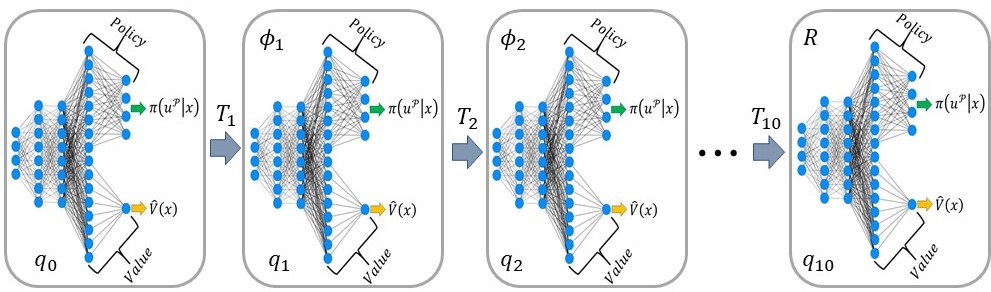}\caption{\label{fig:modular} Example of modular architecture based on reward shaping for the LTL formula $\varphi_{P}$. The distributed actor-critic neural networks are constructed based on the automata components and are learned synchronously online.}
\end{figure*}

\begin{example}
\label{example: reward_shaping}
As a running example of the reward shaping technique. Fig.~\ref{fig:reward_shaping} shows an LDGBA of the LTL formula $\varphi_{P}=\lozenge\left(\mathtt{T1}\land\lozenge\mathtt{\left(\mathtt{T2}\land\lozenge\mathtt{\ldots\land\lozenge{T}10}\right)}\right)\land\lnot\oblong\mathtt{U}$ with only one accepting set. Let's denote any
state of P-MDP with the same automaton component and an arbitrary MDP state as
$x=\left(\left\llbracket s\right\rrbracket ,q,T\right)$, where $\llbracket s\rrbracket$ denotes a subset of the MDP's state space, i.e., the MDP component can be different. For a trajectory $\boldsymbol{x}=\left(\left\llbracket s\right\rrbracket ,q_{0},T\right)u_{0}^{\mathcal{P}}\left(\left\llbracket s\right\rrbracket ,q_{1},T\right)u_{1}^{\mathcal{P}}\left(\left\llbracket s\right\rrbracket ,q_{2},T\right)u_{2}^{\mathcal{P}}\left(\left\llbracket s\right\rrbracket ,q_{3},T\right),$
the associated shaped reward for each transition is equal to $\eta_{\Phi}\cdot(1-r_{F})$, instead of zero.
\end{example}


Given a path $\boldsymbol{x}_{t}=x_{t}x_{t+1}\ldots$ starting from
$x_{t}$ associated with the corresponding action sequence $\boldsymbol{u}_{t}^{\mathcal{P}}=u_{t}^{\mathcal{P}}u_{t+1}^{\mathcal{P}}\ldots$, the return is reformulated by applying (\ref{eq:Shaped_Reward})
\begin{equation}
\mathcal{D}'\left(\boldsymbol{x}_{t}\right)\coloneqq\stackrel[i=0]{\infty}{\sum} R'\left(\boldsymbol{x}_{t+i},\boldsymbol{u}_{t+i}^{\mathcal{P}},\boldsymbol{x}_{t+i+1}\right)\cdotp\stackrel[j=0]{i-1}{\prod}\gamma'\left(\boldsymbol{x}_{t+j}\right)\label{eq:DisctRetrn_shaped}
\end{equation}

Also, the shaped expected return of any state
$x\in X$ under policy $\boldsymbol{\pi}$ is
\\
\begin{equation}
U'^{\boldsymbol{\pi}}\left(x\right)=\mathbb{E}^{\boldsymbol{\pi}}\left[\mathcal{D}'\left(\boldsymbol{x}_{t}\right)\mid x_{t}=x\right].\label{eq:ExpRetrn_shaped}
\end{equation}
\\

\begin{proposition}
    \label{prop:shaped_probability} Given a cl-MDP $\mathcal{M}$ and an E-LDGBA
	$\mathcal{\overline{A}}_{\phi}$, by selecting $\gamma_{F}\shortrightarrow1^{-}$,
	the optimal policy $\boldsymbol{\pi}^{*}$ that maximizes the expected
	return in (\ref{eq:ExpRetrn_shaped}) by applying the shaped reward (\ref{eq:Shaped_Reward}) in the corresponding P-MDP also
	maximizes the probability of satisfying $\phi$, i.e., $\Pr^{\boldsymbol{\pi}^{*}}\left[x_{0}\models Acc_{\mathcal{P}}\right]=\Pr_{max}\left[x_{0}\models Acc_{\mathcal{P}}\right]$.
\end{proposition}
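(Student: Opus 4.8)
The plan is to show that potential-based reward shaping is policy-invariant, so that Proposition~\ref{prop:shaped_probability} reduces directly to Theorem~\ref{thm:conclusion}. The key classical fact (Ng, Harada, Russell) is that adding a shaping term of the form $\gamma(x)\Phi(x') - \Phi(x)$ to the per-step reward does not change the set of optimal policies, because the shaping contributions telescope along any trajectory. First I would make precise that the discount used in the shaped return~\eqref{eq:DisctRetrn_shaped} is the same state-dependent $\gamma$ as in~\eqref{eq:discount_function}; this matters because the standard policy-invariance argument requires the coefficient multiplying $\Phi(x')$ in the shaping term to match the discount factor applied to future rewards at that transition. With that alignment, for a fixed trajectory $\boldsymbol{x}_t = x_t x_{t+1}\ldots$ with action sequence $\boldsymbol{u}^{\mathcal{P}}_t$, I would expand the shaped return
\begin{equation}
\mathcal{D}'(\boldsymbol{x}_t) = \sum_{i=0}^{\infty}\Bigl(R(x_{t+i}) + \gamma(x_{t+i})\Phi(x_{t+i+1}) - \Phi(x_{t+i})\Bigr)\prod_{j=0}^{i-1}\gamma(x_{t+j})
\label{eq:plan_expand}
\end{equation}
and regroup: the term $\gamma(x_{t+i})\Phi(x_{t+i+1})\prod_{j=0}^{i-1}\gamma(x_{t+j}) = \Phi(x_{t+i+1})\prod_{j=0}^{i}\gamma(x_{t+j})$ cancels against $-\Phi(x_{t+i+1})\prod_{j=0}^{i}\gamma(x_{t+j})$ coming from the $i{+}1$ index, leaving $\mathcal{D}'(\boldsymbol{x}_t) = \mathcal{D}(\boldsymbol{x}_t) - \Phi(x_t)$, where $\mathcal{D}$ is the unshaped return.

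Next I would take expectations over trajectories under a policy $\boldsymbol{\pi}$ starting from a fixed $x$: since $\Phi(x_t) = \Phi(x)$ is deterministic given $x_t = x$, we get $U'^{\boldsymbol{\pi}}(x) = U^{\boldsymbol{\pi}}(x) - \Phi(x)$. Because $\Phi(x)$ does not depend on the policy, $\arg\max_{\boldsymbol{\pi}} U'^{\boldsymbol{\pi}}(x) = \arg\max_{\boldsymbol{\pi}} U^{\boldsymbol{\pi}}(x)$ for every $x$, so the shaped and unshaped problems have the same optimal policies. I would need a brief remark that the infinite sums converge so the rearrangement is legitimate: $R$ is bounded, $\Phi$ is bounded (it takes finitely many values $0$ and $\eta_\Phi(1-r_F)$), and the discount product is eventually dominated by $\gamma_F^{k}$ with $\gamma_F < 1$ over the non-accepting states, with accepting-state visits contributing a factor $r_F < 1$; a tail-bound argument analogous to the one underlying Theorem~\ref{thm:conclusion} gives absolute convergence and justifies Fubini for the expectation. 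Then applying Theorem~\ref{thm:conclusion} to the unshaped return, with $\gamma_F \to 1^-$, yields $\Pr^{\boldsymbol{\pi}^*}[x_0 \models Acc_{\mathcal{P}}] = \Pr_{\max}[x_0 \models Acc_{\mathcal{P}}]$, which is exactly the claim.

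The main obstacle I anticipate is handling the $\epsilon$-transitions cleanly in the telescoping argument: on an $\epsilon$-transition the P-MDP has $s' = s$ and $p^{\mathcal{P}} = 1$, and one must check the discount/reward convention on such transitions does not break the cancellation — in particular that $\gamma$ and $\Phi$ are still evaluated consistently and that chains of $\epsilon$-moves (which are finite, since $\epsilon$ only goes from $Q_N$ to $Q_D$) do not accumulate spurious potential terms. A secondary subtlety is the state-dependent discounting itself: the textbook policy-invariance result is usually stated for a constant $\gamma$, so I would either cite the state-dependent generalization or, more self-containedly, just carry the $\prod_j \gamma(x_{t+j})$ factors explicitly through the telescoping as sketched in~\eqref{eq:plan_expand}, which is what makes the cancellation transparent without invoking any result beyond Theorem~\ref{thm:conclusion}. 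Everything else is routine bookkeeping.
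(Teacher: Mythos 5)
Your proposal is correct and follows essentially the same route as the paper: both reduce the claim to Theorem~\ref{thm:conclusion} by invoking the policy-invariance of potential-based reward shaping from \cite{Ng1999}. The paper simply cites that result, whereas you carry out the telescoping explicitly for the state-dependent discount $\gamma(\cdot)$ and note the convergence and $\epsilon$-transition subtleties --- a worthwhile refinement, since the cited result is usually stated for a constant discount --- but the underlying argument is the same.
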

\begin{proof}
 The work of \cite{Ng1999} has shown that optimizing the expected return \eqref{eq:ExpRetrn_shaped} using the shaped reward~\eqref{eq:Shaped_Reward} is equivalent to optimizing $U^{\boldsymbol{\pi}}\left(x\right)$ with the base reward scheme, and the generated optimal policies from these two forms are the same. Thus, the result follow from Theorem~\ref{thm:conclusion},.
\end{proof}

Next, we use the reward design in a deep RL algorithm to find optimal policies over continuous spaces.

\subsection{Modular Deep Reinforcement Learning}
\label{subsec:RL}

The objective of policy-based RL is to find the optimal policy that maximizes the long-term expected return~\eqref{eq:ExpRetrn_shaped}.
To address the MDPs with continuous state-action space, we implement the DDPG algorithm~\cite{Lillicrap2016}. 
The aim of DDPG is to approximate the
current deterministic policy via a parameterized function $\boldsymbol{\pi}\left(x\left|\theta^{u}\right.\right)$ called actor. The actor is a deep neural network whose weights are $\theta^{u}$. The critic function uses a deep neural network with parameters $\theta^{Q}$ to approximate the action-value function $Q\left(x,u^{\mathcal{P}}\left|\theta^{Q}\right.\right)$, which is updated by minimizing the loss function:
\begin{equation}
\begin{array}{c}
L\left(\theta^{Q}\right)=\mathbb{E}_{x\sim\rho^{\beta}}^{\boldsymbol{\pi}}\left[\left(y-Q\left(x,\boldsymbol{\pi}\left(x\left|\theta^{u}\right.\right)\left|\theta^{\boldsymbol{Q}}\right.\right)\right)^{2}\right],
\end{array}\label{eq:loss_function}
\end{equation}
where $\rho^{\beta}$ is the probability distribution of state visits over $X$ under any arbitrary stochastic policy $\beta$.
The actor is updated by applying the chain rule to the expected
return with respect to actor parameters $\theta^{u}$ as the following
policy gradient procedure
%
\begin{equation}
\begin{array}{c}
\nabla_{\theta^{u}}U^{u}\left(x\right)\thickapprox\mathbb{E}_{s\sim\rho^{\beta}}^{\boldsymbol{\pi}}\left[\nabla_{\theta^{u}}Q\left(x,\boldsymbol{\pi}\left(x\left|\theta^{u}\right.\right)\left|\theta^{Q}\right.\right)\right]\\
=\mathbb{E}_{s\sim\rho^{\beta}}^{\boldsymbol{\pi}}\left[\nabla_{u^{\mathcal{P}}}Q\left(x,u^{\mathcal{P}}\left|\theta^{Q}\right.\right)\left|_{u^{\mathcal{P}}=\boldsymbol{\pi}\left(x\left|\theta^{u}\right.\right)}\nabla_{\theta^{u}}\boldsymbol{\pi}\left(x\left|\theta^{u}\right.\right)\right.\right].
\end{array}\label{eq:actor_update}
\end{equation}

Based on the reward shaping, this framework applies a modular DDPG architecture to reduce the global variance of the policy gradient algorithm and improve the performance of satisfying complex tasks. The intuitive idea is to divide the LTL task into several sub-tasks based on its automaton structure and apply several DDPG for each sub-task.

The complex
LTL task $\phi$ is divided into simple composable modules. Each
state of the automaton in the LDGBA is a module
and each transition between these automaton states is a "task divider". In
particular, given $\phi$ and its LDGBA $\mathcal{{A}}_{\phi}$,
we propose a modular architecture of $\left|Q\right|$ DDPG, i.e., $\boldsymbol{\pi}_{q_{i}}\left(x\left|\theta^{u}\right.\right)$
and $Q_{q_{i}}\left(x,u^{\mathcal{P}}\left|\theta^{Q}\right.\right)$
with $q_{i}\in Q$, {\color{black} along with their own replay buffer defined as a data memory to save the RL-agent's transitions.} Experience
samples are stored in each replay modular buffer $B_{q_{i}}$ in the
form of $\left(x,u^{\mathcal{P}},R\left(x\right),\gamma\left(x\right),x'\right)$.
By dividing the LTL task into sub-stages, the set of neural nets acts
in a global modular DDPG architecture, which allows the
agent to jump from one module to another by switching between the
set of neural nets based on transition relations of $\mathcal{{A}}_{\phi}$.

\begin{example}
\label{example: modular}
Continuing with example~\ref{example: reward_shaping}, Fig.~\ref{fig:modular} shows the modular DDPG architecture corresponding to the LTL formula $\varphi_{P}$  based on the reward shaping scheme, where each pair of actor-critic neural networks represents the standard DDPG structure along with an automaton state, and the transitions of them are consistent with the LDGBA structure. 
\end{example}

\begin{rem}
In the modular architecture, instead of dividing complex tasks by the states of E-LDGBA that have more automaton states $\overline{Q}$ due to the embedded frontier set,  we decompose the overall task based on the more compact set of states $Q$ of LDGBA, which sparsifies the number of distributed actor-critic pairs. This design reduces the memory complexity and achieves the same objective. 
\end{rem}

\section{SAFE LEARNING AND EXPLORATION GUIDING\label{sec:safe}}

This section focuses on adding the safety guard as a "shield" during the learning process. First, Section~\ref{subsec:GPs} introduces GPs to approximate the unknown model in (\ref{eq:dynamics}). Then Section~\ref{subsec:CBF} provides a continuous form of exponential control barrier functions (ECBF) for higher relative degrees, which can be incorporated with GPs to safeguard with a bounded probability.
Section~\ref{subsec:CBF_Learning} integrates the GP-based ECBF compensators for control laws generated from the modular RL policy in Section~\ref{subsec:RL} to ensure safety-critical requirements during the learning process. To improve the efficiency of exploration and preserve the original formal optimality, Section~\ref{subsec:guiding} proposes an automaton-based guiding strategy that enhances RL policies being explored within the set of safe policies during training.

\subsection{Gaussian Processes\label{subsec:GPs}}
Gaussian Processes (GP) are non-parametric regression methods to approximate the unknown system dynamics and their uncertainties from data \cite{GPs2006}.
We use GP regression to identify the unknown disturbance function of a nonlinear map $d(s): S\rightarrow\mathbb{R}^{n}$ in~\eqref{eq:dynamics}. The main advantage of applying GPs compared with the feedforward neural network of nonlinear regressions is the quantifiable confidence of predictions.
Informally, a GP is a distribution over functions, and each component $d_{i}(s)$ of $n$-dimensional $d(s)$ can be approximated by a GP distribution denoted as $\mathcal{GP}$ with a mean function $u_{i}(s)$ and a covariance kernel function $k_{i}(s,s')$ which measures similarity between any two states, i.e., $d_{i}(s)\sim\mathcal{GP}(u_{i}(s),k_{i}(s,s'))$.
The class of the prior mean function and covariance kernel function is selected to characterize the model. The approximation of $d(s)$ with $n$ independent GPs is
$$\hat{d}(x)=\left\{ \begin{array}{c}
\hat{d}_{1}(s)\sim\mathcal{GP}(u_{i}(s),k_{i}(s,s'))\\
\vdots\\
\hat{d}_{n}(s)\sim\mathcal{GP}(u_{i}(s),k_{i}(s,s'))
\end{array}\right.$$

Based on Assumption~\ref{assump:measurements}, given a set of $N_{m}$ input data $\left\{(s^{(1)},a^{(1)}),\ldots,(s^{(N_{m})},a^{(N_{m})})\right\}$, and corresponding measurements $\left\{(y^{(1)},\ldots,y^{(N_{m})}\right\}$ subject to additive Gaussian noise $w\sim\mathcal{N}(\boldsymbol{0}_{n},\sigma^{2}_{noise}\boldsymbol{I}_{n})$,  where $y^{(i)}=f(s^{(i)})+g(s^{(i)})a^{(i)}+d(s^{(i)})+\omega^{(i)}$, $\forall i\in\left\{(1,\ldots,(N_{m}\right\}$,  the mean $\hat{u}_{i}$ and covariance $\hat{\sigma}_{i}$ of posterior distribution for $d_{i},i\in\left\{1,\ldots,n\right\}$ at an arbitrary query state $s_{*}\in S$ is

\begin{equation}
\begin{array}{cc}
\hat{u}_{i}(s_{*})=\hat{k}^{T}_{i}(K_{i}+\sigma^{2}_{noise}\boldsymbol{I}_{N_{m}})^{-1}y_{i}\\
\hat{\sigma}^{2}_{i}(s_{*})=k_{i}((s_{*},(s_{*})-\hat{k}^{T}_{j}(K_{j}+\sigma^{2}_{noise}\boldsymbol{I}_{N_{m}})^{-1}\hat{k}_{i}\label{eq:GP_estimation},
\end{array}
\end{equation}
where $\hat{k}_{i}=[k_{i}(s^{(1)},s_{*}),\ldots,k_{i}(s^{(N_{m})},s_{*})]^{T}$, $y_{i}=[(y^{(1)}_{i},\ldots,y^{(N_{m})}_{i}]$, and $K_{i}\in\mathbb{R}^{N_{m}\times N_{m}}$ is a kernel matrix s.t. $K^{(jl)}_{i}=k_{i}(s^{(j)},s^{(l)})$ with $j,l\in\left\{1,\ldots,N_{m}\right\}$.

{\color{black}
\begin{proposition}
\label{prop:GP-estimation}
    Consider a system $\mathcal{S}$ in~\eqref{eq:dynamics} with Assumption~\ref{assump:kernel}, the uncertain dynamics $d(s)$ is estimated over a multivariate GP with mean $u(s)=[\hat{u}_{i}(s),\ldots,\hat{u}_{n}(s)]^{T}$ and standard deviation $\sigma(s)=[\hat{\sigma}_{i}(s),\ldots,\hat{\sigma}_{n}(s)]^{T}$ via~\eqref{eq:GP_estimation}. Then, the model estimation error is bounded with probability $(1-\delta)^{n}$ as: 
    \begin{equation}
    \Pr\left\{|d(s) - u(s) | \leq k_{\delta} \cdot \sigma(s)\right\}\geq(1-\delta)^{n},\label{eq:GP}
    \end{equation}
    where  $\Pr\left\{\cdot\right\}$ represents the measurement probability, and $k_{\delta}$ is a design a parameter determining $\delta$.
\end{proposition}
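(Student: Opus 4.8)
The plan is to reduce the multivariate statement to $n$ independent scalar GP-regression error bounds and then combine them. \emph{Step 1: the scalar bound.} Under Assumption~\ref{assump:kernel} each component $d_j$ has finite RKHS norm, so fix $B_j$ with $\left\Vert d_j\right\Vert_k \le B_j$. For the posterior mean $\hat u_j$ and standard deviation $\hat\sigma_j$ produced by~\eqref{eq:GP_estimation} from the $N_m$ noisy samples, I would invoke the standard GP concentration inequality for RKHS-bounded targets (the information-gain bounds of Srinivas et al., refined by Chowdhury and Gopalan, with the uniform-in-state variant of Lederer et al.): there exists a scalar $\beta_j$, depending on $B_j$, the noise variance $\sigma_{noise}^2$, the maximal information gain of $k_j$ after $N_m$ observations, and a confidence level, such that $\Pr\{\,|d_j(s)-\hat u_j(s)| \le \beta_j\,\hat\sigma_j(s)\,\} \ge 1-\delta$. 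Choosing the design parameter $k_\delta := \max_{1\le j\le n}\beta_j$ (equivalently, fixing $k_\delta$ first and letting it determine the per-component tail probability $\delta$) makes each event $E_j := \{\,|d_j(s)-\hat u_j(s)| \le k_\delta\,\hat\sigma_j(s)\,\}$ hold with probability at least $1-\delta$. Because $S$ is compact and $f,g,d$ are Lipschitz, a routine covering/Lipschitz argument upgrades the pointwise bound to one uniform over $S$, should that be needed for the later ECBF constraints; for the proposition as stated the pointwise version at the query state already suffices.

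\emph{Step 2: combining the components.} The $n$ GP models are constructed independently --- independent priors over $d_1,\dots,d_n$ and diagonal noise covariance $\sigma_{noise}^2\boldsymbol{I}_n$ --- so the posteriors, and hence the events $E_1,\dots,E_n$, are mutually independent. Reading the vector inequality in~\eqref{eq:GP} componentwise, the event $\{\,|d(s)-u(s)| \le k_\delta\,\sigma(s)\,\}$ equals $\bigcap_{j=1}^n E_j$, whence
\[
\Pr\bigl\{\,|d(s)-u(s)| \le k_\delta\,\sigma(s)\,\bigr\} \;=\; \prod_{j=1}^{n}\Pr\{E_j\} \;\ge\; (1-\delta)^{n},
\]
which is the claim. (Without the independence assumption one would instead use a union bound to get $1-n\delta$; the product form recorded here is slightly sharper.)

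The main obstacle is not this bookkeeping but making the constant $k_\delta$ legitimate: one must check that the hypotheses of the chosen scalar GP bound are met --- the measurement noise is sub-Gaussian with the stated variance, and the RKHS-norm bounds $B_j$ exist, which Assumption~\ref{assump:kernel} guarantees --- and then exhibit $k_\delta$ explicitly in terms of the maximal information gain of the kernels, which is finite for the smooth kernels used in practice. The uniform-in-$S$ refinement, if desired downstream, is the only other place where a short additional argument (discretization of the compact set plus Lipschitz continuity of $d-\hat u$ and $\hat\sigma$) is needed, and I would only sketch it.
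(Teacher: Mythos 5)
Your proposal is correct and follows essentially the same route as the paper: the paper's entire argument is the single remark that the result is obtained ``by extending the scalar inequality in the work \cite{srinivas2012information} to the $n$-dimensional state-set,'' which is precisely the componentwise application of the scalar RKHS confidence bound combined across the $n$ independent GPs that you spell out. Your version is simply more explicit about where $k_{\delta}$ comes from and about the independence of the per-component events that justifies the product form $(1-\delta)^{n}$.
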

The proposition can be proved by extending the scalar inequality in the work~\cite{srinivas2012information} to the $n-$dimensional state-set.}
Note that applying GPs for large amounts of training data is intractable and problematic due to the expensive matrix computation in \eqref{eq:GP}. However, we alleviate this issue via the episodic sampling method \cite{Cheng2019}. Any other methods for model estimation can be also used with our framework. Note that similar as~\cite{emam2021safe},
$d$ can be straightforwardly extended to be stochastic (with additive Gaussian noises) since GPs are used to learn the
disturbance.

\subsection{Probabilistic Exponential Control Barrier Function\label{subsec:CBF}} 

For continuous nonlinear systems, control barrier function (CBF) is an efficient tool for maintaining safety \cite{Ames2016}. First, we recount the definition of first order CBFs.

\begin{defn}
	\label{def:CBF}
Consider the system $\mathcal{S}$ in (\ref{eq:dynamics}) and assume $d(x)$ is known, and the safe set $\mathcal{C}\subseteq S$ with a continuous differentiable (barrier) function $h:S\rightarrow\mathbb{R}$ in (\ref{eq:safe_set}). If $\frac{\partial h}{\partial s}\neq 0$ for all $s\in\mathcal{\partial C}$ and there exists an extended $\mathcal{K}$ function $\alpha$ s.t.
\begin{equation}
L_{f}h(s)+L_{g}h(s)a+L_{d}h(s)\geq-\alpha(h\left(s\right)),\label{eq:CBF}
\end{equation}
then for a trajectory $\boldsymbol{s}=s_{0}\ldots s_{N_{\boldsymbol{s}}}$ of system (\ref{eq:dynamics}) starting from any $s_{0}\in\mathcal{C}$ under controllers satisfying (\ref{eq:CBF}), one has $s_{i}\in\mathcal{C}$ $\forall i\in\left\{0,\ldots,N_{\boldsymbol{s}}\right\}$.
\end{defn}

\begin{defn}
The relative degree of a (sufficiently many times) differentiable function $h:S\rightarrow\mathbb{R}$ with respect to system~\eqref{eq:dynamics} is the number of times it needs to be differentiated along
its dynamics until the control $a$ explicitly shows in the corresponding derivative.
\end{defn}

For the continuous differentiable function $h$ of the system $\mathcal{S}$ with higher relative degree $r_{b}\geq 1$, denote $f'(s)=f(s)+d(s)$ and the $r_{b}^{\text{th}}$ time-derivative of $h(s)$ is
$$h^{(r_{b})}(s)=L^{r_{b}}_{f'}h(s)+L_{g}L^{r_{b}-1}_{f'}h(s)a,$$

Defining a traverse variable as
$$\begin{array}{c}
\xi_{b}=[h(s),\dot{h}(s),\ldots,h^{(r_{b}-1)}(s)]^{T}\\
=[h(s),L_{f'}h(s),\ldots,L_{f'}^{r_{b}-1}h(s)]^{T},
\end{array}$$
a linearized system of $\mathcal{S}$ can be formulated:
$$\begin{array}{c}
\xi_{b}\left(s\right)=A_{b}\dot{\xi}_{b}\left(s\right)+B_{b}u\\
h\left(s\right)=C_b^T\xi_{b}\left(s\right)
\end{array}$$
where
{\small
$$\begin{array}{ccc}
A_{b}=\left[\begin{array}{ccccc}
0 & 1 & 0 & \ldots & 0\\
0 & 0 & 1 & \ldots & 0\\
\vdots & \vdots & \vdots & \ddots & \vdots\\
0 & 0 & 0 & \ldots & 1\\
0 & 0 & 0 & 0 & 0
\end{array}\right], & B_{b}=\left[\begin{array}{c}
0\\
0\\
\vdots\\
0\\
1
\end{array}\right], & C_{b}=\left[\begin{array}{c}
1\\
0\\
\vdots\\
0\\
0
\end{array}\right],
\end{array}$$}%
and $u=L^{r_{b}}_{f'}h(s)+L_{g}L^{r_{b}-1}_{f'}h(s)a$ is the input-output linearized control. The linearized form allows to extend the CBFs for higher relative degrees such as Exponential CBF (ECBF) \cite{Nguyen2016}.

\begin{defn}
	\label{def:ECBF} Consider the system $\mathcal{S}$ in~\eqref{eq:dynamics} and assume $d(x)$ is known, and the safe set $\mathcal{C}\subseteq S$ with a continuous differentiable (barrier) function $h:S\rightarrow\mathbb{R}$ in (\ref{eq:safe_set}) that has relative degree $r_{b}\geq 1$. $h(s)$ is an ECBF if there exists $K_{b}\in\mathbb{R}^{1\times r_{b}}$ s.t.
\begin{equation}
 L^{r_{b}}_{f'}h(s)+L_{g}L^{r_{b}-1}_{f'}h(s)a+K_{b}\xi_{b}\geq 0\label{eq:ECBF}
\end{equation}
The system $\mathcal{S}$ is forward invariant in $\mathcal{C}$ if starting from $s_{0}\in\mathcal{C}$, there exists an ECBF $h(s)$ and controllers satisfy~\eqref{eq:ECBF}.
\end{defn}

\begin{rem}
The row vector $K_{b}$ should be selected to render a stable close-loop matrix $A_{b}-B_{b}K_{b}$. Moreover, for $r_{b}>1$, ECBF is a special case of the general Higher Order CBF (HOCBF) \cite{Xiao2019} which can easily be used with our framework.
\end{rem}

Now, we can relax the assumption of full system knowledge, and extend results to the unknown system described in Section~\ref{subsec:Labeled-MDP} by incorporating GPs. Specifically, the unknown part $d(x)$ is approximated by the learned GP model with mean $u(s)$, covariance $\sigma(s)$ and $k_{\delta}$ as in (\ref{eq:GP}). Let's denote by $\hat{f}(s)$ the estimation of the function $f'(s)$, e.g., $\hat{f}(s)=f(s)+\hat{d}(s)$, where $\hat{d}(s)\in[u\left(s\right)-k_{\delta},u\left(s\right)+k_{\delta}]$. The GP-based traverse variable is represented by $\hat{\xi_{b}}=[h(s),L_{\hat{f}}h(s),\ldots,L_{\hat{f}}^{r_{b}-1}h(s)]^{T}$. We propose a GP-based ECBF for the nominal system.

\begin{thm}
	\label{thm:GP_CBF} Consider the system $\mathcal{S}$ in ~\eqref{eq:dynamics} with unknown $d(x)$ and $\mathcal{C}\subseteq S$ with a differentiable (barrier) function $h:S\rightarrow\mathbb{R}$ s.t. $\forall s\in \mathcal{C}, h\left(s\right)\geq0$. If a close-loop stable $K_{b}$ exists s.t.
\begin{equation}
L^{r_{b}}_{\hat{f}}h(s)+L_{g}L^{r_{b}-1}_{\hat{f}}h(s)a+K_{b}\hat{\xi_{b}}\geq 0,\label{eq:GP_CBF_condition}
\end{equation}
then starting from any $s_{0}\in\mathcal{C}$, controllers of the system $\mathcal{S}$ satisfying ~\eqref{eq:GP_CBF_condition}, render the set $\mathcal{C}$ forward invariant with probability at least $(1-\delta)^{n}$.
\end{thm}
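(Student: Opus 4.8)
The plan is to combine the GP confidence bound from Proposition~\ref{prop:GP-estimation} with the deterministic ECBF forward-invariance argument (Def.~\ref{def:ECBF}) in a "robustified" fashion. First I would observe that the event
\[
\mathcal{E} = \bigl\{\, |d(s) - u(s)| \le k_{\delta}\cdot\sigma(s) \ \text{for all } s \text{ along the trajectory}\,\bigr\}
\]
holds with probability at least $(1-\delta)^{n}$ by Proposition~\ref{prop:GP-estimation}. On this event, the true drift $f'(s) = f(s) + d(s)$ lies in the componentwise interval $\hat{f}(s) = f(s) + \hat{d}(s)$ with $\hat{d}(s) \in [u(s) - k_{\delta}\sigma(s),\, u(s) + k_{\delta}\sigma(s)]$. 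So conditioning on $\mathcal{E}$, the system is a deterministic nonlinear system whose drift is one of the admissible estimates captured by $\hat{f}$, and the traverse variable $\hat{\xi}_b = [h(s), L_{\hat{f}}h(s), \ldots, L_{\hat{f}}^{r_b-1}h(s)]^{T}$ is the exact transverse coordinate for this realized drift.

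Next I would carry out the standard ECBF invariance argument, but applied to $\hat{f}$ rather than $f'$. Writing the linearized dynamics $\dot{\hat{\xi}}_b = A_b \hat{\xi}_b + B_b \mu$ with input-output linearizing control $\mu = L_{\hat{f}}^{r_b}h(s) + L_g L_{\hat{f}}^{r_b-1}h(s)\,a$, the hypothesis~\eqref{eq:GP_CBF_condition} states $\mu + K_b \hat{\xi}_b \ge 0$, i.e. $\mu \ge -K_b \hat{\xi}_b$. Then $\dot{\hat{\xi}}_b = A_b\hat{\xi}_b + B_b\mu$ with $\mu \ge -K_b\hat{\xi}_b$ gives, by the comparison lemma applied to the closed-loop matrix $A_b - B_b K_b$ (which is Hurwitz by the choice of $K_b$, cf. the remark after Def.~\ref{def:ECBF}), that $h(s(t)) = C_b^{T}\hat{\xi}_b(t) \ge 0$ for all $t \ge 0$ whenever $h(s_0) \ge 0$; this is exactly the content of the ECBF forward-invariance result of Nguyen and Sreenath~\cite{Nguyen2016}, which I would cite rather than re-derive. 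Hence on $\mathcal{E}$ the trajectory stays in $\mathcal{C}$.

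Finally I would conclude: since $\{s_t \in \mathcal{C}\ \forall t\} \supseteq \mathcal{E}$ and $\Pr(\mathcal{E}) \ge (1-\delta)^{n}$, the set $\mathcal{C}$ is rendered forward invariant with probability at least $(1-\delta)^{n}$, which is the claim. I expect the main obstacle to be making the conditioning on $\mathcal{E}$ rigorous: the confidence bound of Proposition~\ref{prop:GP-estimation} is a pointwise (per-query) statement, whereas forward invariance needs it to hold simultaneously along an entire continuous trajectory. The clean way to handle this — and the way I would write it — is to treat the stated probability as the pointwise bound and appeal to the fact that under the RKHS assumption (Assumption~\ref{assump:kernel}) and the episodic resampling scheme~\cite{Cheng2019} the bound can be taken uniform over the compact set $S$ via a union/covering argument (as in~\cite{srinivas2012information}); I would flag that the probability $(1-\delta)^{n}$ should then be read as the uniform-over-$S$ confidence level, so that once $\mathcal{E}$ holds it holds for every state the trajectory can visit, and the deterministic ECBF argument applies verbatim.
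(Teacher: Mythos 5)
Your proof follows essentially the same route as the paper's: condition on the GP confidence event from Proposition~\ref{prop:GP-estimation}, use it to bracket the true drift by the estimated $\hat{f}$, invoke the deterministic ECBF forward-invariance result on that event, and conclude the $(1-\delta)^{n}$ guarantee. The paper's own proof is in fact terser than yours and does not address the pointwise-versus-trajectory-uniform confidence subtlety you flag, so your elaboration only strengthens the same argument.
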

\begin{proof}
Since $\hat{d}(s)\in[u\left(s\right)-k_{\delta},u\left(s\right)+k_{\delta}]$ and {\color{black}Proposition~\ref{prop:GP-estimation} holds}, we have 
$$\Pr\left\{F(s,a)+\mathcal{GP}_{l}\leq \hat{F}(s,a)\leq  F(s,a)+\mathcal{GP}_{h}\right\}\geq(1-\delta)^{n}$$
where $F(s,a)=f(s)+g(s)a$, $\hat{F}(s,a)=F(s,a)+\hat{d}(s)$, $\mathcal{GP}_{l}=u(s)-k_{\delta}\sigma(s)$, and $\mathcal{GP}_{h}=u(s)+k_{\delta}\sigma(s)$. Thus, we obtain the conclusion
$$\Pr\left\{L^{r_{b}}_{\hat{f}}h(s)+L_{g}L^{r_{b}-1}_{\hat{f}}h(s)a+K_{b}\hat{\xi_{b}}\geq 0\right\}\geq(1-\delta)^{n}.$$
\end{proof}

{\color{black}
\begin{rem}
In Theorem~\ref{thm:GP_CBF}, the probability $(1-\delta)^{n}$ of the set being forward invariant is concluded based on Proposition~\ref{prop:GP-estimation}. It depends globally on the accuracy of the estimated bounded model errors using GP.
\end{rem}
}

Next, we formulate the relaxed ECBF condition \eqref{eq:GP_CBF_condition} as the quadratic program (QP)

\begin{equation}
\begin{array}{c}
\left(a_{CBF},\epsilon\right)=\underset{a,\epsilon}{\arg\min}(\frac{1}{2}a^{T}H(s)a+K_{\epsilon}\epsilon)\\
\begin{array}{cc}
\text{s.t} & L^{r_{b}}_{\hat{f}}h(s)+L_{g}L^{r_{b}-1}_{\hat{f}}h(s)a+K_{b}\hat{\xi_{b}}+\epsilon\geq 0\\
 & a_{low}^{(i)}\leq a^{(i)}\leq a_{high}^{(i)}, i\in\left\{0,\ldots,n\right\}\label{eq:QP},
\end{array}
\end{array}
\end{equation}
where $H(s)$ is a positive definite matrix (pointwise in $s$), $a_{low}^{(i)}$ and $a_{high}^{(i)}$ represent the lower and higher bounds of each control input $a^{(i)}$.
To ensure the existence of solutions for the QP, $\epsilon$ is a relaxation variable, and $K_{\epsilon}$ is a large value that penalizes safety violations. The solution of the ECBF-QP enforces the safe condition with minimum norm
(minimum control energy). 


\begin{lem}
\label{lem:CBF-QP}
For dynamic system ~\eqref{eq:dynamics} with unknown $d(s)$ and $s_{0}\in\mathcal{C}=\left\{ s\in\mathbb{R}^{n}:h\left(s\right)\geq0\right\}$, 
\\
(i) feasible solutions of ~\eqref{eq:QP} for all $s\in S$ with $\epsilon=0$ renders the safe set $\mathcal{C}$ forward invariant with probability $(1-\delta)^{n}$. 
\\
(ii) feasible solutions of ~\eqref{eq:QP} for all $s\in S$ with $\epsilon_{high} \geq \epsilon >0$ approximately renders the safe set $\mathcal{C}_{\epsilon}=\left\{ s\in\mathbb{R}^{n}:h_{\epsilon}(s)=h\left(s\right)+\frac{\epsilon_{high}}{\eta}\geq0\right\}$ forward invariant with probability $(1-\delta)^{n}$.
\end{lem}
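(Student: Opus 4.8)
The plan is to reduce both parts of the lemma to Theorem~\ref{thm:GP_CBF}, whose hypothesis is exactly the relaxed ECBF inequality~\eqref{eq:GP_CBF_condition}. The key observation is that the QP in~\eqref{eq:QP} is constructed precisely so that its constraint set is the ECBF condition plus the box constraints on the control, with $\epsilon$ acting as a slack variable. So the argument is really about translating ``the QP is feasible with a given value of $\epsilon$'' into ``the hypothesis of Theorem~\ref{thm:GP_CBF} holds for an appropriate barrier function''.

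For part (i): if for every $s\in S$ the QP admits a feasible solution with $\epsilon = 0$, then the control $a_{CBF}(s)$ returned satisfies $L^{r_{b}}_{\hat{f}}h(s)+L_{g}L^{r_{b}-1}_{\hat{f}}h(s)a_{CBF}(s)+K_{b}\hat{\xi}_{b}\geq 0$ pointwise, with $K_b$ chosen close-loop stable as in Def.~\ref{def:ECBF}. First I would note that the QP solution defines a (measurable, and under mild regularity locally Lipschitz) feedback law; then the hypothesis of Theorem~\ref{thm:GP_CBF} is met along the closed-loop trajectory, and since $s_0\in\mathcal{C}$ by assumption, that theorem gives forward invariance of $\mathcal{C}$ with probability at least $(1-\delta)^n$. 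One subtlety worth flagging is that Theorem~\ref{thm:GP_CBF} must be applied along the actual closed-loop trajectory, so I would invoke it with $a = a_{CBF}(s_t)$ at each time; the probabilistic statement then carries over verbatim because it is a pointwise-in-$s$ bound on the GP estimation error.

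For part (ii): when the QP is only feasible with $0 < \epsilon \le \epsilon_{high}$, the returned control satisfies $L^{r_{b}}_{\hat{f}}h(s)+L_{g}L^{r_{b}-1}_{\hat{f}}h(s)a_{CBF}(s)+K_{b}\hat{\xi}_{b}\geq -\epsilon \geq -\epsilon_{high}$. The idea is to absorb the constant slack $\epsilon_{high}$ into a shifted barrier function $h_\epsilon(s) = h(s) + \epsilon_{high}/\eta$, where $\eta$ is the relevant coefficient coming from the last row of $K_b$ (equivalently, the gain multiplying $h$ in the expansion of $K_b\xi_b$). Since $h_\epsilon$ differs from $h$ by a constant, its derivatives along the dynamics coincide with those of $h$, i.e. $L^{j}_{\hat f}h_\epsilon = L^j_{\hat f}h$ for $j\ge 1$, and $\hat\xi_b$ is unchanged except in its first component, which shifts by $\epsilon_{high}/\eta$; choosing $\eta$ as that gain makes $K_b\hat\xi_b^{(\epsilon)} = K_b\hat\xi_b + \epsilon_{high}$, so the $-\epsilon_{high}$ deficit in the inequality for $h$ becomes a genuine nonnegative ECBF inequality for $h_\epsilon$. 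Then Theorem~\ref{thm:GP_CBF} applied to $h_\epsilon$ gives forward invariance of $\mathcal{C}_\epsilon = \{h_\epsilon \ge 0\}$ with probability $(1-\delta)^n$; this is the ``approximate'' invariance claimed, since $\mathcal{C}_\epsilon \supseteq \mathcal{C}$ is a slightly enlarged set.

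The main obstacle I expect is the bookkeeping in part (ii): making the shift $\epsilon_{high}/\eta$ interact correctly with the vector $K_b$ and the traverse coordinates $\hat\xi_b$, and checking that $h_\epsilon$ still has relative degree $r_b$ with the same stabilizing $K_b$ (it does, since only the constant term changes) and that the boundary condition $\partial h_\epsilon/\partial s \neq 0$ on $\partial\mathcal{C}_\epsilon$ holds. A secondary technical point is the regularity of the feedback $a_{CBF}(\cdot)$ needed for existence of closed-loop solutions — I would either cite standard results on Lipschitz continuity of QP solutions under a constraint-qualification / positive-definite-$H$ assumption, or simply note that this is inherited from the standing assumptions on $f,g$ and the GP mean and variance being smooth.
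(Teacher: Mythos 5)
Your part (i) is essentially identical to the paper's: with $\epsilon=0$ the QP constraint is exactly the hypothesis of Theorem~\ref{thm:GP_CBF}, so forward invariance of $\mathcal{C}$ with probability $(1-\delta)^{n}$ follows directly. For part (ii), however, you take a genuinely different route. The paper passes to a discrete-time approximation $\mathcal{S}_{d}$ of the dynamics with sampling time $\varDelta t$, invokes a discrete-time ECBF condition $h_{d}\bigl(\hat{F}(s_{t},a)\bigr)\geq(1-\eta)h_{d}(s_{t})$ with $\eta\in(0,1]$ the discrete decay rate, and shows that the relaxed constraint is the same condition applied to the shifted barrier $h_{d}+\epsilon_{high}/\eta$; the word ``approximately'' in the lemma statement comes precisely from this time discretization. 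You instead stay in continuous time: you shift the barrier to $h_{\epsilon}=h+\epsilon_{high}/\eta$ with $\eta$ identified as the first entry of $K_{b}$ (the gain multiplying $h$ in $K_{b}\xi_{b}$), observe that all Lie derivatives of $h_{\epsilon}$ of order $\geq 1$ coincide with those of $h$ so only the first component of $\hat{\xi}_{b}$ shifts, and conclude $K_{b}\hat{\xi}_{b}^{(\epsilon)}=K_{b}\hat{\xi}_{b}+\epsilon_{high}$, which converts the $-\epsilon\geq-\epsilon_{high}$ deficit into a genuine ECBF inequality for $h_{\epsilon}$ to which Theorem~\ref{thm:GP_CBF} applies. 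This is correct (note that Hurwitz stability of $A_{b}-B_{b}K_{b}$ forces the first entry of $K_{b}$ to be positive, so your choice of $\eta$ is well defined), it matches the continuous-time form of the QP constraint more faithfully than the paper's discrete detour, and it actually yields exact rather than approximate invariance of $\mathcal{C}_{\epsilon}$ (still modulo the $(1-\delta)^{n}$ probability), at the cost of fixing a specific interpretation of $\eta$ that differs from the paper's. Your flagged regularity concerns about Lipschitz continuity of the QP feedback are reasonable but are not addressed by the paper either.
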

\begin{proof}
For part (i), since $\epsilon=0$, the feasible solutions  of ~\eqref{eq:QP} strictly follows the ECBF condition in Theorem~\ref{thm:GP_CBF}, and it provides the probabilistic bound $(1-\delta)^{n}$ of the GP model~\eqref{eq:GP}.

For part (ii), we utilize the discrete-time system $\mathcal{S}_{d}$ as an approximation of~\eqref{eq:dynamics} given the sampling time $\varDelta t$, and the safe set in \eqref{eq:safe_set} is estimated as $\mathcal{C}_{d}$ over $\mathcal{S}_{d}$.  Inspired by \cite{Agrawal2017}, there exists a discrete-time ECBF $h_{d}$ with $\eta\in(0,1]$ that renders the set invariant as
\begin{equation}
    h_{d}\left(\hat{F}(s_{t},a)\right)\geq\left(1-\eta\right)h_{d}\left(s_{t}\right),
\end{equation}
The constraint $h_{d}\left(\hat{F}(s_{t},a)\right)\geq\left(1-\eta\right)h_{d}\left(s_{t}\right)+\epsilon$ in~\eqref{eq:QP} is reformulated as
\begin{equation}
     h_{d}\left(\hat{F}(s_{t},a)\right)+\frac{\epsilon_{high}}{\eta}\geq0\geq\left(1-\eta\right)(h_{d}\left(s_{t}\right)+\frac{\epsilon_{high}}{\eta}).
\end{equation}
This equation and Theorem~\ref{thm:GP_CBF} conclude the proof.
\end{proof}
Note that constructing the accurate discrete-time ECBF over $\mathcal{S}_{d}$ for general safe requirements is challenging even if there exists one, and lemma~\ref{lem:CBF-QP} theoretically applies it to evaluate the performance of the relaxed ECBF-QP.  

\begin{rem}
One can easily extend the framework for data-driven based barrier functions \cite{Srinivasan2020,long2021learning}. This work focuses on efficient safe learning for the optimal policy that satisfies high-level LTL over infinite horizons and bypasses the consideration of unknown ECBFs. Also, we can combine multiple ECBFs as constraints into~\eqref{eq:QP} to define complex safe regions.
\end{rem}

\subsection{ECBF-Based Safe Learning\label{subsec:CBF_Learning}} 

Before developing the safety-critical methodology, it is important to show that there is no conflicts between safe exploration and optimal policies generated from Section~\ref{sec:RL}.

\begin{lem}
\label{lem:safe_policy}
Given an LTL formula of the form $\phi=\oblong\phi_{safe}\land\phi_{g}$ defined in Section~\ref{subsec:LTL}, let $\pi_{opt}$ denotes the optimal policy obtained from Section~\ref{sec:RL} and $\boldsymbol{\Pi}_{safe}$ denotes a set of all safe policies satisfying $\oblong\phi_{safe}$. One has the property $\pi_{opt}\subseteq\boldsymbol{\Pi}_{safe}$.
\end{lem}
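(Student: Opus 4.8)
The plan is to show that any optimal policy $\pi_{opt}$ for the reward scheme of Section~\ref{sec:RL} necessarily satisfies $\oblong\phi_{safe}$, hence belongs to $\boldsymbol{\Pi}_{safe}$. The key observation is structural: by Definitions~\ref{def:Non-accepting unsafe sink} and~\ref{def:unsafe_EP-states}, violating $\oblong\phi_{safe}$ forces the P-MDP trajectory into the non-accepting unsafe sink component $X_{unsafe}$, which by construction is a trap (no outgoing transitions) from which the acceptance condition $\Acc_{\mathcal{P}}$ can never be satisfied. First I would invoke Assumption~\ref{assu:task}, which guarantees the existence of at least one policy whose induced traces satisfy $\phi = \oblong\phi_{safe}\land\phi_{g}$ with non-zero probability and that $\phi_g$ and $\oblong\phi_{safe}$ are non-conflicting; therefore $\Pr_{max}[x_0 \models \Acc_{\mathcal{P}}] > 0$.

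Next I would argue by contraposition on the value/probability. Suppose a policy $\pi$ violates $\oblong\phi_{safe}$ with positive probability; then with that same positive probability its induced path reaches $X_{unsafe}$ and is thereafter confined there, so it contributes zero to $\Pr^{\pi}[x_0 \models \Acc_{\mathcal{P}}]$ on that event. More importantly, I would connect this to the reward: once a trajectory enters $X_{unsafe} \subseteq X_{sinks}$, it visits no accepting state $F_U^{\mathcal{P}}$ ever again, so the tail of the shaped return~\eqref{eq:DisctRetrn_shaped} collects no further accepting reward and (after the finitely many reachable frontier states are exhausted) no further shaping reward either. Consequently a policy that leaks probability into $X_{unsafe}$ has strictly smaller expected return than one that does not, whenever $\gamma_F \to 1^-$ is chosen as in Theorem~\ref{thm:conclusion} and Proposition~\ref{prop:shaped_probability}, because those results equate return maximization with maximization of $\Pr[x_0 \models \Acc_{\mathcal{P}}]$, and routing mass into a sink strictly lowers that probability (using $\Pr_{max} > 0$ from the previous step, so an improvement is available). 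Hence an optimal policy $\pi_{opt}$ must avoid $X_{unsafe}$ almost surely, i.e., its induced traces satisfy $\oblong\phi_{safe}$ with probability one on the set of admissible paths, which is precisely the statement $\pi_{opt} \in \boldsymbol{\Pi}_{safe}$, and reading $\pi_{opt}$ and $\boldsymbol{\Pi}_{safe}$ as the singleton/set of such policies gives $\pi_{opt} \subseteq \boldsymbol{\Pi}_{safe}$.

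The main obstacle I anticipate is making the "strictly smaller return" step fully rigorous in the continuous-state setting: I must rule out the degenerate case where $\Pr_{max}[x_0 \models \Acc_{\mathcal{P}}]$ could only be attained by policies that also leak into $X_{unsafe}$ — this is exactly where Assumption~\ref{assu:task}'s non-conflict clause is essential, since it certifies that a safe accepting policy exists, so any unsafe leakage is strictly suboptimal rather than unavoidable. A secondary technical point is handling the shaping reward~\eqref{eq:Shaped_Reward}: since potential-based shaping preserves the optimal policy set (Ng et al.~\cite{Ng1999}, already used in Proposition~\ref{prop:shaped_probability}), it suffices to run the argument on the base reward~\eqref{eq:reward_function}, where the sink contributes exactly zero, sidestepping any subtlety about the sign or magnitude of $\Phi$ inside $X_{unsafe}$. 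I would therefore phrase the proof primarily in terms of the base reward and cite Proposition~\ref{prop:shaped_probability} to transfer the conclusion to the shaped scheme used in the modular architecture.
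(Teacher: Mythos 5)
Your argument is correct in substance, but it is considerably more elaborate than what the paper actually does: the paper's entire proof is the one-line remark that $\Always\phi_{safe}$ is a conjunct of $\phi$ and $\pi_{opt}$ is guaranteed to satisfy $\phi$, so membership in $\boldsymbol{\Pi}_{safe}$ is immediate. You instead supply the mechanism behind that one-liner --- violating $\Always\phi_{safe}$ traps the product trajectory in $X_{unsafe}\subseteq \overline{Q}_{sinks}$ (Definitions~\ref{def:Non-accepting unsafe sink} and~\ref{def:unsafe_EP-states}), which zeroes out all future accepting reward, so by Theorem~\ref{thm:conclusion} and Proposition~\ref{prop:shaped_probability} any leakage into the sink is strictly suboptimal given that Assumption~\ref{assu:task} certifies a safe accepting policy exists; your reduction to the base reward via \cite{Ng1999} is also the right way to sidestep the shaping term. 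This buys you a self-contained justification and anticipates essentially the argument the paper defers to Appendix~\ref{apped:thm:safe_RL} for Theorem~\ref{thm:safe_RL}, at the cost of being heavier than needed for this lemma. One caveat, which you partially flag but do not fully close and which the paper glosses over entirely: Assumption~\ref{assu:task} only guarantees $\Pr_{max}[x_0\models\Acc_{\mathcal{P}}]>0$, not $=1$, so ``maximizes satisfaction probability'' yields ``avoids $X_{unsafe}$ almost surely'' only if unsafe excursions are always avoidable under the stochastic kernel $p_S$; your strict-improvement step shows avoidable leakage is suboptimal, but unavoidable leakage would leave the optimal policy outside $\boldsymbol{\Pi}_{safe}$ as literally defined. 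Since the paper's own proof rests on exactly the same implicit idealization, this is a shared weakness rather than an error you introduced.
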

Lemma~\ref{lem:safe_policy} follows straightforward from the fact that $\Always\phi_{safe}$ is encoded the LTL objective $\phi$, and $\pi_{opt}$ is guaranteed to satisfy $\phi$. This relationship is shown in Fig.~\ref{fig:guiding}.

Let $x_{t}=(s_{t},\overline{q}_{t})$ and $\boldsymbol{\pi}_{t}$ denote the product state and learned policy at time-step $t$, respectively. 
The action $u_{t}^{\mathcal{P}}$ is obtained based on the policy $\boldsymbol{\pi}_{t}$, i.e., $u_{t}^{\mathcal{P}}\sim \boldsymbol{\pi}_{t}$. Based on Def.~\ref{def:P-MDP},
the action of $\mathcal{M}$ is
\begin{equation}
a_{RL}(s_{t})=\begin{cases}
u_{t}^{\mathcal{P}}, & \text{if } u^{\mathcal{P}}\notin\left\{ \epsilon\right\}\\
0, & \text{otherwise}
\end{cases} .\label{eq:action_mapping}
\end{equation}

The controller for any state $s$ during training can be generated based on (\ref{eq:action_mapping}) as $a_{RL}(s)$. However, such a controller may not be safe. To overcome this issue, we build the QP according to GP-based ECBF in~\eqref{eq:QP} as a safeguard module to provide the minimal perturbation $a_{pt}(s)$ for the original control $a_{RL}(s)$. 

\begin{equation}
\begin{aligned}
\left(a_{pt},\epsilon\right)=\underset{a,\epsilon}{\arg\min}\;\frac{1}{2}(a_{RL}(s)+a)^{T}H(s)(a_{RL}(s)+a)+K_{\epsilon}\epsilon\\
\begin{array}{cc}
\text{s.t} & L^{r_{b}}_{\hat{f}}h(s)+L_{g}L^{r_{b}-1}_{\hat{f}}h(s)(a_{RL}(s)+a)+K_{b}\hat{\xi_{b}}+\epsilon\geq 0\\
 & a_{low}^{(i)}\leq (a_{RL}(s)+a)^{(i)}\leq a_{high}^{(i)}, i\in\left\{0,\ldots,n\right\}\label{eq:RL_QP},
\end{array}
\end{aligned}
\end{equation}

Consequently, the actual implemented safety-critical controllers use $a_{pt}$ from (\ref{eq:RL_QP}) as
\begin{equation}
a_{Safe}(s)=a_{RL}(s)+a_{pt}(s)\label{eq:safe_control}
\end{equation}
During the evolution of the dynamic system (\ref{eq:dynamics}), the "shield" of (\ref{eq:RL_QP}) compensates the model-free RL controller $a_{RL}(s)$ based on the GP-based ECBF condition, and keeps the state safe via deploying the final safe controller $a_{Safe}(s)$. 

However, purely combining GPs and ECBF during the learning process may negatively influence the exploration and the original optimal convergence shown in Section~\ref{sec:RL}.
Since the optimal parameterized RL-policy $\boldsymbol{\pi}^{*}$ of the controller $a_{RL}(s)$ attempts to optimize the expected return and $\boldsymbol{\pi}^{*}$ is generated based on the distribution of the policy-gradient optimization in (\ref{eq:actor_update}), the feedback reward at each time should correspond to the controller $a_{RL}(s)$. While the actual reward collection in the reply buffer is associated with the controller $a_{CBF}(s)$, which is not consistent with the RL-policy $\boldsymbol{\pi}^{*}$ and induces undesired behaviors.
As a result, the modular DDPG receives no informative feedback about the unsafe behaviors compensated via~\eqref{eq:RL_QP}. Moreover, another crucial issue is that the functionality of $a_{Safe}(s)$ is too monotonous to guide the policy exploration, and the corresponding RL policy $\boldsymbol{\pi}^{*}$ may always linger around the margins of unsafe sets, as illustrated in Fig.~\ref{fig:guiding} (a).

\subsection{Exploration Guiding \label{subsec:guiding}} 

\begin{figure}
	\centering{}\includegraphics[scale=0.35]{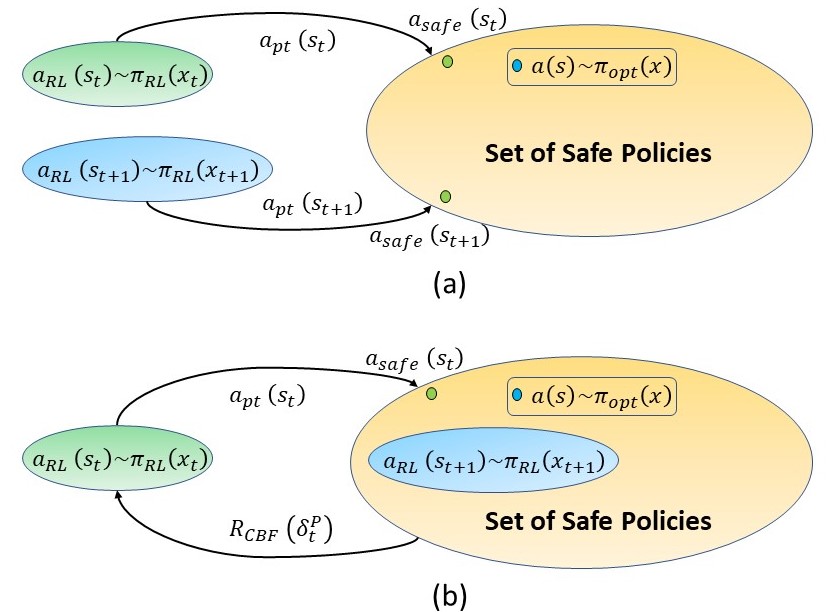}\caption{\label{fig:guiding} Illustration of the improvement of exploration guiding. (a) Policy optimization without enforcing exploration where the RL policies receive no information on the safety requirements. (b) Policy optimization with automaton-based guiding to enforce the RL policy generated within the set of safety policies.}
\end{figure}

\begin{figure}
	\centering{}\includegraphics[scale=0.4]{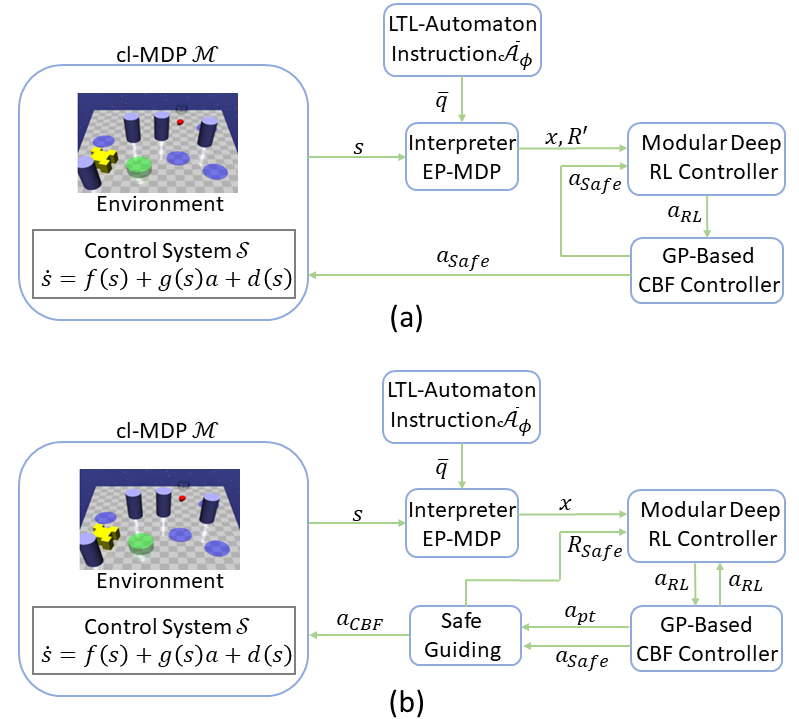}\caption{\label{fig:safe_guiding} Control synthesis integrates with the modular RL controllers and GP-based CBF controllers. (a) Modular RL architecture that directly uses the CBF-based safe controllers without enforcing exploration. (b) Framework enabled with safe guiding and improvement of learning efficiency.}
\end{figure}

\begin{algorithm}
\caption{\label{Alg2} Safe Learning: LTL-Guided Control Algorithm}


\small 
\singlespacing

\begin{algorithmic}[1]

\Procedure {Input: } {Dynamic system $\mathcal{S}$ , LDGBA $\mathcal{{A}}_{\phi}$}

{Output: } {Optimal policy $\boldsymbol{\pi}^{*}$
} 

{Initialization: } {$\left|Q\right|$ actor $\boldsymbol{\pi}_{q_{i}}\left(x\left|\theta^{u_{q_{i}}}\right.\right)$
and critic networks $Q_{q_{i}}\left(x,u^{\mathcal{P}}\left|\theta^{Q_{q_{i}}}\right.\right)$
with arbitrary weights $\theta^{u_{q_{i}}}$ and $\theta^{\boldsymbol{Q}_{q_{i}}}$
for all $q_{i}\in Q$}; 
{$\left|Q\right|$ corresponding target
networks $\boldsymbol{\pi}_{q_{i}}'\left(x\left|\theta^{u_{q_{i}}'}\right.\right)$
and $Q_{q_{i}}'\left(x,u^{\mathcal{P}}\left|\theta^{Q_{q_{i}}'}\right.\right)$
with weights $\theta^{u_{q_{i}}'}$ and $\theta^{Q_{q_{i}}'}$ for
each $q_{i}\in Q$, respectively};
{$\left|Q\right|$ replay buffers
$B_{q_{i}}$}; {$\left|Q\right|$ random processes noise $N_{q_{i}}$}; 
{a global reply buffer $B_{\mathcal{GP}}$ for GP model}

\State set maximum episodes $E$ and iteration number $\tau$

\State \textbf{Repeat} Each Episode

\State sample an initial state $s_{0}$ of $\mathcal{M}$ and $q_{0}$
of $\mathcal{{A}}_{\phi}$ as $s_{t},q_{t}$


\While {$t\leq\tau$ }

\State select action $u_{t}^{\mathcal{P}}=\boldsymbol{\pi}_{q_{t}}\left(x\left|\theta^{u_{q_{i}}}\right.\right)+R_{q_{t}}$.

\State set $a_{RL}(s_{t})=u_{t}^{\mathcal{P}}$ and obtain $a_{Safe}(s_{t})$ via QP (\ref{eq:RL_QP}) as:
\[
\begin{aligned}
\left(a_{pt},\epsilon\right)=\underset{a,\epsilon}{\arg\min}\;\frac{1}{2}(a_{RL}(s)+a)^{T}H(s)(a_{RL}(s)+a)+K_{\epsilon}\epsilon\\
\begin{array}{cc}
\text{s.t} & L^{r_{b}}_{\hat{f}}h(s)+L_{g}L^{r_{b}-1}_{\hat{f}}h(s)(a_{RL}(s)+a)+K_{b}\hat{\xi_{b}}+\epsilon\geq 0\\
 & a_{low}^{(i)}\leq (a_{RL}(s)+a)^{(i)}\leq a_{high}^{(i)}, i\in\left\{0,\ldots,n\right\},
\end{array}
\end{aligned}
\]

\State execute $a_{Safe}(s_{t})$ and observe
$$x_{t+1}=\left(s_{t+1},q_{t+1},T\right), R'\left(x_{t},u_{t}^{\mathcal{P}},x_{t+1}\right), \gamma\left(x_{t+1}\right)$$

\State execute three-step safe guiding process in Def.~\ref{def:safe_guding}: 
$$x_{t+1}=\left(s_{t+1},q_{t+1},T\right), R_{CBF}\left(x_{t},u_{t}^{\mathcal{P}},x_{t+1}\right)$$

\State store the transition information in replay buffers $B_{q_{t}}$ and {\scriptsize{}\hspace*{3.8em}}global reply buffer $B_{\mathcal{GP}}$:
$$\left(x_{t},a_{RL}(s_{t}),R_{CBF}\left(x_{t},u_{t}^{\mathcal{P}},x_{t+1}\right),x_{t+1}\right)$$

\State execute the updates via $f_{V}\left(q_{t+1},T\right)$
and $f_{\Phi}\left(q_{t+1},T_{\Phi}\right)$

\State update GP model using measurements from global {\scriptsize{}\hspace*{3.9em}}reply buffer as:
$$\hat{u}_{i}(s_{*})=\hat{k}^{T}_{i}(K_{i}+\sigma^{2}_{noise}\boldsymbol{I}_{N_{m}})^{-1}y_{i}$$
$$\hat{\sigma}^{2}_{i}(s_{*})=k_{i}((s_{*},(s_{*})-\hat{k}^{T}_{j}(K_{j}+\sigma^{2}_{noise}\boldsymbol{I}_{N_{m}})^{-1}\hat{k}_{i}$$

\State apply Alg.~\ref{Alg3} to update the modular actor-critic architecture

\EndWhile

\State \textbf{Until} End of trial

\EndProcedure

\end{algorithmic}
\end{algorithm}

In order to achieve safe and efficient guiding, the work in \cite{Cheng2019} estimates the previous history of CBF perturbations to improve the efficiency of the learning process. However, such a design
may negatively impact the exploration of DDPG for original optimal
policies proposed in Section~\ref{sec:RL}, and it can not provide formal guarantees on LTL satisfaction.

To overcome these challenges, we propose an automaton-based guiding process combining the properties of E-LDGBA and ECBF. Given an LTL formula of the form $\phi=\oblong\phi_{safe}\land\phi_{g}$, where $\Always\phi_{safe}$ is the specification to render the safe set $\mathcal{C}$ forward invariant, the intuition for (\ref{eq:safe_control}) is that $a_{pt}(s)\neq0$ implies the RL controller $a_{RL}(s)$ is unsafe, i.e., it violates $\oblong\phi_{safe}$ in $\phi$. Consequently, the objective of efficient exploration is to encourage the RL-agent operating in the safe region $\mathcal{C}$, and to enforce $a_{pt}(s)$ decaying to zero.

Recall during the learning process, we have $x_{t}=(s_{t},\overline{q}_{t})$, $\boldsymbol{\pi}_{t}$ at time $t$, and obtain $a_{RL}(s_{t})$ from~\eqref{eq:action_mapping} as the corresponding $u_{t}^{\mathcal{P}}\sim \boldsymbol{\pi}_{t}$.
GP-based QP \eqref{eq:RL_QP} generates $a_{pt}(s_{t})$ and $a_{Safe}(s_{t})$. The next state $x_{t+1}=(s_{t+1},\overline{q}_{t+1})$ is generated by taking the safe action $a_{Safe}(s_{t})$ s.t. $p_{S}(\left.s_{t+1}\right|s_{t},a_{Safe}(s_{t}))\neq 0$ and $\overline{q}_{t+1}=\overline{\delta}(\overline{q}_{t},L(s_{t}))$.
 The automaton-based safe guiding consists of three steps: ECBF-based reward shaping, violation-based automaton updating, and relay buffer switching. The procedure of safe guiding is shown in Fig.~\ref{fig:safe_guiding} (b), compared with the method of directly integrating the safe execution in Fig.~\ref{fig:safe_guiding} (a).
 
 \begin{defn}\label{def:safe_guding}
Given the transition $\delta^{\mathcal{P}}_{t}=(x_{t},a_{Safe}(s_{t}),x_{t+1})$, where $x_{t}=(s_{t},\overline{q}_{t})$, $a_{RL}(s_{t})$ is obtained from $u_{t}^{\mathcal{P}}\sim \boldsymbol{\pi}_{t}$ as \eqref{eq:action_mapping}, $a_{Safe}(s_{t})=a_{RL}(s_{t})+a_{pt}(s_{t})$, and $x_{t+1}=(s_{t+1},\overline{q}_{t+1})$ is generated based on Def.~\ref{def:P-MDP} during learning process, the three-step exploration guiding is defined as
\\
(1) The reward is shaped based on the safe properties:
\begin{equation}
R_{CBF}\left(\delta^{\mathcal{P}}_{t}\right)= \begin{cases}
r_{n}\cdot |a_{pt}(s_{t})|, & \text{if }a_{pt}(s_{t})\neq 0,\\
R'\left(\delta^{\mathcal{P}}_{t}\right), & \text{if }a_{pt}(s_{t})=0,
\end{cases}\label{eq:CBF_reward}
\end{equation}
where $r_{n}\in\mathbb{R}$ is a constant parameter s.t. $r_{n}<0$;
\\
(2) The automaton component $\overline{q}_{t+1}$ of product state $x_{t+1}$ is updated:
\begin{equation}
\overline{q}_{t+1}= \begin{cases}
\overline{q}\text{ s.t. }\overline{q}\in\overline{Q}_{unsafe}, & \text{if }a_{pt}(s_{t})\neq 0\\
\overline{\delta}(\overline{q}_{t},L(s_{t})), & \text{if }a_{pt}(s_{t})=0.
\end{cases}\label{eq:CBF_automaton}
\end{equation}
\\
(3) Instead of storing information $a_{Safe}(s_{t})$ and $R'\left(\delta^{\mathcal{P}}_{t}\right)$, add $\left(x_{t},a_{RL}(s_{t}),R_{CBF}\left(\delta^{\mathcal{P}}_{t}\right),x_{t+1}\right)$ to replay buffer for training.
\end{defn}

In \eqref{eq:CBF_reward}, $r_{n}<0$ s.t. $r_{n}\cdot|a_{pt}(s_{t})|$ represents how much the $a_{RL}(s_{t})$ violates the safety constraint $\oblong\phi_{safe}$, i.e., propositional to the absolute value of CBF compensators. Similarly, \eqref{eq:CBF_automaton} switches the automaton state $\overline{q}_{t+1}$ to an unsafe state in $\overline{Q}_{unsafe}$, see Def.~\ref{def:Non-accepting unsafe sink}.
Different from~\cite{Cheng2019} that takes $a_{Safe}(s_{t})$ into the replay buffer for training, the third step keeps the original modular RL controller $a_{RL}(s_{t})$ and integrates it with the shaped reward $R_{CBF}$ into training.
As for actor-critic methods of the modular RL, such a design can improve the efficiency of exploration and stabilize the learning results, since the controllers $a_{RL}$ in the relay buffer are generated from the policy distributions of the actor, whereas controllers $a_{Safe}$ are only for safe execution and are not consistent with the outputs of the modular actor-critic architecture.
Finally, we show that the original optimal policies generated in Section~\ref{sec:RL} remain invariant via the safe learning and guiding processes.

\begin{thm}
    \label{thm:safe_RL} Given a cl-MDP $\mathcal{M}$ and an E-LDGBA
	$\mathcal{\overline{A}}_{\phi}$, the optimal policy $\boldsymbol{\pi}^{*}$ generated by combining the modular DDPG and three-step exploration guiding procedure in Def.~\ref{def:safe_guding} maximizes the probability of satisfying $\phi$ in the limit.
\end{thm}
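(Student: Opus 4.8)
The plan is to reduce Theorem~\ref{thm:safe_RL} to Proposition~\ref{prop:shaped_probability} (equivalently Theorem~\ref{thm:conclusion}) by arguing that the three-step guiding procedure of Def.~\ref{def:safe_guding} does not change the optimal policy of the underlying P-MDP. The key observation is that the guiding process only modifies the learning target on transitions where $a_{pt}(s_t)\neq 0$, and by Lemma~\ref{lem:safe_policy} every policy satisfying $\phi$ — in particular the optimal policy $\boldsymbol{\pi}^*$ — lies in $\boldsymbol{\Pi}_{safe}$, hence induces $a_{pt}(s_t)=0$ along its trajectories almost surely (with the GP confidence $(1-\delta)^n$ of Theorem~\ref{thm:GP_CBF}). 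So on the support of $\boldsymbol{\pi}^*$ the shaped reward $R_{CBF}$ coincides with $R'$ and the automaton update coincides with $\overline{\delta}$, i.e., the guided P-MDP and the original P-MDP agree.

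First I would make precise the "guided P-MDP": the modifications in~\eqref{eq:CBF_reward} and~\eqref{eq:CBF_automaton} can be absorbed into a modified product MDP $\mathcal{P}_{CBF}$ in which any state-action pair whose executed control triggers a nonzero compensation $a_{pt}$ is redirected to $\overline{Q}_{unsafe}$ (a sink, by Def.~\ref{def:Non-accepting unsafe sink}) and receives the strictly negative reward $r_n\cdot|a_{pt}(s_t)|<0$; all other transitions are those of $\mathcal{P}$ with reward $R'$ and discount as in Section~\ref{sec:RL}. Second, I would show that in $\mathcal{P}_{CBF}$ any policy that ever triggers $a_{pt}\neq 0$ with positive probability is strictly suboptimal: it collects a negative reward and is trapped in a non-accepting sink, so its return is dominated by the return of any safe policy (which, by Assumption~\ref{assu:task} and Proposition~\ref{prop:shaped_probability}, achieves the maximal satisfaction probability and hence a positive expected return as $\gamma_F\to 1^-$). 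Therefore the optimal policy of $\mathcal{P}_{CBF}$ never triggers compensation and stays within $\boldsymbol{\Pi}_{safe}$. Third, restricted to $\boldsymbol{\Pi}_{safe}$, $\mathcal{P}_{CBF}$ is identical to $\mathcal{P}$ with the shaped reward $R'$, so by Proposition~\ref{prop:shaped_probability} the optimal policy of $\mathcal{P}_{CBF}$ — which equals the output $\boldsymbol{\pi}^*$ of the modular DDPG with guiding — maximizes $\Pr^{\boldsymbol{\pi}}[x_0\models Acc_{\mathcal{P}}]$, hence the probability of satisfying $\phi$, in the limit.

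The remaining point is to justify that the modular DDPG architecture of Section~\ref{subsec:RL} still converges (in the limit, in the usual asymptotic sense taken throughout the paper) to this optimal policy despite the buffer-switching of step~(3): since step~(3) stores $(x_t,a_{RL}(s_t),R_{CBF},x_{t+1})$ rather than $(x_t,a_{Safe}(s_t),R',x_{t+1})$, the data fed to the actor-critic is precisely the on-policy data of $\mathcal{P}_{CBF}$ for the policy $\boldsymbol{\pi}$ actually parameterized by the actor (whose outputs are $a_{RL}$, not $a_{Safe}$). Thus the policy-gradient update~\eqref{eq:actor_update} is a consistent estimator for $\mathcal{P}_{CBF}$, and optimality transfers by the argument above. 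I expect the main obstacle to be the second step: carefully showing that triggering $a_{pt}\neq 0$ is \emph{strictly} worse than any satisfying policy, which requires combining (a) that $\overline{Q}_{unsafe}$ is an absorbing non-accepting component, so once entered the acceptance condition is lost, with (b) the limiting reward analysis of Theorem~\ref{thm:conclusion} guaranteeing that a maximal-probability satisfying policy has strictly larger expected return than a policy stuck in a sink with negative one-step reward — and doing so uniformly so that the comparison survives the $\gamma_F\to 1^-$ limit. A secondary subtlety is the GP confidence level: $a_{pt}=0$ along $\boldsymbol{\pi}^*$ only holds with probability $(1-\delta)^n$ by Theorem~\ref{thm:GP_CBF}, so the statement should be read as holding within that confidence, or with $\delta$ driven to $0$ as the GP model is refined (cf. Proposition~\ref{prop:GP-estimation}); I would state this explicitly rather than sweep it under the rug.
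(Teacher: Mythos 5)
Your proposal is correct and follows essentially the same route as the paper: the paper's Appendix proof is exactly your ``second step,'' a contradiction argument showing (via the transient/recurrent-class decomposition of the induced Markov chain) that any policy entering $\overline{Q}_{unsafe}$ with non-zero probability has non-positive expected return on its recurrent classes, while a policy satisfying the acceptance condition has strictly positive return as $\gamma_{F}\shortrightarrow 1^{-}$, so the negative-reward/sink modification preserves the optimum of Proposition~\ref{prop:shaped_probability}. Your additional remarks on the replay-buffer consistency of step~(3) and on the $(1-\delta)^{n}$ GP confidence are sound caveats that the paper leaves implicit.
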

\begin{proof}
Safe guiding efficiently enhances the rapid decaying of $a_{pt}(s_{t})$ to $0$.
Based on Def.~\ref{def:safe_guding}, the intuition of safe guiding is to bridge the connection between the unsafe component $\overline{Q}_{unsafe}$ of E-LDGBA and ECBF controllers $a_{pt}(s_{t})$ s.t. $a_{pt}(s_{t}0\neq 0$ indicates the original $a_{RL}(s_{t})$  violates the safe requirement of $\phi=\oblong\phi_{safe}\land\phi_{g}$.
Consequently, the objective is to verify that assigning negative reward to the states $x$ with automaton components $q\in\overline{Q}_{unsafe}$ preserves optimal solutions in Proposition~\ref{prop:shaped_probability} and Theorem~\ref{thm:shaped_probability}. We prove this by contradiction in Appendix~\ref{apped:thm:safe_RL}.
\end{proof}

Safe guiding integrates the property of violation of the LTL formula $\phi=\oblong\phi_{safe}\land\phi_{g}$ and the safe set to ensure efficient exploration. Thus, $a_{pt}(s_{t})$ gradually decays to $0$  and becomes inactive.
The overall structure pushes the RL policies generated from the set of safe polices without altering the original optimality, while maintaining safety during learning.

\section{Algorithm Summary\label{sec:Solution}}

\begin{algorithm}
\caption{\label{Alg3} Modular Deep Deterministic Policy Gradient}
\small 
\singlespacing

\begin{algorithmic}[1]

\Procedure {Input: } {Actor $\boldsymbol{\pi}_{q_{t}}\left(x\left|\theta^{u_{q_{t}}}\right.\right)$, critic $Q_{q_{t}}\left(x,u^{\mathcal{P}}\left|\theta^{Q_{q_{t}}}\right.\right)$, corresponding target
networks $\boldsymbol{\pi}_{q_{t}}'\left(x\left|\theta^{u_{q_{t}}'}\right.\right)$
and $Q_{q_{t}}'\left(x,u^{\mathcal{P}}\left|\theta^{Q_{q_{t}}'}\right.\right)$, reply buffer $B_{q_{t}}$}

{Output: } {Updated weights $\theta^{u_{q_{t}}}$, $\theta^{\boldsymbol{Q}_{q_{t}}}$, $\theta^{u_{q_{t}}'}$, and $\theta^{Q_{q_{t}}'}$
} 

\State calculate target values for each $i\in N$ 
(mini-batch sampling {\scriptsize{}\hspace*{2.0em}}of $B_{q_{t}}$) as:

\[
{y_{i}=R'\left(x_{i},u_{i}^{\mathcal{P}},x_{i+1}\right)+\gamma\left(x_{i}\right)\cdot Q_{q_{i+1}}'\left(x_{i+1},u_{i+1}^{\mathcal{P}}\left|\theta^{Q_{q_{i+1}}'}\right.\right)}
\]

\State update weights $\theta^{Q_{q_{t}}}$ of critic neural network
{\scriptsize{}\hspace*{2.0em}} $Q_{q_{t}}\left(x,u^{\mathcal{P}}\left|\theta^{Q_{q_{t}}}\right.\right)$ by minimizing the loss function:

\[
L=\frac{1}{N}\stackrel[i=1]{N}{\sum}\left(y_{i}-Q_{q_{t}}\left(x_{i},u_{i}^{\mathcal{P}}\left|\theta^{Q_{q_{t}}}\right.\right)\right)^{2}
\]

\State update weights $\theta^{u_{q_{t}}}$ of actor neural network
$\boldsymbol{\pi}_{q_{i}}\left(x\left|\theta^{u_{q_{t}}}\right.\right)$ {\scriptsize{}\hspace*{2.0em}}by maximizing the policy gradient:

$$\begin{aligned}\nabla_{\theta^{u_{q_{t}}}}U^{q_{t}}\thickapprox & \frac{1}{N}\stackrel[i=1]{N}{\sum}\left(\nabla_{u^{\mathcal{P}}}Q_{q_{t}}\left(x_{i},u^{\mathcal{P}}\left|\theta^{Q_{q_{t}}}\right.\right)\left|_{u^{\mathcal{P}}=\boldsymbol{\pi}_{q_{t}}\left(x_{i}\left|\theta^{u_{q_{t}}}\right.\right)}\right.\right.\\
 & \left.\cdot\nabla_{\theta^{u_{q_{t}}}}\boldsymbol{\pi}_{q_{t}}\left(x_{i}\left|\theta^{u_{q_{t}}}\right.\right)\right)
\end{aligned}$$

\State soft update of target networks:

$$
\begin{array}{c}
\theta^{u_{q_{t}}'}\leftarrow\tau\theta^{u_{q_{t}}}+\left(1-\tau\right)\theta^{u_{q_{t}}'}\\
\theta^{Q_{q_{t}}'}\leftarrow\tau\theta^{Q_{q_{t}}}+\left(1-\tau\right)\theta^{Q_{q_{t}}'}
\end{array}\label{eq:soft_update}
$$

\EndProcedure

\end{algorithmic}
\end{algorithm}

The idea of enforcing optimal policies is illustrated in Fig.~\ref{fig:guiding} (b) and Fig.~\ref{fig:safe_guiding} (b). The overall procedure of safe learning and guiding is illustrate in  Alg.~\ref{Alg2}.
The procedure to synthesize the P-MDP between
a continuous MDP and an automaton of the
LTL specification is summarized in  Alg.~\ref{Alg2}. Line (6-10). Instead of constructing the P-MDP a priori, product states of P-MDP are synthesized
on-the-fly. Note that for each iteration we first observe the output of the shaped reward function $R'$, then execute the update process via $f_{V}$
and $f_{\Phi}$ (line 10-11). In line7, the safe "shield" QP~\eqref{eq:RL_QP} synthesizes the minimum perturbation to generate safe controllers. The GP model is updated online during training by sampling measurements from the global reply buffer $B_{\mathcal{GP}}$ (line 12). 

We assign each DDPG an individual
replay buffer $B_{q_{i}}$ and a random process noise $N_{q_{i}}$.
The corresponding weights of modular networks, i.e., $Q_{q_{i}}\left(x,u^{\mathcal{P}}\left|\theta^{\boldsymbol{Q}_{q_{i}}}\right.\right)$
and $\boldsymbol{\pi}_{q_{i}}\left(x\left|\theta^{u_{q_{i}}}\right.\right)$,
are also updated at each iteration (line 13) via Alg. \ref{Alg3}. 

Alg. \ref{Alg3} shows the procedure of Section \ref{subsec:RL}. In Alg. \ref{Alg3}, all neural networks
are trained using their own replay buffer, which is a finite-sized cache that
stores transitions sampled from exploring the environment. Since the direct
implementation of updating target networks can be unstable and divergent
\cite{Mnih2015}, soft update \eqref{eq:soft_update} is employed,
where target networks are slowly updated via relaxation (line 5). 

\begin{thm}
    \label{thm:shaped_probability} Given a cl-MDP $\mathcal{M}$ and an E-LDGBA
	$\mathcal{\overline{A}}_{\phi}$, the optimal policy $\boldsymbol{\pi}^{*}$ generated from modular DDPG by applying the shaped reward (\ref{eq:Shaped_Reward}) maximizes the probability of satisfying $\phi$ in the limit i.e., $\Pr^{\boldsymbol{\pi}^{*}}\left[x_{0}\models Acc_{\mathcal{P}}\right] = \Pr_{max}\left[x_{0}\models Acc_{\mathcal{P}}\right]$.
\end{thm}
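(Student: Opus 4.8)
\textbf{Proof proposal for Theorem~\ref{thm:shaped_probability}.}
The plan is to decompose the claim into two independent reductions and then invoke results already established in the excerpt. The statement asserts that the optimal policy produced by the \emph{modular} DDPG architecture under the \emph{shaped} reward~\eqref{eq:Shaped_Reward} maximizes $\Pr^{\boldsymbol{\pi}}[x_0 \models \Acc_{\mathcal{P}}]$ in the limit. There are exactly two differences between this setup and the setup of Theorem~\ref{thm:conclusion} (where the conclusion is already known): (i) the reward has been transformed by potential-based shaping via $\Phi$, and (ii) the single-agent DDPG has been replaced by the modular architecture of Section~\ref{subsec:RL}. So the first step is to argue that neither modification changes the set of optimal policies, and then to chain back to Theorem~\ref{thm:conclusion} in the $\gamma_F \to 1^-$ limit.

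First I would handle the reward-shaping reduction. This is exactly Proposition~\ref{prop:shaped_probability}: by the potential-based shaping theorem of Ng et al.~\cite{Ng1999}, optimizing $U'^{\boldsymbol{\pi}}(x)$ with the shaped reward $R'$ of~\eqref{eq:Shaped_Reward} yields the same optimal policy set as optimizing $U^{\boldsymbol{\pi}}(x)$ with the base reward $R$ of~\eqref{eq:reward_function}. One subtlety I would check explicitly is that the shaping here uses the state-dependent discount $\gamma(x)$ in the term $\gamma(x)\Phi(x') - \Phi(x)$, so I would note that Ng et al.'s invariance result extends to state-dependent discounting (the telescoping-sum argument goes through: along any trajectory, the shaping contributions collapse to a boundary term $-\Phi(x_0)$ that is policy-independent, plus a term that vanishes because $\Phi$ is bounded and the effective discount is strictly less than one for non-accepting states). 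Then by Theorem~\ref{thm:conclusion}, in the limit $\gamma_F \to 1^-$, the base-reward optimal policy maximizes $\Pr^{\boldsymbol{\pi}}[x_0 \models \Acc_{\mathcal{P}}]$, so the shaped-reward optimal policy does too.

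Second I would handle the modular-architecture reduction. The key observation is that the modular DDPG is not a different \emph{objective} — it is a different \emph{parameterization and training scheme} for representing and learning a single policy $\boldsymbol{\pi}: X \to U^{\mathcal{P}}$ over the P-MDP. Concretely, the modular policy is the piecewise assembly $\boldsymbol{\pi}(x) = \boldsymbol{\pi}_{q}(x \mid \theta^{u_q})$ for $x = (s, q, T)$, and since the automaton component $q$ is a deterministic function of the current product state, this assembled map is a well-defined stationary deterministic policy on $\mathcal{P}$. The replay buffers $B_{q_i}$ and the per-module actor-critic updates in Alg.~\ref{Alg3} are a distributed realization of the standard DDPG policy-gradient update~\eqref{eq:actor_update} restricted to the slice of state space visited with automaton component $q_i$; because the modules jointly cover all of $X$ and the target values $y_i$ correctly propagate the critic $Q_{q_{i+1}}'$ of the \emph{successor} module, the fixed points of the joint update coincide with the fixed points of monolithic DDPG on $\mathcal{P}$. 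Hence the modular architecture converges (in the idealized function-approximation limit) to the same optimal value function and optimal policy as monolithic DDPG under the shaped reward. Combining with the first reduction gives the claim.

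The main obstacle, and the step I would spend the most care on, is the modular reduction — specifically, justifying that decomposing the policy-gradient optimization across per-automaton-state modules with separate replay buffers does not introduce spurious fixed points or lose optimal policies. Unlike the reward-shaping step, which is a clean citation of~\cite{Ng1999}, this requires an argument that the modular Bellman backups are consistent: one must verify that for a transition $(x_i, u_i^{\mathcal{P}}, x_{i+1})$ stored in $B_{q_i}$, bootstrapping off $Q_{q_{i+1}}'$ rather than $Q_{q_i}'$ yields the same Bellman operator as the monolithic one, and that the union of modular stationary distributions $\rho^{\beta}$ has full support on the reachable product state space so no optimal action is systematically excluded. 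I would either make this precise with a short lemma showing the modular critic tuple is a solution of the global Bellman equation iff the monolithic critic is, or — if a fully rigorous convergence claim for deep RL is out of scope — state the result at the level of the underlying tabular/exact optimization and remark that the deep implementation approximates it, consistent with how Theorem~\ref{thm:conclusion} and Proposition~\ref{prop:shaped_probability} are phrased.
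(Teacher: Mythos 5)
Your proposal is correct and its core reduction is the same as the paper's: the paper proves this theorem by citing Proposition~\ref{prop:shaped_probability} (i.e., the potential-based shaping invariance of Ng et al.\ chained back to Theorem~\ref{thm:conclusion} in the $\gamma_F\to 1^-$ limit) and then appending exactly the caveat you anticipate --- that exact optimization of all state-action values is impractical over continuous spaces, so the trained policy may be sub-optimal with respect to the true $\boldsymbol{\pi}^{*}$. Where you diverge is that you explicitly identify and argue a second reduction the paper's proof passes over in silence: that the \emph{modular} architecture (per-automaton-state actors, critics, and replay buffers, with bootstrapping off the successor module's target critic) is only a re-parameterization of a single stationary deterministic policy on $\mathcal{P}$ whose joint Bellman fixed points coincide with those of monolithic DDPG. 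The paper's one-line proof simply asserts the theorem ``follows directly from Proposition~\ref{prop:shaped_probability}'' (and leans on Theorem~\ref{thm:safe_RL}, whose own proof in turn references this theorem, which is somewhat circular), so your treatment is the more complete one; the cost is that your modular-consistency lemma is itself only sketched, and a fully rigorous version would need the support condition on $\rho^{\beta}$ you flag. Your fallback --- stating the result at the exact-optimization level and remarking that the deep implementation approximates it --- is precisely the register in which the paper states and ``proves'' the theorem.
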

\begin{proof}
Theorem~\ref{thm:shaped_probability} follows directly from Proposition~\ref{prop:shaped_probability}, since Theorem~\ref{thm:safe_RL} preserves the optimality of Proposition~\ref{prop:shaped_probability}.
However, Proposition~\ref{prop:shaped_probability} and Theorem~\ref{thm:safe_RL} assume that all state-action values can be exactly optimized, which is not in practical when considering continuous space. {\color{black} 
As common for Deep Policy Gradient algorithms used with DNNs, the training process will terminate after reaching the maximum number of episodes, a tunable hyperparameter.} Consequently, we have to stop the training after maximum number of steps in practice, and the synthesised policy derived from this nonlinear regression process might be sub-optimal with respect to the true $\boldsymbol{\pi}^{*}$. 
\end{proof}
Note that our algorithm can be easily extended by replacing other advanced off-policy algorithms e.g., SAC, TD3.


\begin{table}[tb]
	\caption{Baseline variants tested in case study.}
	\label{tab:baselines}
	\centering{}
	\small
	\setlength{\tabcolsep}{2.5pt}
	\begin{tabular}{|m{3.3cm}|m{1.3cm}|m{1.2cm}|m{1.8cm}|}
	    \hline
		\textbf{Baseline} & \textbf{Modular} & \textbf{Safe Module} & \textbf{Exploration Guiding}  \\ 
		\hline
		Modular-DDPG-ECBF (On)-Guiding & $\checkmark$ & $\checkmark$ & $\checkmark$ \\
		\hline
		Modular-DDPG-ECBF Off-Guiding & $\checkmark$ & $\checkmark$ & $\text{\sffamily X}$ \tabularnewline
		\hline
		Modular-DDPG & $\checkmark$ & $\text{\sffamily X}$ & $\text{\sffamily X}$ \tabularnewline
		\hline
		Standard-DDPG &$\text{\sffamily X}$ & $\text{\sffamily X}$ & $\text{\sffamily X}$ \tabularnewline
		\hline 
	\end{tabular}
\end{table}

\section{EXPERIMENTS\label{sec:experiment}}

We demonstrate the framework in several robotic environments with corresponding LTL tasks. To show the effectiveness of safe modular DDPG with guiding enabled, we compare our framework referred to as safe modular with guiding (Modular-DDPG-ECBF-Guiding) with three baselines: (i) safe standard DDPG with guiding (Standard-DDPG-ECBF-Guiding), (ii) modular or standard DDPG without safe module enabled (Standard-DDPG, Modular-DDPG), (iii) safe modular DDPG without guiding (Modular-DDPG-ECBF-off-Guiding).
We therefore consider four variants
of the baselines as summarized in Table~\ref{tab:baselines}.
We used Owl~\cite{Kretinsky2018} to convert LTL specifications
to LDGBA that are then transformed into E-LDGBA.
Various implementations based on OpenAI gym are carried out on a machine with 3.60 GHz quad-core CPU, 16 GB RAM, an external Nvidia RTX 1080 GPU and Cuda enabled. The details of experimental setup can be found in Appendix~\ref{apped:experiment}. 

It is worth pointing out that it is more challenging for RL agents satisfying tasks over infinite horizons. Consequently, we focus on the evaluation of the infinite-horizon formulas and analyze their success rates (Fig.~\ref{fig:success_rate}). The safe sets and control barrier functions are defined separately for each dynamical system, and Fig.~\ref{fig:safety_rate} shows the safety rates of all tasks for different baselines. The video demonstrations can be found on our YouTube channel\footnote{\url{https://youtu.be/liB1Po7oXeo}}.

\subsection{Robotic Joint Dynamics \label{subsec:Cartpole}}

\begin{figure}
	\centering{}
	\includegraphics[scale=0.28]{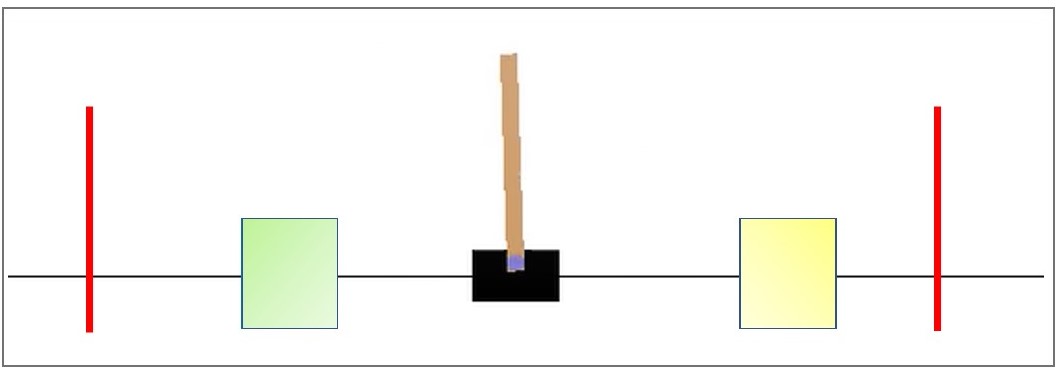}
	\caption{{\color{black} LTL specifications $\phi_{Cart1}$ and $\phi_{Cart2}$ require the Cart-Pole to visit the green and yellow rectangular regions over infinite and finite horizons, respectively, while satisfying safety tasks expressed as ECBF, i.e., the cart never exceeds the red boundaries and the pole never falls down.}}
	\label{fig:Cartpole}
\end{figure}

We first tested our algorithms on controlling two systems in simulated OpenAI gym environments.
The LTL formulas over infinite horizons are of the form $\phi_{g_{1}}=\oblong\lozenge\mathtt{R}_{\mathtt{green}}\land\oblong\lozenge\mathtt{R}_{\mathtt{yellow}}$, which require visiting the green and yellow regions infinitely often while staying within the safe set $\phi_{safe}$.
The tasks over finite horizons have the form $\phi_{g_{2}}=\lozenge\left(\mathtt{R}_{\mathtt{green}}\wedge\lozenge\mathtt{R}_{\mathtt{yellow}}\right)$.

\textbf{Cart-Pole:} The physical simulation of Cart-Pole is shown in Fig.~\ref{fig:Cartpole}.
A pendulum is attached to a cart that moves horizontally along a frictionless track.
The control input is a horizontal force on the cart.
Denote $s=\left[\theta_{p},s_{c}, \dot{\theta}_{p}, \dot{s}_{c}\right]^{T}$ as the state (cart position, pendulum angle and corresponding velocities) of the dynamic system.
Its true dynamics are defined as follows:
\begin{align*}
\ddot{\theta_{p}}&=\frac{(u-d)+m_{1}\cos\theta_{p}\sin\theta_{p}+m_{2}\dot{\theta}^{2}_{p}\sin\theta_{p}}{m_{3}+m_{4}\cos^{2}\theta},
\\
\ddot{s}_{c}&=\frac{\kappa_{1}(u-d)\cos\theta_{p}+\kappa_{2}\dot{\theta}^{2}_{p}\cos\theta_{p}\sin\theta_{p}+\kappa_{3}\sin\theta}{\kappa_{4}+\kappa_{5}\cos^{2}\theta_{p}},
\end{align*}
where external control force $u$ is limited  to $u\in\left[-20N,20N\right]$, and $m_{1}, \ldots,m_{4}$ and $\kappa_{1}, \ldots,\kappa_{4}$ are the physical parameters.
To introduced model uncertainty, we assume $30\%$ error in the physical constants.
The safe set consists of two control barrier functions
$$\mathcal{C}_{1}=\left\{(\theta_{p},s_{c}):(12^{2}-\theta_{p}^{2}\geq0) \land(2.4^{2}-s_{c}^{2}\geq0)\right\}.$$
The corresponding LTL formula that holds the system in the safe set $\mathcal{C}_{1}$ for the current state is $\phi_{safe}=\phi_{\mathcal{C}_{1}}$.
The overall LTL formula over infinite horizon is $\phi_{\mathtt{Cart1}}=\oblong\phi_{\mathcal{C}_{1}}\land\phi_{g_{\mathtt{C1}}}$,
where $\phi_{g_{\mathtt{C1}}}$ requires the agent to periodically visit the green and yellow regions located between $-1.44$m to $-0.96$m, and $0.96$m to $1.44$m, respectively.
The LTL formula over finite horizon is $\phi_{\mathtt{Cart2}}=\oblong\phi_{\mathcal{C}_{1}}\land\phi_{g_{\mathtt{2}}}$,  
where $\phi_{g_{\mathtt{2}}}$ requires the agent to visit the green and yellow regions once. The results of $\phi_{\mathtt{Cart1}}$ are shown in Fig.~\ref{fig:CartPole_rewards},~\ref{fig:CartPole_training_CBFs},~\ref{fig:CartPole_learned_CBFs}.

\begin{figure}
	\centering{}\includegraphics[scale=0.32]{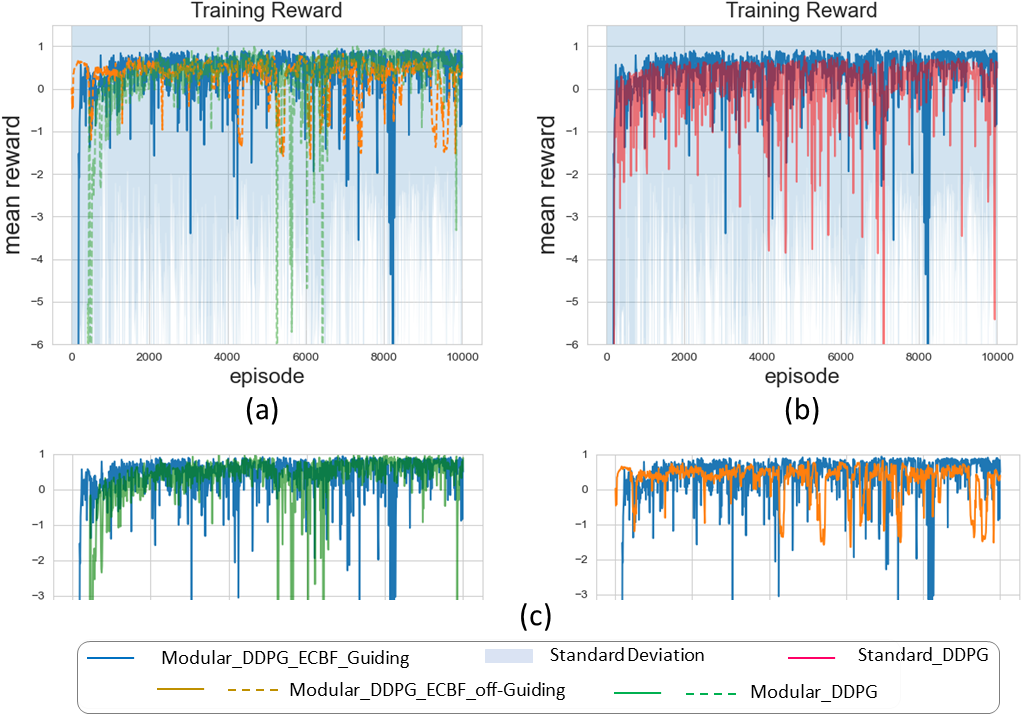}\caption{\label{fig:CartPole_rewards} Cart-Pole: Average training rewards of the LTL formula $\phi_{\mathtt{Cart1}}$. {\color{black} The learning curves are represented with solid and dashed lines represent the mean reward. Sold and dashed lines of the same color correspond to the same baseline.} (a) The results of baselines: Modular DDPG, safe modular with guiding, safe modular without guiding. (b) The comparison between safe modular and standard DDPG. (c) Close-up of the results in (a) for detailed comparison.  }
\end{figure}

\begin{figure}
	\centering{}\includegraphics[scale=0.40]{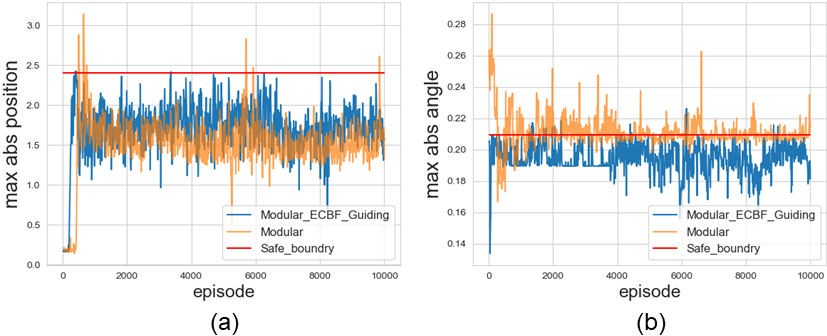}\caption{\label{fig:CartPole_training_CBFs} Cart-Pole: Maximum absolute value of position (a) and angle (b) of the LTL formula $\phi_{\mathtt{Cart1}}$ for baselines modular DDPG and safe modular DDPG during training.}
\end{figure}

\begin{figure}
	\centering{}\includegraphics[scale=0.30]{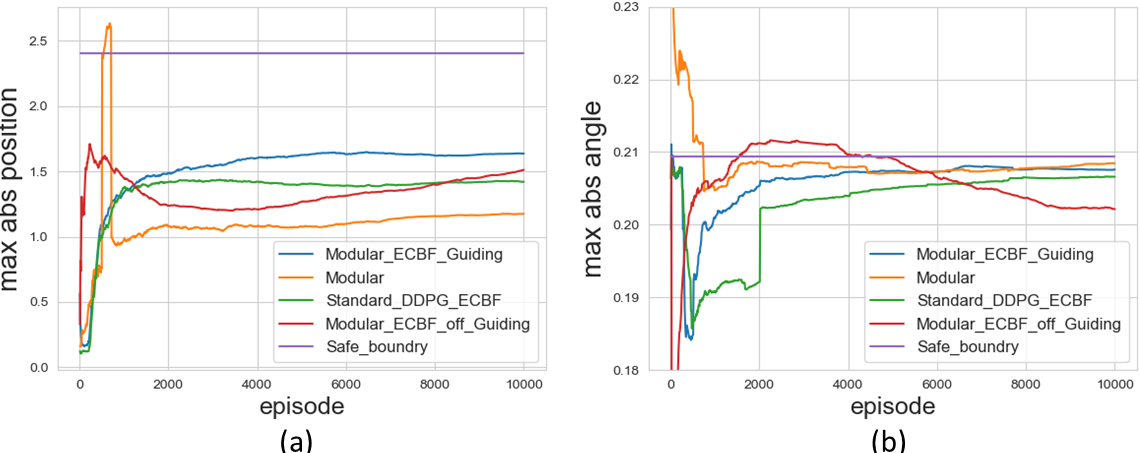}\caption{\label{fig:CartPole_learned_CBFs} Cart-Pole: Maximum absolute value of positions (a) and angle (b) of the LTL formula $\phi_{\mathtt{Cart1}}$ for different baselines during evaluation process.}
\end{figure}

Fig.\ref{fig:CartPole_rewards} compares the mean reward achieved via different baselines. Fig.\ref{fig:CartPole_rewards} (c) extracts the results of  Fig.\ref{fig:CartPole_rewards} (a)  for more detailed comparison. Fig.\ref{fig:CartPole_rewards} (a) and (c) show that safe modular DDPG and modular DDPG achieve the same performance of task satisfaction.

The methods with the safe module and guiding maintain safety during training and do not alter the RL optimality,
while the ones with the safe module and without guiding influence the exploration and RL optimality.

Fig.\ref{fig:CartPole_rewards} (b) shows that the modular DDPG has better performance than the standard DDPG (higher rewards).
Even though there's a slightly higher reward using the modular architecture in Fig.\ref{fig:CartPole_rewards} (b), it influences the success rates of optimal policies completing the task over infinite horizons as shown in Fig.~\ref{fig:success_rate}.

Fig.\ref{fig:CartPole_training_CBFs} and Fig.\ref{fig:CartPole_learned_CBFs} show the absolute value of the maximum angle and position in each episode during the learning and evaluation processes, respectively. First, Fig.\ref{fig:CartPole_training_CBFs} demonstrates the benefits of applying the safe module such that the ECBF compensators, as minimal perturbations, safeguard the RL-agent during learning. Then Fig.\ref{fig:CartPole_learned_CBFs} compares the safe performance against the baselines. Especially, it shows the advantage of the guiding procedure.
Since the dynamics of Cart-Pole is more complex and sensitive, and the RL controllers always steer the systems close to the margin of the safe set,
it makes the safety constraint easier to violate when the exploration guiding is not enabled.
The safety rates of infinite-horizon task $\phi_{\mathtt{Cart1}}$ and finite-horizon task $\phi_{\mathtt{Cart2}}$ using different baselines are shown in Fig.~\ref{fig:safety_rate}.
It illustrates the improvement due to exploration guiding.

\begin{figure}
	\centering{}\includegraphics[scale=0.15 ]{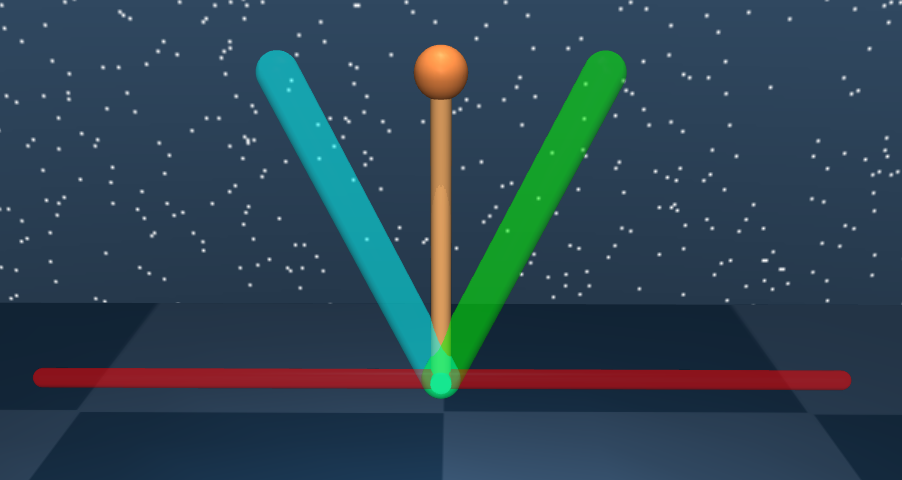}\caption{\label{fig:pendulum}  {\color{black}LTL specifications $\phi_{Pen1}$ and $\phi_{Pen2}$ require the pendulum to visit the green and yellow regions over infinite and finite horizons, respectively, while safeguarding the pendulum that never exceeds the horizontal red boundaries.}}
\end{figure}

\textbf{Inverted-Pendulum:} The physical simulation of inverted-pendulum is shown in Fig.~\ref{fig:pendulum}.
The true system dynamics of state $\theta$ with mass $m$, length $l$ and torque $u$ is
$$\dot{\theta}=\frac{3g}{2l}\sin\theta+\frac{3}{2ml^{2}}u,$$
where torque is limited to $u\in [-15,15]$ and the nominal model has $40\%$ error in the physical parameters.
{\color{black} The safe set is composed of a control barrier function $\mathcal{C}_{2}=\left\{\theta:\frac{\pi}{2}^{2}-\theta^{2}\geq0\right\}$.}
The LTL formula that holds the system in the safe set $\mathcal{C}_{2}$ is $\phi_{safe}=\phi_{\mathcal{C}_{2}}$.
The full LTL formula over infinite horizon is $\phi_{\mathtt{Pen1}}=\oblong\phi_{\mathcal{C}_{2}}\land\phi_{g_{\mathtt{1}}}$, where the blue and green regions are located at $-\frac{\pi}{4}$ rads and $\frac{\pi}{4}$ rads, respectively.
The LTL formula over finite horizon is $\phi_{\mathtt{Pen2}}=\oblong\phi_{\mathcal{C}_{2}}\land\phi_{g_{\mathtt{2}}}$. The analysis of $\phi_{\mathtt{pen1}}$ is shown in Fig.~\ref{fig:pendulum_reward-angle} and~\ref{fig:pendulum_ECBF}.

\begin{figure}
	\centering{}\includegraphics[scale=0.30 ]{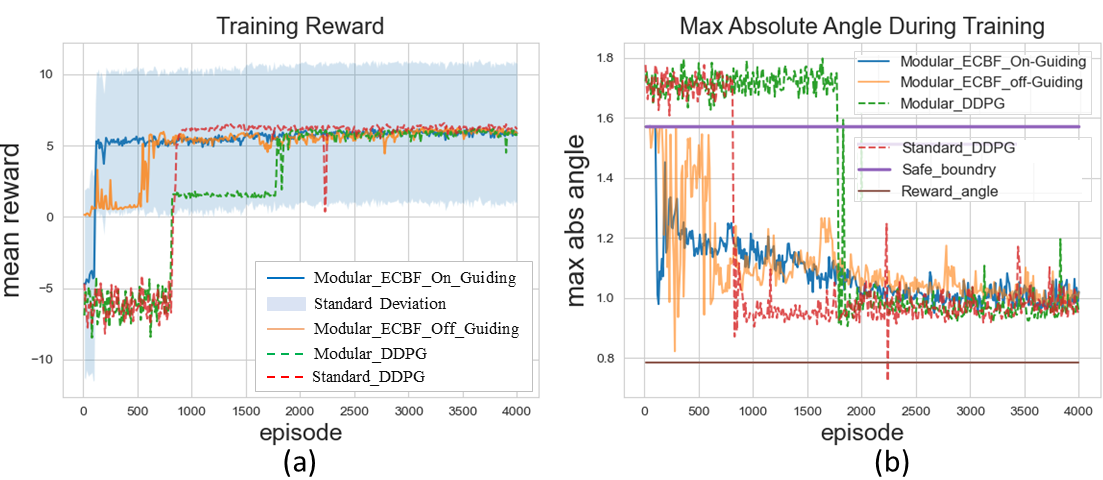}\caption{\label{fig:pendulum_reward-angle} Inverted-Pendulum: reward collection and maximum absolute angle for the formula $\phi_{\mathtt{Pen1}}$. (a) Mean reward. (b) Maximum absolute angle.}
\end{figure}

\begin{figure}
	\centering{}\includegraphics[scale=0.29 ]{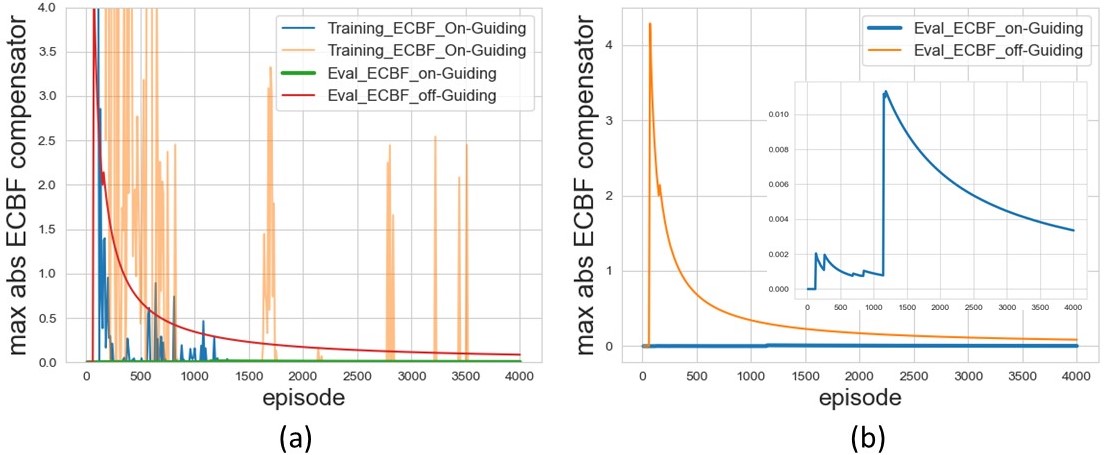}\caption{\label{fig:pendulum_ECBF} Inverted-Pendulum: comparison between the algorithms with and without safe guiding enabled for task $\phi_{\mathtt{Pen1}}$. (left) Maximum absolute value of ECBF compenators during learning. (right) Magnified ECBF compensators during evaluation.}
\end{figure}

Fig.\ref{fig:pendulum_reward-angle} (a) and (b) compare the mean reward and absolute value of maximum angles generated during the learning process via different baselines, respectively.
Fig.\ref{fig:pendulum_reward-angle} (a) shows that the safe modular learning with exploration guiding is more efficient for finding the optimal policies since the guiding module enforces the exploration within the set of safety policies.
In addition, Fig.\ref{fig:pendulum_reward-angle} (b) shows the importance of the safe module during learning.
At the same time, Fig.~\ref{fig:pendulum_ECBF} focuses on illustrating the effectiveness of exploration guiding and compares the baselines during learning and evaluation processes.
By assigning negative rewards when the ECBF controllers are involved, it shows that our algorithm enhances the RL-agent updating within the safe set and leads the output of ECBF compensators to dramatically decay. 

\subsection{Robotic Autonomous Vehicle}
\label{subsec:benchmark}

We tested our algorithms on controlling a robotic car-like model.
Let $s=[p_{x},p_{y}, \theta, v]^{T}$, $u=[a,\omega]^{T}$, and $L$ denote the state (position, heading, velocity), control variables (acceleration, steering angle), and length of the vehicle, respectively.
The dynamics of the model is
$$\dot{s}=[v\cos\theta,v\sin\theta, \frac{\tan\omega}{L}, K_{u}a]^{T}$$
where $K_{u}$ is the physical constant. To model uncertainty,  we set $25\%$ error in the parameters $L$ and $K_{u}$ and add Gaussian noise to the accelerations.

\begin{figure}
	\centering{}\includegraphics[scale=0.26 ]{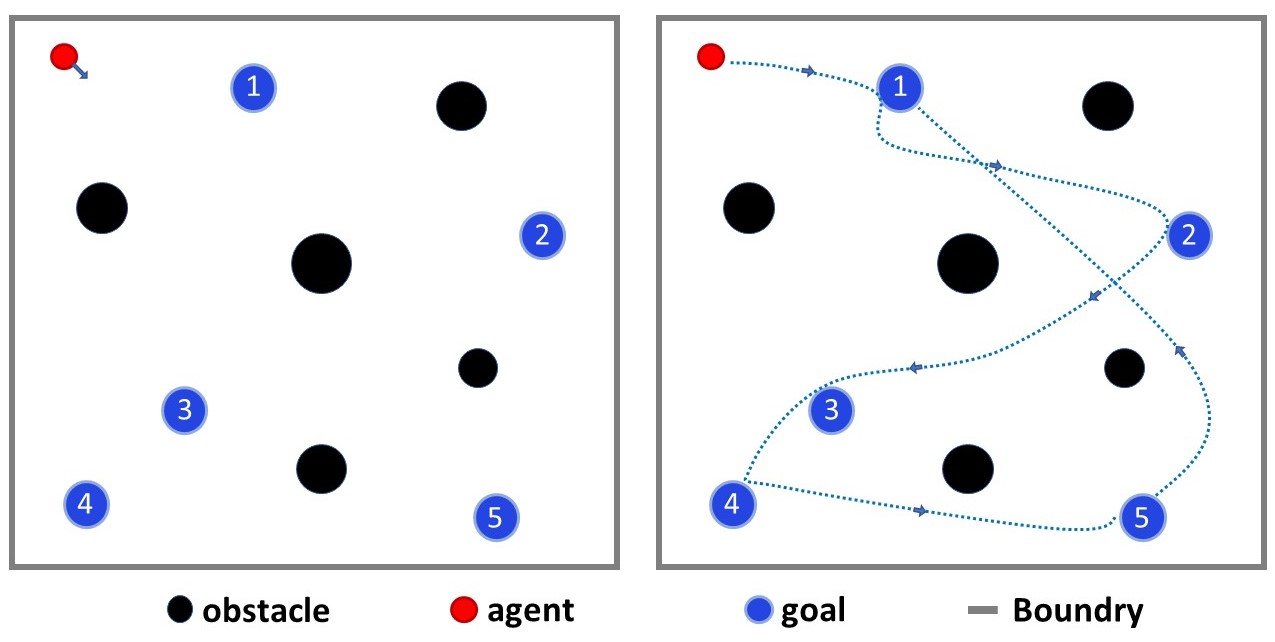}\caption{\label{fig:gym} The left figure shows the initial state of the system, where the red, blue, and black circles represent the mobile robot, the goals, and the obstacles, respectively. In addition, the RL-agent is also required to remain within the rectangular box. The right figure shows the simulated trajectory for the task $\varphi_{Gym_{1}}$ with a repetitive pattern.}
\end{figure}

\textbf{Particle Gym} We first test our algorithm for the Particle Gym as shown in Fig.~\ref{fig:gym}. The two missions require the autonomous vehicle (red circle) to sequentially visit the blue regions numbered $1$ to $5$ over infinite and finite horizons, respectively.
The safety constraint is to always avoid the black obstacles and stay within the rectangular workspace, which is encoded as multiple decentralized ECBFs. The LTL task over the infinite horizon is
\[
\phi_{Gym_{1}}=\oblong(\left(\lozenge\mathtt{R_{1}}\land\lozenge\mathtt{\left(\mathtt{R_{2}}\land\lozenge\mathtt{\ldots\land \mathtt{R_{5}}}\right)}\right))\land\oblong\phi_{\mathtt{C_{3}}},
\]
where $\mathtt{R_{i}}$ is $i$-th blue region indexed with number $i$, and $\varphi_{\mathtt{C_{3}}}$ represents the safety requirements associated with ECBFs. The simulated trajectory of $\varphi_{Gym_{1}}$ for one round of the repetitive satisfaction is shown in Fig.~\ref{fig:gym} (b). Also the task over finite horizon is $\phi_{Gym_{2}}=\left(\lozenge\mathtt{R_{1}}\land\lozenge\mathtt{\left(\mathtt{R{2}}\land\lozenge\mathtt{\ldots\land \mathtt{R_{5}}}\right)}\right)\land\oblong\phi_{\mathtt{C_{3}}}$. The results of mean reward collection for the task $\varphi_{Gym_{1}}$ during training compared with two baselines are shown in Fig.~\ref{fig:Reward_Car} (left), which illustrates better performance of the modular architecture and effectiveness of exploration guiding.

\begin{figure}
	\centering{}\includegraphics[scale=0.30 ]{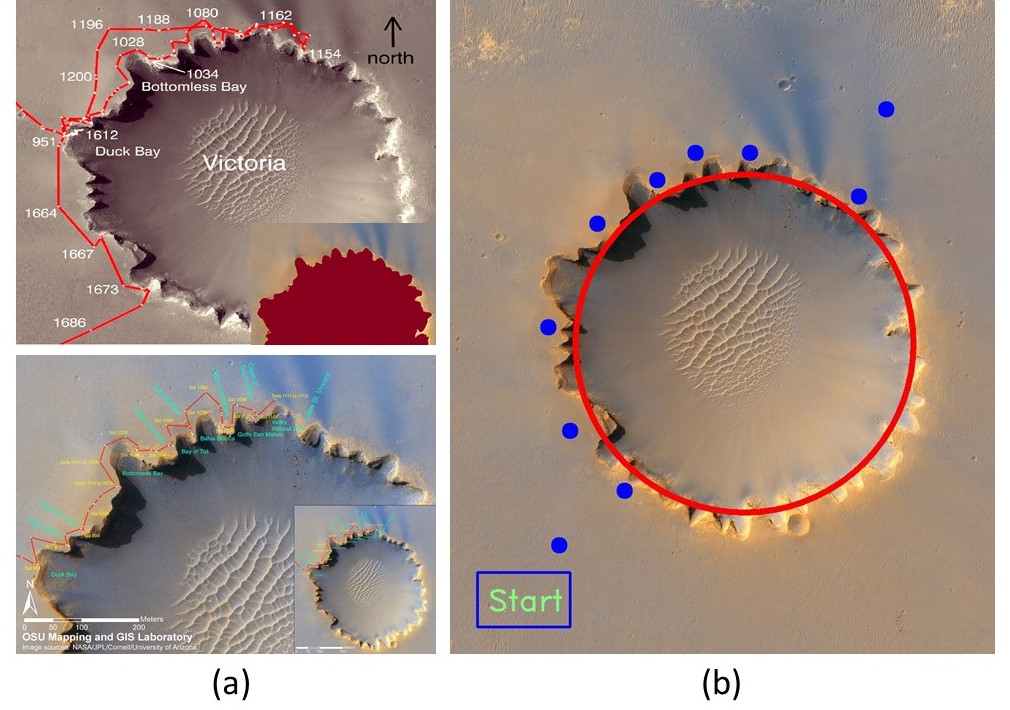}\caption{\label{fig:Mars} {\color{black} LTL Tasks $\phi_{V_{1}}$ and $\phi_{V_{2}}$ for Mars Exploration of the Victoria Crater requires the autonomous rover to visit $10$ spots in a sequential manner while never entering the crater marked as the red circle.} }
\end{figure}

\textbf{Robtic Mars Rover:}  Lastly, we tested our algorithms in a large scale robotic environment,
and used motion planning to complete complex exploration missions using
satellite images as shown in Fig.~\ref{fig:Mars}.
The missions are to explore areas around the Victoria Crater \cite{Squyres2009} shown in Fig.~\ref{fig:Mars}
(a), an impact crater located near the equator of Mars.
Layered sedimentary rocks are exposed along the wall of the crater, providing
information about the ancient surface condition of Mars.
The mission requires visiting all spots along the path of the Mars
rover Opportunity shown in Fig.~\ref{fig:Mars} (b),
and avoiding the unsafe areas (red circle).
The LTL specification of the mission over the infinite horizon is
\[
\phi_{V_{1}}=\oblong\left(\lozenge\mathtt{V_1}\land\lozenge\left(\mathtt{V_2}\land\lozenge(\ldots\land \lozenge (\mathtt{V_{10}}\land \lozenge\mathtt{V_{Start}}\right)\right){\color{orange}})\land\oblong\phi_{\mathtt{C_{4}}},
\]
where $\mathtt{V_{i}}$ denotes the $i$-th target (blue spot) numbered from bottom-left to top-right.
$\varphi_{\mathtt{C_{4}}}$ represents safety requirements (barrier functions) s.t. the agent always avoids the unsafe crater area marked with a red circle in Fig.~\ref{fig:Mars} (b).
The description of the overall task in English is "visit the targets $1$ to $10$ and then return to the start position, repetitively, while avoiding the unsafe regions".
Similarly, the finite horizon task is $\phi_{V_{2}}=\lozenge\mathtt{V_1}\land\lozenge(\mathtt{V_2}\land\lozenge(\ldots\land \lozenge(\mathtt{V_{10}}\land\lozenge\mathtt{V_{Start}})\ldots)\land\oblong\phi_{\mathtt{C_{4}}}$. The results of mean reward collection for the task $\varphi_{V_1}$ during training compared with two baselines are shown in Fig.~\ref{fig:Reward_Car} (right).
It shows the better performance and effectiveness of safe modular DDPG with exploration guiding enabled.

\begin{figure}
	\centering{}\includegraphics[scale=0.32 ]{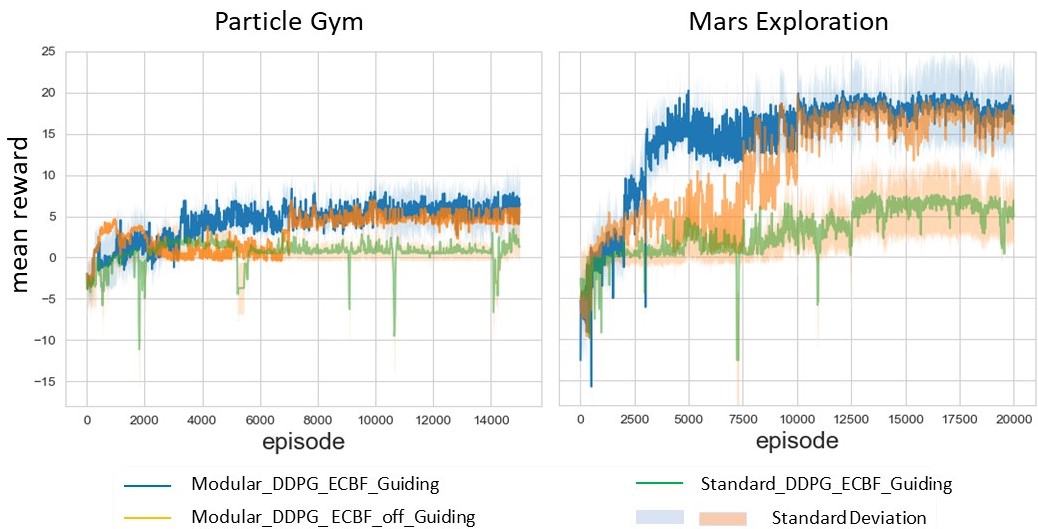}\caption{\label{fig:Reward_Car} Mean reward collection during training for particle gym (left) and Mars exploration (right). We analyze the results of tasks $\varphi_{Gym_{1}}$ and $\varphi_{V_{1}}$ over infinite horizons to demonstrate the effectiveness and efficiency of the modular architecture and safe learning process.}
\end{figure}

\begin{figure}[!t]\centering
	\subfloat[]{{
		\scalebox{.8}{
			\begin{tikzpicture}[shorten >=1pt,node distance=2.5cm,on grid,auto] 
			\node[] (s_0)   {}; 
			\node[state] (s_0) [above right=of s_0]  {$q_0$};
			\node[state] (s_1) [above right=of s_0]  {$q_1$};
			\node[state] (s_2) [below right=of s_0]  {$q_2$};
			\node[state,accepting] (s_3) [below right=of s_1]  {$q_3$};
			\path[->] 
			(s_0) edge node {$\mathtt{R}_{\mathtt{g}}$} (s_1)
			(s_0) edge [loop below] node {$\mathtt{R}_{\mathtt{f}}$} (s_0)
			(s_1) edge [loop right] node {$\lnot\mathtt{R}_{\mathtt{g}}$} (s_1)
			(s_2) edge [loop right] node {$\lnot\mathtt{R}_{\mathtt{y}}$} (s_2)
			(s_0) edge node {$\mathtt{R}_{\mathtt{y}}$} (s_2)
			(s_1) edge node {$\mathtt{R}_{\mathtt{y}}$} (s_3)
			(s_2) edge node {$\mathtt{R}_{\mathtt{g}}$} (s_3)
			(s_3) edge [loop left] node {$1$} (s_3);
			\end{tikzpicture}
			}
			}}
	\subfloat[]{{
	    \scalebox{.8}{
			\begin{tikzpicture}[shorten >=1pt,node distance=2.5cm,on grid,auto] 
			\node[state] (q_0)   {$q_0$}; 
			\node[state,accepting] (q_1) [above right=of q_0]  {$q_1$};
			\node[state,accepting] (q_2) [below right=of q_0]  {$q_2$};
			\path[->] 
			(q_0) edge [bend left=15] node {$\mathtt{R}_{\mathtt{g}}$} (q_1)
			(q_1) edge [bend left=15] node {$\lnot\mathtt{R}_{\mathtt{g}}$} (q_0)
			(q_0) edge [loop above] node {$\mathtt{R}_{\mathtt{f}}$} (q_0)
			(q_1) edge [loop right] node {$\mathtt{R}_{\mathtt{g}}$} (q_1)
			(q_2) edge [loop right] node {$\mathtt{R}_{\mathtt{y}}$} (q_2)
			(q_0) edge [bend left=15] node {$\mathtt{R}_{\mathtt{y}}$} (q_2)
			(q_2) edge [bend left=15] node {$\lnot\mathtt{R}_{\mathtt{y}}$} (q_0)
			(q_1) edge [bend left=15] node {$\mathtt{R}_{\mathtt{y}}$} (q_2)
			(q_2) edge node {$\mathtt{R}_{\mathtt{g}}$} (q_1);
			\end{tikzpicture}
			}
			}}
		\caption{\label{fig:automaton_general}  LDGBA of LTL formulas where $\mathtt{R}_{\mathtt{g}}$, $\mathtt{R}_{\mathtt{y}}$ and $\mathtt{R}_{\mathtt{f}}$  denote $\mathtt{R}_{\mathtt{green}}$, $\mathtt{R}_{\mathtt{yellow}}$, and free space $\lnot(\mathtt{R}_{\mathtt{yellow}}\land\mathtt{R}_{\mathtt{green}})$, respectively. (a)
		$\phi_{g_{2}}=\lozenge\mathtt{R}_{\mathtt{green}}\land\lozenge\mathtt{R}_{\mathtt{yellow}}$ where $q_{3}$ is the accepting state. (b) $\phi_{g_{1}}=\oblong\lozenge\mathtt{R}_{\mathtt{green}}\land\oblong\lozenge\mathtt{R}_{\mathtt{yellow}}$, where there are two accepting sets $F=\{\{q_1\}, \{q_2\}\}$. }
	\end{figure}

\begin{figure*}
	\centering{}\includegraphics[scale=0.40 ]{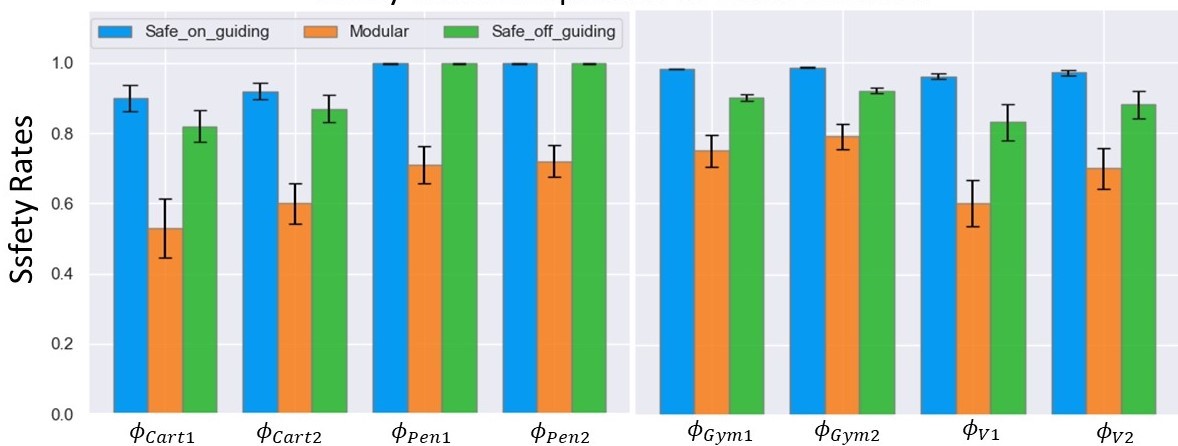}\caption{\label{fig:safety_rate} Safety rates analysis for each environments that includes corresponding infinite-horizons and finite-horizons tasks. The results are all conducted using modular RL algorithm and enabling different modules, and are taken over 10 independent learning trials.}
\end{figure*}

\begin{table*}
	\caption{\label{tab:training}Training time analysis of algorithms with and without safe module (ECBF-based control and exploration guiding).}
	\centering{}\resizebox{0.9\textwidth}{!}{
	\begin{tabular}{|ccc|cccc|c|c}
	    \hline
	    \multicolumn{3}{|c|}{Tasks and Training Parameters} & \multicolumn{4}{|c|}{Training Time (hour)} \\ 
		\hline 
		LTL Task & Maximum steps & Episode & Modular DDPG & Safe Modular DDPG & Standard DDPG & Safe Standard DDPG  \\ 
		\hline
		$\varphi_{B1}$ & $300$ & $10000$ & $5.5$ & $7.2$ & $5.0$ & $7.1$ \\
		
		$\varphi_{B2}$ & $300$ & $10000$ & $4.9$ & $6.3$ & $4.6$ & $6.3$ \tabularnewline
		
		$\varphi_{C1}$ & $200$ & $4000$ & $3.6$ & $4.0$ & $3.5$ & $4.0$ \tabularnewline
		
		$\varphi_{C2}$ & $200$ & $4000$ &$2.0$ & $2.5$ & $2.1$ & $2.8$ \tabularnewline
		
		$\varphi_{Gym1}$ & $1000$ & $15000$ & $9.7$ & $11.9$ & $9.3$ & $12.1$ \tabularnewline
		
		$\varphi_{Gym2}$ & $1000$ & $15000$ & $7.3$ & $10.5$ & $8.5$ & $12.0$ \tabularnewline
		
		$\varphi_{V_{1}}$ & $5000$ & $20000$ & $22.7$ & $40.4$ & $26.9$ & $41.6$ \tabularnewline
		
		$\varphi_{V_{2}}$ & $5000$ & $20000$ & $14.3$ & $18.2$ & $21.0$ & $28.5$ \tabularnewline
	
		\hline 
	\end{tabular}}
\end{table*}

\begin{figure}
	\centering{}\includegraphics[scale=0.35 ]{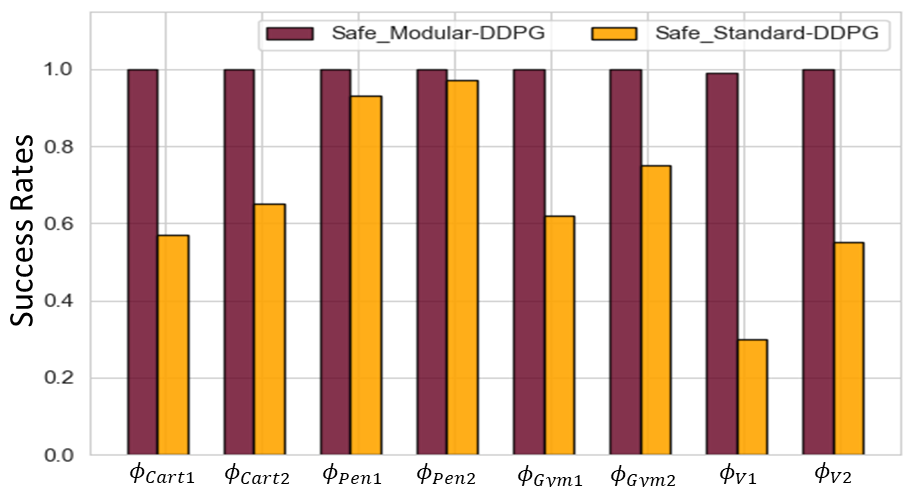}\caption{\label{fig:success_rate} Performance evaluation through success rates for all tasks. Every evaluation is conducted with the same steps in Table~\ref{tab:training}.}
\end{figure}

{\color{black}
\subsection{Discussion on LTL\label{subsec:general LTL formula}} 

This section discusses performance aspects of our framework with respect to LTL formula complexity.
First, we can observe that the some of above tasks, e.g., $\phi_{Gym_{1}}$, $\phi_{Gym_{2}}$, $\phi_{V_{1}}$, and $\phi_{V_{2}}$ are all in the form of predefined sequential orders.
Their decomposition is pre-defined before training.

Some LTL formulas are specified without specific sequential poses. For instances, the task $\phi_{g_{1}}=\oblong\lozenge\mathtt{R}_{\mathtt{green}}\land\oblong\lozenge\mathtt{R}_{\mathtt{yellow}}$ in $\phi_{\mathtt{Cart1}}$ and $\phi_{\mathtt{Pen1}}$, and its finite horizon form $\phi_{g_{2}}=\lozenge\mathtt{R}_{\mathtt{green}}\land\lozenge\mathtt{R}_{\mathtt{yellow}}$ in $\phi_{\mathtt{Cart2}}$ and $\phi_{\mathtt{Pen2}}$.
The LDGBA of $\phi_{g_{2}}$ and $\phi_{g_{1}}$ are shown in Fig.~\ref{fig:automaton_general} (a) and (b), respectively. We can observe that for both formulas, there exist multiple automata traces satisfying the acceptance condition, resulting in multiple valid decomposition choices.

Since the modular architecture employs automaton states to decompose the global task into a sequence of sub-tasks during training,
the combinatorial choices raise optimality challenges regarding the learning-based task decomposition, which replies on the exploration of deep RL.
Such a compositional optimality becomes more uncontrollable when the task is more complex, e.g., a task
$\phi_{V_{3}}=\oblong\lozenge\mathtt{V_1}\land\oblong\lozenge\mathtt{V_2}\land\oblong\lozenge\mathtt{V_3}\ldots\land\oblong\lozenge\mathtt{V_{10}}\land\oblong\phi_{\mathtt{C_{4}}}$. Even though such a task can be still decomposed during training and learned, the final results in the sense of task decomposition may not be the optimal one.
Our future work will consider optimally decomposing the task before training to effectively and efficiently learn the general LTL expressions.

}

\subsection{Complexity and Performance Analysis\label{subsec:performance}} 

First, we define safety rate as the number of safe episodes versus all episodes.
Fig.~\ref{fig:safety_rate} shows the safety rates for all tasks over all environments through different baselines.
It shows the benefits of the ECBF-based safe module and the improvement due to exploration guiding.
We analyzed the training complexity for various baselines shown in Table~\ref{tab:training}.
From the perspective of safe learning, the proposed safe module requires solving a quadratic program at each step.
The training time increased for both safe modular and safe standard DDPG methods.
This is reasonable since the algorithm needs to check whether controllers are safe at each step.
As for the modular architecture, even though it adopts several distributed actor-critic neural network pairs, they are concurrently trained, and each of them is only responsible for a sub-task.
Consequently, the training time is mainly influenced by the number of steps and episodes for both safe modular and standard DDPG.
For complex tasks, e.g., $\varphi_{V_{2}}$, the modular architecture can complete the task faster (terminate the episode earlier) during learning, and reduce the training time. 

To highlight the performance of the modular structure, we compare the safe modular DDPG and safe standard DDPG methods both with exploration guiding enabled.
We take $200$ runs applying the learned model, and analyze the success rate for all aforementioned tasks.
Due to growing dimensions of automaton structure for more complex tasks and a limited number of episodes and steps during training in practice, it becomes difficult for standard DDPG to explore entire tasks over an infinite horizon, and 
to recognize completion (satisfaction) of sub-parts of tasks due to variance issues inherent to policy gradient methods.
As shown in Fig.~\ref{fig:success_rate}, we conclude that modular architecture has better and more stable performance, whereas the standard DDPG yields poor performance for complex tasks with repetitive patterns (infinite horizons). {\color{black} Due to additive noises in the dynamics resulting in unrecognized interactions, there may exist cases where success rates are not perfect. In this case, the modular architecture allows to inspect each sub-tasks, and extract the ones to analyze, and keep training to improve their performances.}

\section{Conclusions}
\label{sec:conclusion}

Achievement of safe critical requirements during learning is a challenging problem with significant real-world robotic applicability. Part of the challenge stems from the uncertain and unknown dynamical systems and the impact of the exploration on optimal solutions.
Such problems become even more difficult, but more meaningful when RL-agents are tasked to accomplish complex human instructions over infinite horizons and continuous space.
The main difficulties are due to the nature of nonlinear regression to recognize each stage of task satisfaction and the need for large training episodes.
Therefore, we propose the ECBF-based safe RL framework combined with GPs for estimation of the nominal systems, and RL agent training guided by the LTL specifications describing high-level complex tasks.
These features are crucial in employing reinforcement learning in physical applications, where humans are able to formulate advanced objectives specified in the formal language.
They are also important in the case where problems require efficient computation and effective learning performance.

This framework combines model-free deep RL, GP-based ECBF control, and automata theory of compositional LTL syntax.
On the training side, the designed E-LDGBA allows us to apply the deterministic policy and overcome the issue of sparse rewards, and the reward shaping technique further enhances the dense rewards.
On the evaluation side, by fully utilizing the automaton structure, we propose an innovative modular DDPG architecture that relies on distributed neural networks to improve the performance of the learning results for complex tasks.
We also propose a novel approach by integrating the sink components of LTL automata and ECBF perturbations to enforce the guiding of exploration.
A significant formal result is that the above modules (safe and modular) do not impact desired behaviors with respect to original optimal solutions, i.e., satisfying LTL with maximum probability in the limit. 

We tested the overall algorithm in various control systems and demonstrate its benefits by comparing it with several baselines.
Our results are encouraging for several future directions.
This work assumes we are given a valid safe set of CBFs that can be rendered forward invariant, which opens the question of whether we can learn the CBFs in addition to the controllers.
{\color{black} Since deep policy gradient algorithms mainly rely on exploration for learning, which is sensitive to environmental settings, future work will investigate improving exploration techniques to handle complex environments. } Furthermore, future research will also consider multi-agent cooperative tasks and bridging the gap between simulations and real-world applications. 

\section{Acknowledgement}
The authors thank Richard Cheng and Hosein Hasanbeig for their helpful discussions.

\begin{appendices}

\section{supplementary materials}

\subsection{Proof of Lemma~\ref{lem:language}\label{Appendix:E-LDGBA}}

We prove~\eqref{lem:language} by showing that $\mathcal{L}(\mathcal{\overline{A}}_{\phi})\supseteq\mathcal{L}(\mathcal{A}_{\phi})$ and $\mathcal{L}(\mathcal{\overline{A}}_{\phi})\subseteq\mathcal{L}(\mathcal{A}_{\phi})$.

\textbf{Case 1:}  $\mathcal{L}(\mathcal{\overline{A}}_{\phi})\supseteq\mathcal{L}(\mathcal{A}_{\phi})$: For any accepted word $\boldsymbol{\omega}=\alpha_{0}\alpha_{1}\ldots\in\mathcal{L}(\mathcal{A}_{\phi})$, there exists a corresponding run $\boldsymbol{r}=q_{0}\alpha_{0}q_{1}\alpha_{1}\ldots$ of $\mathcal{A}_{\phi}$ s.t.
\\
\begin{equation}
\inf\left(\boldsymbol{r}\right)\cap F_{i}\neq\emptyset, \forall i\in\left\{ 1,\ldots, f\right\}. \label{eq:Lemma_case1}
\end{equation}
\\
For the run $\boldsymbol{r}$, we can construct a sequence $\boldsymbol{\overline{r}}=\overline{q}_{0}\alpha_{0}\overline{q}_{1}\alpha_{1}\ldots$ by adding to each state $q$ the set $T$, which is updated via~\eqref{eq:Trk-fontier} after each transition.
It can be verified that such a run $\boldsymbol{\overline{r}}$ is a valid run of $\mathcal{\overline{A}}_{\phi}$ based on Def.~\ref{def:E-LDGBA}. According to~\eqref{eq:Lemma_case1}, since the tracking-frontier set $T$ will be reset once all accepting sets have been visited, it holds $\inf\left(\boldsymbol{\overline{r}}\right)\cap \overline{F_{i}}\neq\emptyset, \forall i\in\left\{ 1,\ldots, f\right\}$, i.e., $\boldsymbol{\omega}\in\mathcal{L}(\mathcal{\overline{A}}_{\phi})$.  

\textbf{Case 2:} $\mathcal{L}(\mathcal{\overline{A}}_{\phi})\subseteq\mathcal{L}(\mathcal{A}_{\phi})$: Similarly, for any accepted word $\boldsymbol{\overline{\omega}}=\overline{\alpha}_{0}\overline{\alpha}_{1}\ldots\in\mathcal{L}(\mathcal{\overline{A}}_{\phi})$, there exists a corresponding run $\boldsymbol{\overline{r}}=\overline{q}_{0}\overline{\alpha}_{0}\overline{q}_{1}\overline{\alpha}_{1}\ldots$ of $\mathcal{\overline{A}}_{\phi}$ s.t.
\\
\begin{equation}
\inf\left(\boldsymbol{\overline{r}}\right)\cap \overline{F_{i}}\neq\emptyset, \forall i\in\left\{ 1,\ldots, f\right\}. \label{eq:Lemma_case2}
\end{equation}
\\
For the run $\boldsymbol{\overline{r}}$, we can construct a sequence $\boldsymbol{r}=q_{0}\overline{\alpha}_{0}q_{1}\overline{\alpha}_{1}\ldots$ by projecting each state $\overline{q}=(q,T)$ onto $q$. It follows that such a run $\boldsymbol{r}$ is a valid run of $\mathcal{A}_{\phi}$ based on Def.~\ref{def:E-LDGBA}. According to~\eqref{eq:Lemma_case2}, it holds $\inf\left(\boldsymbol{r}\right)\cap F_{i}\neq\emptyset, \forall i\in\left\{ 1,\ldots, f\right\}$, i.e., $\boldsymbol{\overline{\omega}}\in\mathcal{L}(\mathcal{A}_{\phi})$.

\subsection{Properties of P-MDP\label{subsec:properties}}

A sub-MDP $\mathcal{P}_{\left(X',U'\right)}$ of $\mathcal{P}$ is a pair $(X', U')$ where $X'\subseteq X$ and $U'$ is a action sub-space of $U^{\mathcal{P}}$ such that (i) $X'\neq\emptyset$, and $U'(x)\neq\emptyset$, $\forall x\in X'$;
(ii) $\forall x \in X'$ and $\forall u\in U'(x)$, if $p^{\mathcal{P}}(x,u,x')>0$ then $x'\in X'$.
The induced graph of $\mathcal{M}{}_{\left(X',U'\right)}$ is a directed graph $\mathcal{G}_{\left(X',U'\right)}$, where $X'$ is regarded as a set of nodes, and if $p^{\mathcal{P}}(x,u,x')>0$ for some $u\in U'(x)$ with $x,x'\in X'$, then there exists an edge between $x$ and $x'$ in $\mathcal{G}_{\left(X',U'\right)}$. A sub-MDP is called a strongly connected component (SCC) if its induced graph is strongly connected, i.e., for all pairs of nodes $x,x' \in X'$, there is a path from $x$ to $x'$. A bottom strongly connected component (BSCC) is an SCC
from which no outside state is reachable by applying the restricted action space. 

\begin{defn}
A Markov
chain $MC_{\mathcal{P}}^{\boldsymbol{\pi}}$ of the $\mathcal{P}$ is a sub-MDP of $\mathcal{P}$ induced by a policy $\boldsymbol{\pi}$ \cite{Baier2008}.
\end{defn}

\begin{defn}
A sub-MDP $\mathcal{P}{}_{\left(X',U'\right)}$ 
is called an end component (EC) of $\mathcal{P}$ if its induced graph is a BSCC. An EC $\mathcal{P}_{\left(X',U'\right)}$ is called a maximal end
component (MEC) if there is no other EC $\mathcal{P}_{\left(X'',U''\right)}$
such that $X'\subseteq X''$ and $U'\left(x\right)\subseteq U''\left(s\right)$,
$\forall x\in X'$.
\end{defn}

Consider a sub-MDP $\mathcal{P}_{\left(X',U'\right)}$ of $\mathcal{P}$, where
$X'\subseteq X$ and $U'\subseteq U^{\mathcal{P}}$. If $\mathcal{P}'_{\left(X',U'\right)}$
is a maximum end component (MEC) \cite{Baier2008} of $\mathcal{P}$ and $X'\cap F_{i}^{\mathcal{P}}\neq\emptyset$,
$\forall i\in\left\{ 1,\ldots f\right\} $, then $\mathcal{P}_{\left(X',U'\right)}$
is called an accepting maximum end component (AMEC) of $\mathcal{P}$.
Once a path enters an AMEC, the subsequent path will stay within it
by taking restricted actions from $U'$. Satisfying task $\phi$ is equivalent to reaching an AMEC \cite{Baier2008}.

\begin{defn}\label{def:induced_markov_chain}
\cite{Durrett1999} States of any Markov
chain $MC_{\mathcal{P}}^{\boldsymbol{\pi}}$ under policy $\boldsymbol{\pi}$ can be represented by a disjoint union of a transient
class $\ensuremath{\mathcal{T}_{\boldsymbol{\pi}}}$ and $n_R$ closed
irreducible recurrent classes $\ensuremath{\mathcal{R}_{\boldsymbol{\pi}}^{j}}$,
$j\in\left\{ 1,\ldots,n_{R}\right\} $, where a class is a set of states.
\end{defn}

\begin{lem}
\label{lemma:accepting set}\cite{Cai2021modular} Given an P-MDP $\mathcal{P}=\mathcal{M}\times\mathcal{{A}}_{\phi}$
, the recurrent class $R_{\boldsymbol{\pi}}^{j}$ of $MC_{\mathcal{P}}^{\boldsymbol{\pi}}$,
$\forall j\in\left\{ 1,\ldots,n_R\right\} $, induced by $\pi$ satisfies
one of the following conditions: (i) $\ensuremath{R_{\boldsymbol{\pi}}^{j}}\cap F_{i}^{\mathcal{P}}\neq\emptyset,\forall i\in\left\{ 1,\ldots f\right\} $,
or (ii) $R_{\boldsymbol{\pi}}^{j}\cap F_{i}^{\mathcal{P}}=\emptyset,\forall i\in\left\{ 1,\ldots f\right\} $. 
\end{lem}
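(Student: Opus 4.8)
The plan is to establish the dichotomy by contradiction, driving the argument with the tracking-frontier mechanism of the E-LDGBA. Assume some recurrent class $R_{\boldsymbol{\pi}}^{j}$ of $MC_{\mathcal{P}}^{\boldsymbol{\pi}}$ violates the claim, i.e., there are indices $i_{1},i_{2}\in\{1,\dots,f\}$ with $R_{\boldsymbol{\pi}}^{j}\cap F_{i_{1}}^{\mathcal{P}}\neq\emptyset$ and $R_{\boldsymbol{\pi}}^{j}\cap F_{i_{2}}^{\mathcal{P}}=\emptyset$. Fix a product state $x_{1}=(s_{1},q_{1},T_{1})\in R_{\boldsymbol{\pi}}^{j}\cap F_{i_{1}}^{\mathcal{P}}$; by Def.~\ref{def:P-MDP} and Def.~\ref{def:E-LDGBA} this forces $q_{1}\in F_{i_{1}}$ and $F_{i_{1}}\in T_{1}$ (the accepting set $F_{i_{1}}$ is still tracked). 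Since a recurrent class is closed and irreducible, the induced Markov chain started anywhere in $R_{\boldsymbol{\pi}}^{j}$ returns to $x_{1}$ infinitely often almost surely without ever leaving $R_{\boldsymbol{\pi}}^{j}$; in particular every state traversed along one excursion from $x_{1}$ back to $x_{1}$ lies in $R_{\boldsymbol{\pi}}^{j}$.

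The core step analyzes how the frontier component $T$ evolves along one such excursion. On the transition leaving $x_{1}$, the update $f_{V}$ in~\eqref{eq:Trk-fontier} deletes $F(q_{1})\ni F_{i_{1}}$ from the frontier, so $F_{i_{1}}$ is no longer tracked just after $x_{1}$. Inspecting~\eqref{eq:Trk-fontier}, the frontier never grows except through the reset branch $(F\setminus F(q'),\operatorname{true})$, which fires exactly when the frontier has become empty, i.e., exactly when a full round in the sense of Section~\ref{subsec:E-LDGBA} has just been completed. Because the excursion returns to $x_{1}$, whose frontier component is again $T_{1}\ni F_{i_{1}}$, the frontier must have been emptied at least once during the excursion, at a state that is itself in $R_{\boldsymbol{\pi}}^{j}$. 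Emptying the frontier from a post-reset value requires that, since that reset, every accepting set — in particular $F_{i_{2}}$ — was removed from $T$; by~\eqref{eq:Trk-fontier} the removal of $F_{i_{2}}$ occurs only at a state whose automaton component lies in $F_{i_{2}}$ while $F_{i_{2}}$ is still present in the frontier, and such a state is precisely an element of $F_{i_{2}}^{\mathcal{P}}$. That state belongs to $R_{\boldsymbol{\pi}}^{j}$, contradicting $R_{\boldsymbol{\pi}}^{j}\cap F_{i_{2}}^{\mathcal{P}}=\emptyset$. Hence no offending pair $i_{1},i_{2}$ exists, so every recurrent class satisfies (i) or (ii).

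The step I expect to be the main obstacle is the precise bookkeeping of when $T$ is updated relative to a product state — whether the frontier carried by $(s,q,T)$ is the one before or after processing $q$ — together with the exact membership condition for $F_{i}^{\mathcal{P}}$ induced by Def.~\ref{def:E-LDGBA}. One must verify that "the round completes during the excursion" genuinely forces the chain through a state of each $F_{i}^{\mathcal{P}}$, rather than merely through states whose automaton component lies in $F_{i}$ but with $F_{i}$ already deleted from $T$ (which are not in $F_{i}^{\mathcal{P}}$); this is where the semantics of the reset branch $F\setminus F(q')$ must be handled carefully, especially if an automaton state lies in several accepting sets. A secondary, more routine, point is justifying that within a recurrent class of $MC_{\mathcal{P}}^{\boldsymbol{\pi}}$ every member state is visited infinitely often (and conversely): this is standard for the recurrent classes of Def.~\ref{def:induced_markov_chain} and can simply be cited. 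An equivalent alternative route would be to first prove the purely automaton-theoretic fact — a run of $\mathcal{\overline{A}}_{\phi}$ visits some $\overline{F_{i}}$ infinitely often iff it visits all $\overline{F_{i}}$ infinitely often, which is immediate from the reset design and underlies Lemma~\ref{lem:language} — and then lift it to $MC_{\mathcal{P}}^{\boldsymbol{\pi}}$ through the projection of product paths onto automaton runs.
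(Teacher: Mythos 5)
First, note that the paper offers no proof of this lemma to compare against: it is imported verbatim from \cite{Cai2021modular}, and Appendix~\ref{subsec:properties} only restates it alongside the supporting definitions. Your overall strategy --- contradiction driven by the frontier bookkeeping: $F_{i_1}$ is deleted right after $x_1$, can only reappear through a reset, a reset presupposes a completed round, and completing a round should force a visit to $F_{i_2}^{\mathcal{P}}$ --- is the natural mechanism and is essentially what the construction is designed to make true.

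However, the step you yourself flag as the main obstacle is a genuine gap, and the proof as written does not close it. The assertion ``emptying the frontier from a post-reset value requires that, since that reset, every accepting set --- in particular $F_{i_2}$ --- was removed from $T$'' fails under~\eqref{eq:Trk-fontier} for two reasons. First, the reset branch re-initializes the frontier to $F\setminus F(q')$, not to $F$, so any accepting set containing the reset state enters the new round already deleted; it is never ``removed while present,'' and the chain need never pass through the corresponding $F_{i}^{\mathcal{P}}$. Second, under the literal trigger of the second branch ($T=\emptyset$ rather than $T\setminus F(q')=\emptyset$), the reset fires one accepting visit \emph{after} the frontier empties, at a state whose frontier component is $\emptyset$ and which therefore belongs to no $F_i^{\mathcal{P}}$. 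Concretely, take $F=\{F_1,F_2\}$ with $q_a\in F_1$, $q_b\in F_2$ and a policy inducing the deterministic cycle $q_aq_bq_aq_b\ldots$: the frontier sequence stabilizes on the loop $(q_a,\{F_1\})\to(q_b,\emptyset)\to(q_a,\{F_1\})$, whose recurrent class meets $F_1^{\mathcal{P}}$ but not $F_2^{\mathcal{P}}$ --- exactly the offending pair you assume cannot exist. The same failure occurs whenever an automaton state lies in two accepting sets, even under the ``immediate reset'' reading. To make your excursion argument close, you must either adopt the immediate-reset semantics $T\setminus F(q')=\emptyset$ \emph{together with} pairwise disjointness of the accepting sets (then every $F_i$ leaving the frontier does so at a state of $F_i^{\mathcal{P}}$ inside the excursion, and the contradiction follows), or repair the accepting condition/reset rule; neither is done in the proposal. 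Your proposed alternative route through Lemma~\ref{lem:language} inherits the identical difficulty in its Case~1. The secondary point --- that every state of a closed irreducible recurrent class is visited infinitely often --- is indeed standard and fine to cite.
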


Under policy $\boldsymbol{\pi}$, the notation
$\ensuremath{\mathcal{T}_{\boldsymbol{\pi}}}$ represents the behaviors before entering into MECs, and $\ensuremath{\mathcal{R}_{\boldsymbol{\pi}}^{j}}$ involves the behaviors after entering into a MEC.

\subsection{Proof of Theorem~\ref{thm:safe_RL}\label{apped:thm:safe_RL}}

For any policy $\boldsymbol{\pi}$, $MC_{\mathcal{P}}^{\boldsymbol{\pi}}=\ensuremath{\mathcal{T}_{\boldsymbol{\pi}}}\sqcup\ensuremath{\mathcal{R}_{\boldsymbol{\pi}}^{1}\sqcup\ensuremath{\mathcal{R}_{\boldsymbol{\pi}}^{2}}\ldots\ensuremath{\mathcal{R}_{\boldsymbol{\pi}}^{n_{R}}}}.$
Let $\boldsymbol{U}_{\boldsymbol{\pi}}=\left[\begin{array}{ccc}
U^{\boldsymbol{\pi}}\left(x_{0}\right) & U^{\boldsymbol{\pi}}\left(x_{1}\right) & \ldots\end{array}\right]^{T}\in\mathbb{R}^{\left|X\right|}$ denote the stacked expected return under policy $\boldsymbol{\pi}$, which we reorganize
\\
\begin{equation}\small
\begin{aligned}\left[\begin{array}{c}
\boldsymbol{U}_{\boldsymbol{\pi}}^{tr}\\
\boldsymbol{U}_{\boldsymbol{\pi}}^{rec}
\end{array}\right]= & \stackrel[n=0]{\infty}{\sum}\left(\stackrel[j=0]{n-1}{\prod}\left[\begin{array}{cc}
\boldsymbol{\boldsymbol{\gamma}}_{\boldsymbol{\pi}}^{\ensuremath{\mathcal{T}}} & \boldsymbol{\boldsymbol{\gamma}}_{\boldsymbol{\pi}}^{\ensuremath{tr}}\\
\boldsymbol{0}_{\sum_{i=1}^{m}N_{i}\times r} & \boldsymbol{\boldsymbol{\gamma}}_{\boldsymbol{\pi}}^{rec}
\end{array}\right]\right)\\
 & \cdot\left[\begin{array}{cc}
\boldsymbol{P}_{\boldsymbol{\pi}}\left(\ensuremath{\mathcal{T}},\ensuremath{\mathcal{T}}\right) & \boldsymbol{P}_{\boldsymbol{\pi}}^{tr}\\
\boldsymbol{0}_{\sum_{i=1}^{m}N_{i}\times r} & \boldsymbol{P}_{\boldsymbol{\pi}}\left(\mathcal{R},\mathcal{R}\right)
\end{array}\right]^{n}\left[\begin{array}{c}
\boldsymbol{R}_{\boldsymbol{\pi}}^{tr}\\
\boldsymbol{R}_{\boldsymbol{\pi}}^{rec}
\end{array}\right],
\end{aligned}
\label{eq: utility_function}
\end{equation}
\\
where $\boldsymbol{U}_{\boldsymbol{\pi}}^{tr}$ and $\boldsymbol{U}_{\boldsymbol{\pi}}^{rec}$
are the expected return of states in transient and recurrent classes
under policy $\boldsymbol{\pi}$, respectively. In \eqref{eq: utility_function},
$\boldsymbol{P}_{\boldsymbol{\pi}}\left(\ensuremath{\mathcal{T}},\ensuremath{\mathcal{T}}\right)\in\mathbb{R}^{r\times r}$
is the probability transition matrix between states in $\ensuremath{\mathcal{T}_{\boldsymbol{\pi}}}$,
and $\boldsymbol{P}_{\boldsymbol{\pi}}^{tr}=\left[P_{\boldsymbol{\pi}}^{tr_{1}}\ldots P_{\boldsymbol{\pi}}^{tr_{m}}\right]\in\mathbb{R}^{r\times\sum_{i=1}^{m}N_{i}}$
is the probability transition matrix where $P_{\boldsymbol{\pi}}^{tr_{i}}\mathbb{\in R}^{r\times N_{i}}$
represents the transition probability from a transient state in $\ensuremath{\mathcal{T}_{\boldsymbol{\pi}}}$
to a state of $\mathcal{R}_{\boldsymbol{\pi}}^{i}$. The $\boldsymbol{P}_{\boldsymbol{\pi}}\left(\mathcal{R},\mathcal{R}\right)$
is a diagonal block matrix, where the $i$-th block is a $N_{i}\times N_{i}$
matrix containing transition probabilities between states within $\mathcal{R}_{\boldsymbol{\pi}}^{i}$.
Note that $\boldsymbol{P}_{\boldsymbol{\pi}}\left(\mathcal{R},\mathcal{R}\right)$
is a stochastic matrix since each block matrix is a stochastic matrix
\cite{Durrett1999}. Similarly, the rewards $\boldsymbol{\boldsymbol{R}}_{\boldsymbol{\pi}}$
can also be partitioned into $\boldsymbol{R}_{\boldsymbol{\pi}}^{tr}$
and $\boldsymbol{R}_{\boldsymbol{\pi}}^{rec}$.

\textbf{Proof of Contradictions} The following proof is based on contradictions. Suppose there exists
a policy $\boldsymbol{\pi}^{*}$ that optimizes the expected return,
but derives the system intersecting with $\overline{Q}_{unsafe}$ with non-zero probability. Based on Lemma~\ref{lemma:accepting set}, the following is true:
$F_{k}^{\mathcal{P}}\subseteq\ensuremath{\mathcal{T}_{\boldsymbol{\pi}^{*}}},\forall k\in\left\{ 1,\ldots, f\right\} $,
where $\ensuremath{\mathcal{T}_{\boldsymbol{\pi}^{*}}}$ denotes the
transient class of Markov chain induced by $\boldsymbol{\pi}^{*}$ on $\mathcal{P}$.

Consider two types of states $x_{R}\in\mathcal{R}_{\boldsymbol{\pi}}^{j}$ and $x_{T}\in\mathcal{T}_{\boldsymbol{\pi}}^{j}$.
Let $\boldsymbol{P}_{\boldsymbol{\pi}}^{x_{R}R_{j}}$ denote
a row vector of $\boldsymbol{P}_{\boldsymbol{\pi}}^{n}\left(\mathcal{R},\mathcal{R}\right)$
that contains the transition probabilities from $x_{R}$ to the states
in the same recurrent class $\mathcal{R}_{\pi}^{j}$ after $n$
steps. The expected return of $x_{R}$ and  $x_{T}$  under  $\boldsymbol{\pi}$
are then obtained from~\eqref{eq: utility_function} respectively as 
\begin{align*}
U_{\boldsymbol{\pi}}^{rec}\left(x_{R}\right)=\stackrel[n=0]{\infty}{\sum}\gamma^{n}\left[\boldsymbol{0}_{k_{1}}^{T}\:\boldsymbol{P}_{\pi}^{x_{R}R_{j}}\:\boldsymbol{0}_{k_{2}}^{T}\right]\boldsymbol{R}_{\boldsymbol{\pi}}^{rec},
\\
\boldsymbol{U}_{\boldsymbol{\pi}}^{tr}>\underline{\gamma}^{n}\ldotp\boldsymbol{P}_{\boldsymbol{\pi}}^{tr}\boldsymbol{P}_{\boldsymbol{\pi}}^{n}\left(\mathcal{R},\mathcal{R}\right)\boldsymbol{R}_{\boldsymbol{\pi}}^{rec},
\end{align*}
\\
where $k_{1}=\sum_{i=1}^{j-1}N_{i}$, $k_{2}=\sum_{i=j+1}^{n}N_{i}$, $\boldsymbol{P}_{\boldsymbol{\pi}}^{tr}=\left[P_{\boldsymbol{\pi}}^{tr_{1}}\ldots P_{\boldsymbol{\pi}}^{tr_{m}}\right]\in\mathbb{R}^{r\times\sum_{i=1}^{m}N_{i}}$
is the probability transition matrix, and the $\boldsymbol{P}_{\boldsymbol{\pi}}\left(\mathcal{R},\mathcal{R}\right)$
is a diagonal block matrix.

Since $\ensuremath{\mathcal{R}_{\pi^{*}}^{j}}\cap \overline{X}_{unsafe}\neq\emptyset$ where $\overline{X}_{unsafe}$ is introduced in Def.~\ref{def:unsafe_EP-states}, all entries of $\boldsymbol{R}_{\boldsymbol{\pi}^{*}}^{rec}$
are non-positive. We can conclude $U_{\boldsymbol{\pi}^{*}}^{rec}\left(x_{R}\right)\leq 0$.
To show contradiction,  by selecting $\gamma_{F}\shortrightarrow1^{-}$ the following analysis demonstrates the contradiction, i.e., $U_{\bar{\boldsymbol{\pi}}}^{rec}\left(x_{R}\right)>U_{\boldsymbol{\pi}^{*}}^{rec}\left(x_{R}\right)$, where $\bar{\boldsymbol{\pi}}$ is a policy that satisfies the accepting
condition of $\mathcal{P}$:

There exists two cases of the analysis. (i) $x_{R}\in\mathcal{R}_{\bar{\boldsymbol{\pi}}}^{j}$. (ii) $x_{R}\in\mathcal{T}_{\bar{\boldsymbol{\pi}}}$. We show the contradictions for them respectively.

\textbf{Case 1:} If $x_{R}\in\mathcal{R}_{\bar{\boldsymbol{\pi}}}^{j}$,
there exist states such that $x_{\varLambda}\in\mathcal{R}_{\bar{\boldsymbol{\pi}}}^{j}\cap F_{i}^{\mathcal{P}}$.
From Lemma~\ref{lemma:accepting set}, the entries in $\boldsymbol{R}_{\bar{\boldsymbol{\pi}}}^{rec}$
corresponding to the recurrent states in $\mathcal{R}_{\bar{\boldsymbol{\pi}}}^{j}$
have non-negative rewards and at least there exist $f$ states in
$\mathcal{R}_{\bar{\boldsymbol{\pi}}}^{j}$ from different accepting
sets $F_{i}^{\mathcal{R}}$ with positive reward $1-r_{F}$. From \eqref{eq: utility_function},
$U_{\bar{\boldsymbol{\pi}}}^{rec}\left(x_{R}\right)$ can be lower
bounded as 
\\
\begin{equation}
   U_{\bar{\boldsymbol{\pi}}}^{rec}\left(x_{R}\right)  \geq\stackrel[n=0]{\infty}{\sum}\underline{\gamma}^{n}\left(P_{\bar{\boldsymbol{\pi}}}^{x_{R}x_{\varLambda}}r_{F}\right)
>0\label{eq:case1}, 
\end{equation}
\\
where $P_{\bar{\boldsymbol{\pi}}}^{x_{R}x_{\varLambda}}$ is the transition
probability from $x_{R}$ to $x_{\varLambda}$ in $n$ steps. We can
conclude in this case $U_{\bar{\boldsymbol{\pi}}}^{rec}\left(x_{R}\right)>U_{\boldsymbol{\pi}^{*}}^{rec}\left(x_{R}\right)$.

\textbf{Case 2:} If $x_{R}\in\mathcal{T}_{\bar{\boldsymbol{\pi}}}$,
there are no states of any accepting set $F_{i}^{\mathcal{P}}$ in
$\mathcal{T}_{\bar{\boldsymbol{\pi}}}$. As demonstrated in \cite{Durrett1999},
for a transient state $x_{tr}\in\mathcal{T}_{\bar{\boldsymbol{\pi}}}$,
there always exists an upper bound $\Delta<\infty$ such that $\stackrel[n=0]{\infty}{\sum}p^{n}\left(x_{tr},x_{tr}\right)<\Delta$,
where $p^{n}\left(x_{tr},x_{tr}\right)$ denotes the probability of
returning from a transient state $x_{T}$ to itself in $n$ time steps.
In addition, for a recurrent state $x_{rec}$ of $\mathcal{R}_{\bar{\boldsymbol{\pi}}}^{j}$,
it is always true that 
\\
\begin{equation}
\stackrel[n=0]{\infty}{\sum}\gamma^{n}p^{n}\left(x_{rec},x_{rec}\right)>\frac{1}{1-\gamma^{\overline{n}}}\bar{p},\label{eq:case2_1}
\end{equation}
\\
where there exists $\overline{n}$ such that $p^{\overline{n}}\left(x_{rec},x_{rec}\right)$
is nonzero and can be lower bounded by $\bar{p}$ \cite{Durrett1999}.
From~\eqref{eq: utility_function}, one has 
\\
\begin{equation}
\begin{aligned}\boldsymbol{U}_{\bar{\boldsymbol{\pi}}}^{tr} & >\stackrel[n=0]{\infty}{\sum}\left(\stackrel[j=0]{n-1}{\prod}\boldsymbol{\boldsymbol{\gamma}}_{\bar{\boldsymbol{\pi}}}^{tr}\right)\ldotp\boldsymbol{P}_{\bar{\boldsymbol{\pi}}}^{tr}\boldsymbol{P}_{\bar{\boldsymbol{\pi}}}^{n}\left(\mathcal{R},\mathcal{R}\right)\boldsymbol{R}_{\pi}^{rec}\\
 & {\color{black}>\underline{\gamma}^{n}\ldotp\boldsymbol{P}_{\bar{\boldsymbol{\pi}}}^{tr}\boldsymbol{P}_{\bar{\boldsymbol{\pi}}}^{n}\left(\mathcal{R},\mathcal{R}\right)\boldsymbol{R}_{\boldsymbol{\pi}}^{rec}}.
\end{aligned}
\label{eq:case2_2}
\end{equation}
\\
Let $\max\left(\cdot\right)$ and $\min\left(\cdot\right)$ represent
the maximum and minimum entry of an input vector, respectively. The
upper bound $\bar{m}=\left\{ \max\left(\overline{M}\right)\left|\overline{M}<\boldsymbol{P}_{\bar{\pi}}^{tr}\boldsymbol{\bar{P}}\boldsymbol{R}_{\pi}^{rec}\right.\right\} $
and $\bar{m}\geq0$, where $\boldsymbol{\bar{P}}$ is a block matrix
whose nonzero entries are derived similarly to $\bar{p}$ in~\eqref{eq:case2_1}.
The utility $U_{\bar{\boldsymbol{\pi}}}^{tr}\left(x_{R}\right)$ can
be lower bounded from~\eqref{eq:case2_1} and~\eqref{eq:case2_2}
as $U_{\bar{\boldsymbol{\pi}}}^{tr}\left(x_{R}\right)>\frac{1}{1-\underline{\gamma}^{n}}\bar{m}.$
Since $U_{\boldsymbol{\pi}^{*}}^{rec}\left(x_{R}\right)=0$, the contradiction
$U_{\bar{\boldsymbol{\pi}}}^{tr}\left(x_{R}\right)>0$ is achieved
if $\frac{1}{1-\underline{\gamma}^{n}}\bar{m}$. Thus, there exist
$0<\underline{\gamma}<1$ such that $\gamma_{F}>\underline{\gamma}$
and $r_{F}>\underline{\gamma}$, which implies $U_{\bar{\boldsymbol{\pi}}}^{tr}\left(x_{R}\right)>\frac{1}{1-\underline{\gamma}^{n}}\bar{m}\geq0$.
The procedure shows the contradiction of the assumption that $\boldsymbol{\pi}^{*}$
does not satisfy the acceptance condition of $\mathcal{P}$ with non-zero probability
is optimal. 

In summary (i) if $x_{R}\in\mathcal{R}_{\bar{\boldsymbol{\pi}}}^{j}$, we obtain
$U_{\bar{\boldsymbol{\pi}}}^{rec}\left(x_{R}\right)\geq 0$ from~\eqref{eq:case1}. (ii) if $x_{R}\in\mathcal{T}_{\bar{\boldsymbol{\pi}}}$, the inequality $U_{\bar{\boldsymbol{\pi}}}^{tr}\left(x_{R}\right)>0$ holds according to~\eqref{eq:case2_1} and~\eqref{eq:case2_2}. 

Accordingly, we prove the original optimal policies using the reward in Theorem~\ref{thm:conclusion} remain invariant during the procedure of safe guiding. Thus, we conclude that the safe guiding does not alter the original optimal polices of \eqref{eq:ExpRetrn_shaped} by applying shaped reward \eqref{eq:Shaped_Reward} from the work \cite{Ng1999}.

\subsection{Experimental Details \label{apped:experiment}}

In each experiment, the LTL tasks are converted into LDGBA that is applied to construct the modular DDPG algorithm. The P-MDP between E-LDGBA and cl-MDP is synthesized on the fly. As for each actor/critic structure, we used the same feed-forward neural network setting with 3 fully connected layers with $[64, 64, 64]$ units and ReLu activations. We initiate a Gaussian action distribution for the continuous action space parameterized via actors.
The parameters of the base reward function,  reward shaping, and exploration guiding are set as $r_{F}=0.9$, $\gamma_{F}=0.99$, $\eta_{\Phi}=1000$, and $r_{n}=-50$. The training settings and complexity analysis are shown in Table~\ref{tab:training} and provide a comprehensive comparison of time complexity for different tasks using
 various baselines.

\end{appendices}

\bibliographystyle{IEEEtran}
\bibliography{references}

\begin{thebibliography}{10}
\providecommand{\url}[1]{#1}
\csname url@samestyle\endcsname
\providecommand{\newblock}{\relax}
\providecommand{\bibinfo}[2]{#2}
\providecommand{\BIBentrySTDinterwordspacing}{\spaceskip=0pt\relax}
\providecommand{\BIBentryALTinterwordstretchfactor}{4}
\providecommand{\BIBentryALTinterwordspacing}{\spaceskip=\fontdimen2\font plus
\BIBentryALTinterwordstretchfactor\fontdimen3\font minus
  \fontdimen4\font\relax}
\providecommand{\BIBforeignlanguage}[2]{{%
\expandafter\ifx\csname l@#1\endcsname\relax
\typeout{** WARNING: IEEEtran.bst: No hyphenation pattern has been}%
\typeout{** loaded for the language `#1'. Using the pattern for}%
\typeout{** the default language instead.}%
\else
\language=\csname l@#1\endcsname
\fi
#2}}
\providecommand{\BIBdecl}{\relax}
\BIBdecl

\bibitem{Sutton2018}
R.~S. Sutton and A.~G. Barto, \emph{Reinforcement learning: An
  introduction}.\hskip 1em plus 0.5em minus 0.4em\relax MIT press, 2018.

\bibitem{Schulman2015}
J.~Schulman, S.~Levine, P.~Abbeel, M.~Jordan, and P.~Moritz, ``Trust region
  policy optimization,'' in \emph{International conference on machine
  learning}.\hskip 1em plus 0.5em minus 0.4em\relax PMLR, 2015, pp. 1889--1897.

\bibitem{Lillicrap2016}
T.~P. Lillicrap, J.~J. Hunt, A.~Pritzel, N.~Heess, T.~Erez, Y.~Tassa,
  D.~Silver, and D.~Wierstra, ``\BIBforeignlanguage{English}{Continuous control
  with deep reinforcement learning},'' in
  \emph{\BIBforeignlanguage{English}{Int. Conf. Learn. Represent.}}, San Juan,
  Puerto rico, 2016.

\bibitem{Schulman2017}
J.~Schulman, F.~Wolski, P.~Dhariwal, A.~Radford, and O.~Klimov, ``Proximal
  policy optimization algorithms,'' \emph{arXiv preprint arXiv:1707.06347},
  2017.

\bibitem{Ames2016}
A.~D. Ames, X.~Xu, J.~W. Grizzle, and P.~Tabuada, ``Control barrier function
  based quadratic programs for safety critical systems,'' \emph{IEEE
  Transactions on Automatic Control}, vol.~62, no.~8, pp. 3861--3876, 2016.

\bibitem{choi2020reinforcement}
J.~Choi, F.~Castaneda, C.~J. Tomlin, and K.~Sreenath, ``Reinforcement learning
  for safety-critical control under model uncertainty, using control lyapunov
  functions and control barrier functions,'' \emph{arXiv preprint
  arXiv:2004.07584}, 2020.

\bibitem{castaneda2021pointwise}
F.~Casta{\~n}eda, J.~J. Choi, B.~Zhang, C.~J. Tomlin, and K.~Sreenath,
  ``Pointwise feasibility of gaussian process-based safety-critical control
  under model uncertainty,'' \emph{arXiv preprint arXiv:2106.07108}, 2021.

\bibitem{Lillicrap2015}
T.~P. Lillicrap, J.~J. Hunt, A.~Pritzel, N.~Heess, T.~Erez, Y.~Tassa,
  D.~Silver, and D.~Wierstra, ``Continuous control with deep reinforcement
  learning,'' \emph{arXiv preprint arXiv:1509.02971}, 2015.

\bibitem{Garcia2015}
J.~Garc{\i}a and F.~Fern{\'a}ndez, ``A comprehensive survey on safe
  reinforcement learning,'' \emph{Journal of Machine Learning Research},
  vol.~16, no.~1, pp. 1437--1480, 2015.

\bibitem{Seeger2008}
M.~W. Seeger, S.~M. Kakade, and D.~P. Foster, ``Information consistency of
  nonparametric gaussian process methods,'' \emph{IEEE Transactions on
  Information Theory}, vol.~54, no.~5, pp. 2376--2382, 2008.

\bibitem{Fisac2018}
J.~F. Fisac, A.~K. Akametalu, M.~N. Zeilinger, S.~Kaynama, J.~Gillula, and
  C.~J. Tomlin, ``A general safety framework for learning-based control in
  uncertain robotic systems,'' \emph{IEEE Transactions on Automatic Control},
  2018.

\bibitem{Wang2018}
L.~Wang, E.~A. Theodorou, and M.~Egerstedt, ``Safe learning of quadrotor
  dynamics using barrier certificates,'' in \emph{IEEE International Conference
  on Robotics and Automation (ICRA)}.\hskip 1em plus 0.5em minus 0.4em\relax
  IEEE, 2018, pp. 2460--2465.

\bibitem{Cheng2019}
R.~Cheng, G.~Orosz, R.~M. Murray, and J.~W. Burdick, ``End-to-end safe
  reinforcement learning through barrier functions for safety-critical
  continuous control tasks,'' in \emph{Proceedings of the AAAI Conference on
  Artificial Intelligence}, vol.~33, no.~01, 2019, pp. 3387--3395.

\bibitem{dhiman2021control}
V.~Dhiman, M.~J. Khojasteh, M.~Franceschetti, and N.~Atanasov, ``Control
  barriers in bayesian learning of system dynamics,'' \emph{IEEE Transactions
  on Automatic Control}, 2021.

\bibitem{emam2021safe}
Y.~Emam, P.~Glotfelter, Z.~Kira, and M.~Egerstedt, ``Safe model-based
  reinforcement learning using robust control barrier functions,'' \emph{arXiv
  preprint arXiv:2110.05415}, 2021.

\bibitem{Cai2021modular}
M.~Cai, M.~Hasanbeig, S.~Xiao, A.~Abate, and Z.~Kan, ``Modular deep
  reinforcement learning for continuous motion planning with temporal logic,''
  \emph{IEEE Robotics and Automation Letters}, vol.~6, no.~4, pp. 7973--7980,
  2021.

\bibitem{Kloetzer2009}
M.~Kloetzer and C.~Belta, ``Automatic deployment of distributed teams of robots
  from temporal logic motion specifications,'' \emph{IEEE Transactions on
  Robotics}, vol.~26, no.~1, pp. 48--61, 2009.

\bibitem{Guo2015}
M.~Guo and D.~V. Dimarogonas, ``Multi-agent plan reconfiguration under local
  {LTL} specifications,'' \emph{The International Journal of Robotics
  Research}, vol.~34, no.~2, pp. 218--235, 2015.

\bibitem{Lahijanian2016}
M.~Lahijanian, M.~R. Maly, D.~Fried, L.~E. Kavraki, H.~Kress-Gazit, and M.~Y.
  Vardi, ``Iterative temporal planning in uncertain environments with partial
  satisfaction guarantees,'' \emph{EEE Transactionson Robotics}, vol.~32,
  no.~3, pp. 583--599, 2016.

\bibitem{sahin2019multirobot}
Y.~E. Sahin, P.~Nilsson, and N.~Ozay, ``Multirobot coordination with counting
  temporal logics,'' \emph{IEEE Transactions on Robotics}, vol.~36, no.~4, pp.
  1189--1206, 2019.

\bibitem{wang2015temporal}
J.~Wang, X.~Ding, M.~Lahijanian, I.~C. Paschalidis, and C.~A. Belta, ``Temporal
  logic motion control using actor--critic methods,'' \emph{The International
  Journal of Robotics Research}, vol.~34, no.~10, pp. 1329--1344, 2015.

\bibitem{Icarte2018}
R.~T. Icarte, T.~Klassen, R.~Valenzano, and S.~McIlraith, ``Using reward
  machines for high-level task specification and decomposition in reinforcement
  learning,'' in \emph{International Conference on Machine Learning}, 2018, pp.
  2107--2116.

\bibitem{Camacho2019}
A.~Camacho, R.~T. Icarte, T.~Q. Klassen, R.~A. Valenzano, and S.~A. McIlraith,
  ``{LTL} and beyond: Formal languages for reward function specification in
  reinforcement learning.'' in \emph{IJCAI}, vol.~19, 2019, pp. 6065--6073.

\bibitem{aksaray2021probabilistically}
D.~Aksaray, Y.~Yazicioglu, and A.~S. Asarkaya, ``Probabilistically guaranteed
  satisfaction of temporal logic constraints during reinforcement learning,''
  in \emph{2021 IEEE/RSJ International Conference on Intelligent Robots and
  Systems (IROS)}.\hskip 1em plus 0.5em minus 0.4em\relax IEEE, 2021.

\bibitem{hasanbeig2019reinforcement}
M.~Hasanbeig, Y.~Kantaros, A.~Abate, D.~Kroening, G.~J. Pappas, and I.~Lee,
  ``Reinforcement learning for temporal logic control synthesis with
  probabilistic satisfaction guarantees,'' in \emph{2019 IEEE 58th Conference
  on Decision and Control (CDC)}.\hskip 1em plus 0.5em minus 0.4em\relax IEEE,
  2019, pp. 5338--5343.

\bibitem{bozkurt2020control}
A.~K. Bozkurt, Y.~Wang, M.~M. Zavlanos, and M.~Pajic, ``Control synthesis from
  linear temporal logic specifications using model-free reinforcement
  learning,'' in \emph{2020 IEEE International Conference on Robotics and
  Automation (ICRA)}.\hskip 1em plus 0.5em minus 0.4em\relax IEEE, 2020, pp.
  10\,349--10\,355.

\bibitem{Sickert2016}
S.~Sickert, J.~Esparza, S.~Jaax, and J.~K{\v{r}}et{\'\i}nsk{\`y},
  ``Limit-deterministic {B}{\"u}chi automata for linear temporal logic,'' in
  \emph{Int. Conf. Comput. Aided Verif.}\hskip 1em plus 0.5em minus 0.4em\relax
  Springer, 2016, pp. 312--332.

\bibitem{Li2019}
X.~Li, Z.~Serlin, G.~Yang, and C.~Belta, ``A formal methods approach to
  interpretable reinforcement learning for robotic planning,'' \emph{Science
  Robotics}, vol.~4, no.~37, 2019.

\bibitem{vasile2020reactive}
C.~I. Vasile, X.~Li, and C.~Belta, ``Reactive sampling-based path planning with
  temporal logic specifications,'' \emph{The International Journal of Robotics
  Research}, vol.~39, no.~8, pp. 1002--1028, 2020.

\bibitem{kantaros2020reactive}
Y.~Kantaros, M.~Malencia, V.~Kumar, and G.~J. Pappas, ``Reactive temporal logic
  planning for multiple robots in unknown environments,'' in \emph{2020 IEEE
  International Conference on Robotics and Automation (ICRA)}.\hskip 1em plus
  0.5em minus 0.4em\relax IEEE, 2020, pp. 11\,479--11\,485.

\bibitem{kantaros2020stylus}
Y.~Kantaros and M.~M. Zavlanos, ``Stylus*: A temporal logic optimal control
  synthesis algorithm for large-scale multi-robot systems,'' \emph{The
  International Journal of Robotics Research}, vol.~39, no.~7, pp. 812--836,
  2020.

\bibitem{luo2021abstraction}
X.~Luo, Y.~Kantaros, and M.~M. Zavlanos, ``An abstraction-free method for
  multirobot temporal logic optimal control synthesis,'' \emph{IEEE
  Transactions on Robotics}, 2021.

\bibitem{srinivasan2020control}
M.~Srinivasan and S.~Coogan, ``Control of mobile robots using barrier functions
  under temporal logic specifications,'' \emph{IEEE Transactions on Robotics},
  vol.~37, no.~2, pp. 363--374, 2020.

\bibitem{schillinger2019hierarchical}
P.~Schillinger, M.~B{\"u}rger, and D.~V. Dimarogonas, ``Hierarchical ltl-task
  mdps for multi-agent coordination through auctioning and learning,''
  \emph{The international journal of robotics research}, 2019.

\bibitem{jagtap2020formal}
P.~Jagtap, S.~Soudjani, and M.~Zamani, ``Formal synthesis of stochastic systems
  via control barrier certificates,'' \emph{IEEE Transactions on Automatic
  Control}, vol.~66, no.~7, pp. 3097--3110, 2020.

\bibitem{Paulsen2016}
V.~I. Paulsen and M.~Raghupathi, \emph{An introduction to the theory of
  reproducing kernel Hilbert spaces}.\hskip 1em plus 0.5em minus 0.4em\relax
  Cambridge university press, 2016, vol. 152.

\bibitem{Thrun200}
T.~Sebastian, B.~Wolfram, and D.~Fox, ``Probabilistic robotics,''
  \emph{Communications of the ACM}, vol.~45, no.~3, pp. 52--57, 2002.

\bibitem{Bacchus1996}
F.~Bacchus, C.~Boutilier, and A.~Grove, ``Rewarding behaviors,'' in
  \emph{National Conference on Artificial Intelligence}, 1996, pp. 1160--1167.

\bibitem{Watkins1992}
C.~J. Watkins and P.~Dayan, ``Q-learning,'' \emph{Mach. Learn.}, vol.~8, no.
  3-4, pp. 279--292, 1992.

\bibitem{Baier2008}
C.~Baier and J.-P. Katoen, \emph{Principles of model checking}.\hskip 1em plus
  0.5em minus 0.4em\relax MIT press, 2008.

\bibitem{kloetzer2008fully}
M.~Kloetzer and C.~Belta, ``A fully automated framework for control of linear
  systems from temporal logic specifications,'' \emph{IEEE Transactions on
  Automatic Control}, vol.~53, no.~1, pp. 287--297, 2008.

\bibitem{Kretinsky2018}
J.~Kret{\'{\i}}nsk{\'{y}}, T.~Meggendorfer, and S.~Sickert, ``Owl: {A} library
  for {\(\omega\)}-words, automata, and {LTL},'' in \emph{Autom. Tech. Verif.
  Anal.}\hskip 1em plus 0.5em minus 0.4em\relax Springer, 2018, pp. 543--550.

\bibitem{barth2018distributed}
G.~Barth-Maron, M.~W. Hoffman, D.~Budden, W.~Dabney, D.~Horgan, D.~Tb,
  A.~Muldal, N.~Heess, and T.~Lillicrap, ``Distributed distributional
  deterministic policy gradients,'' \emph{arXiv preprint arXiv:1804.08617},
  2018.

\bibitem{fujimoto2018addressing}
S.~Fujimoto, H.~Hoof, and D.~Meger, ``Addressing function approximation error
  in actor-critic methods,'' in \emph{International conference on machine
  learning}.\hskip 1em plus 0.5em minus 0.4em\relax PMLR, 2018, pp. 1587--1596.

\bibitem{Ng1999}
A.~Y. Ng, D.~Harada, and S.~Russell, ``Policy invariance under reward
  transformations: Theory and application to reward shaping,'' in \emph{ICML},
  vol.~99, 1999, pp. 278--287.

\bibitem{GPs2006}
C.~Rasmussen and C.~Williams, \emph{Gaussian Processes for Machine Learning},
  ser. Adaptive Computation and Machine Learning.\hskip 1em plus 0.5em minus
  0.4em\relax Cambridge, MA, USA: MIT Press, Jan. 2006.

\bibitem{srinivas2012information}
N.~Srinivas, A.~Krause, S.~M. Kakade, and M.~W. Seeger, ``Information-theoretic
  regret bounds for gaussian process optimization in the bandit setting,''
  \emph{IEEE transactions on information theory}, vol.~58, no.~5, pp.
  3250--3265, 2012.

\bibitem{Nguyen2016}
Q.~Nguyen and K.~Sreenath, ``Exponential control barrier functions for
  enforcing high relative-degree safety-critical constraints,'' in
  \emph{American Control Conference (ACC)}.\hskip 1em plus 0.5em minus
  0.4em\relax IEEE, 2016, pp. 322--328.

\bibitem{Xiao2019}
W.~Xiao and C.~Belta, ``High order control barrier functions,'' \emph{IEEE
  Transactions on Automatic Control}, 2021.

\bibitem{Agrawal2017}
A.~Agrawal and K.~Sreenath, ``Discrete control barrier functions for
  safety-critical control of discrete systems with application to bipedal robot
  navigation.'' in \emph{Robotics: Science and Systems (RSS)}, 2017.

\bibitem{Srinivasan2020}
M.~Srinivasan, A.~Dabholkar, S.~Coogan, and P.~A. Vela, ``Synthesis of control
  barrier functions using a supervised machine learning approach,'' in
  \emph{2020 IEEE/RSJ International Conference on Intelligent Robots and
  Systems (IROS)}.\hskip 1em plus 0.5em minus 0.4em\relax IEEE, 2020, pp.
  7139--7145.

\bibitem{long2021learning}
K.~Long, C.~Qian, J.~Cort{\'e}s, and N.~Atanasov, ``Learning barrier functions
  with memory for robust safe navigation,'' \emph{IEEE Robotics and Automation
  Letters}, vol.~6, no.~3, pp. 4931--4938, 2021.

\bibitem{Mnih2015}
V.~Mnih, K.~Kavukcuoglu, D.~Silver, A.~A. Rusu, J.~Veness, M.~G. Bellemare,
  A.~Graves, M.~Riedmiller, A.~K. Fidjeland, G.~Ostrovski \emph{et~al.},
  ``Human-level control through deep reinforcement learning,'' \emph{Nature},
  vol. 518, no. 7540, pp. 529--533, 2015.

\bibitem{Squyres2009}
S.~W. Squyres, A.~H. Knoll, R.~E. Arvidson, J.~W. Ashley, J.~Bell, W.~M.
  Calvin, P.~R. Christensen, B.~C. Clark, B.~A. Cohen, P.~De~Souza
  \emph{et~al.}, ``Exploration of victoria crater by the mars rover
  opportunity,'' \emph{Science}, vol. 324, no. 5930, pp. 1058--1061, 2009.

\bibitem{Durrett1999}
R.~Durrett and R.~Durrett, \emph{Essentials of stochastic processes}.\hskip 1em
  plus 0.5em minus 0.4em\relax Springer, 1999, vol.~1.

\end{thebibliography}

\end{document}